\newtheorem{theorem}{Theorem}[section]
\newtheorem{definition}[theorem]{Definition}
\newtheorem{lemma}[theorem]{Lemma}
\newtheorem{remark}[theorem]{Remark}
\newtheorem{proposition}[theorem]{Proposition}
\newtheorem{assumption}[theorem]{Assumption}
\newtheorem{example}[theorem]{Example}
\newcommand{\be}{\begin{equation}}
\newcommand{\ee}{\end{equation}}
\newcommand{\bea}{\begin{equation*}\begin{aligned}}
\newcommand{\eea}{\end{aligned}\end{equation*}}
\newcommand{\R}{\mathbb{R}}
\newcommand{\Max}{\max\limits_}
\newcommand{\Min}{\min\limits_}
\newcommand{\Sup}{\sup\limits_}
\newcommand{\Inf}{\inf\limits_}
\newcommand{\Tr}[1]{\Trace \big[ #1 \big]}
\newcommand{\wh}{\widehat}
\newcommand{\mc}{\mathcal}
\newcommand{\mbb}{\mathbb}
\newcommand{\inner}[2]{\big \langle #1, #2 \big \rangle }
\newcommand{\QQ}{\mbb Q}
\newcommand{\D}{\mathds D}
\DeclareMathOperator{\Trace}{Tr}
\DeclareMathOperator{\diag}{diag}
\newcommand{\KL}{\mathrm{KL}}
\newcommand{\W}{\mathds{W}}
\newcommand{\PSD}{\mathbb{S}_{+}} 
\newcommand{\PD}{\mathbb{S}_{++}} 
\newcommand{\Let}{\triangleq}
\newcommand{\opt}{^\star}
\newcommand{\EE}{\mathds{E}}
\newcommand{\half}{\frac{1}{2}}
\newcommand{\dualvar}{\gamma}
\newcommand{\norm}[1]{\left\lVert#1\right\rVert}
\title{Adversarial Regression \\ with Doubly Non-negative Weighting Matrices}
\author{%
  Tam Le\thanks{: equal contribution} \\
  RIKEN AIP\\
  \texttt{tam.le@riken.jp} \\
  \And
  Truyen Nguyen$^{*}$ \\
  Akron University \\
  \texttt{tnguyen@uakron.edu}
  \And
  Makoto Yamada \\
  Kyoto University and RIKEN AIP \\ 
  \texttt{makoto.yamada@riken.jp}
  \And 
  Jose Blanchet \\
  Stanford University \\
  \texttt{jose.blanchet@stanford.edu}
  \And
  Viet Anh Nguyen \\
  Stanford University and VinAI Research \\
  \texttt{v.anhnv81@vinai.io}
  }
\begin{document}

\maketitle

\begin{abstract}
    Many machine learning tasks that involve predicting an output response can be solved by training a weighted regression model. Unfortunately, the predictive power of this type of models may severely deteriorate under low sample sizes or under covariate perturbations. Reweighting the training samples has aroused as an effective mitigation strategy to these problems. In this paper, we propose a novel and coherent scheme for kernel-reweighted regression by reparametrizing the sample weights using a doubly non-negative matrix. When the weighting matrix is confined in an uncertainty set using either the log-determinant divergence or the Bures-Wasserstein distance, we show that the adversarially reweighted estimate can be solved efficiently using first-order methods. Numerical experiments show that our reweighting strategy delivers promising results on numerous datasets.
\end{abstract}

\section{Introduction}
\label{sec:intro}

We are interested in learning a parameter $\beta$ that has a competitive predictive performance on a response variable $Y$. Given $N$ training samples $(\wh z_i, \wh x_i, \wh y_i)_{i=1}^N$ in which $(\wh z_i, \wh x_i)$ are the contexts that possess explanatory power on $\wh y_i$, learning the parameter $\beta$ can be posed as a weighted regression problem of the form
\be \label{eq:erm}
    \Min{\beta}~ \sum_{i=1}^N \omega(\wh z_i) \ell (\beta, \wh x_i, \wh y_i).
\ee
In problem~\eqref{eq:erm}, $\omega$ is a weighting function that indicates the contribution of the sample-specific loss to the objective. 
By aligning the covariate $(\wh z_i, \wh x_i)$ appropriately to the weighting term $\omega(\wh z_i)$ and the loss term~$\ell(\beta, \wh x_i, \wh y_i)$, the generic formulation of problem~\eqref{eq:erm} can be adapted to many popular learning and estimation tasks in machine learning. 
For example, problem~\eqref{eq:erm} encapsulates the family of
kernel smoothers, including the Nadaraya-Watson estimator~\cite{ref:hastie09elements, nadaraya1964estimating, watson1964smooth}. 

\begin{example}[Nadaraya-Watson (NW) estimator for conditional expectation]\label{ex:NW}
Given the samples $(\wh z_i, \wh y_i)_{i=1}^N$, we are interested in estimating the conditional expectation of $Y$ given $Z = z_0$ for some covariate $z_0 \in \mc Z$. The NW estimator is the optimizer of problem~\eqref{eq:erm} with $\ell(\beta, y) = \norm{\beta - y}_2^2$ and the weighting function $\omega$ is given through a kernel $K$  via $\omega(\wh z_i) = K(z_0, \wh z_i)$. The NW estimate of $\EE[Y | Z = z_0]$ admits a closed form expression
\[
    \beta_{\mathrm{NW}} = \frac{\sum_{i=1}^N K(z_0, \wh z_i) \wh y_i}{\sum_{i=1}^N K(z_0, \wh z_i)}.
\]
\end{example}

The NW estimator utilizes a locally constant function to estimate the conditional expectation $\EE[Y | Z = z_0]$. Locally linear regression \cite{atkeson1997locally, ruppert1994multivariate} extends the NW estimator to reduce the noise produced by the linear component of a target function~\cite[\S3.2]{noh2017generative}.

\begin{example}[Locally linear regression (LLR)]\label{ex:LLR} 
For univariate output and $z \equiv x$, the LLR minimizes the kernel-weighted loss with $\ell([\beta_1, \beta_2], z, y) = 
(\beta_1 + \beta_2^\top z - y)^2$. The LLR estimate of $\EE[Y | Z = z_0]$ admits a closed form expression
\[
\beta_{\mathrm{LLR}} =  \Big(\big(\wh Z^{\top} W \wh Z \big)^{-1} \wh Z^{\top}W \wh Y\Big)^{\top} \begin{bmatrix} 1 \\ z_0\end{bmatrix},
\]
with $\wh Y = [\wh y_1, \dotsc, \wh y_n]^{\top} \in \R^n$, ${W = \diag\left(K(z_0, \wh z_1), \dotsc, K(z_0, \wh z_n) \right)} \in \R^{n \times n}$ and
\[
    \wh Z = \begin{bmatrix}
        1 & (\wh z_1 - z_0)^{\top} \\
        \vdots  & \vdots \\
        1 & (\wh z_n - z_0)^{\top}
    \end{bmatrix}.
\]
    
\end{example}

Intuitively, the NW and LLR estimators are special instances of the larger family of local polynomial estimators with order zero and one, respectively. Problem~\eqref{eq:erm} is also the building block for local learning algorithms~\cite{ref:bottou1992local}, density ratio estimation~\cite[pp.152]{ref:berlinet2004reproducing}, risk minimization with covariate shift~\cite[\S4]{ref:hu2018does}, domain adaptation~\cite{sugiyama2007direct}, geographically weighted regression~\cite{ref:brunsdon1998geographically}, local interpretable explanations~\cite{ref:ribeiro2016why}, to name a few.

In all of the aforementioned applications, a prevailing trait is that the weight $\omega$ is given through a kernel. To avoid any confusion in the terminologies, it is instructive to revisit and distinguish the relevant definitions of kernels. The first family is the non-negative kernels, which are popularly employed in nonparametric statistics~\cite{tsybakov2008introduction}.

\begin{definition}[Non-negative kernel]
    A function $K: \mc Z \times \mc Z \to \R$ is non-negative if $K(z, z') \ge 0$ for any $z, z' \in \mc Z$. 
\end{definition}

In addition, there also exists a family of positive definite kernels, which forms the backbone of kernel machine learning~\cite{berg1984harmonic, scholkopf2018learning}.

\begin{definition}[Positive definite kernel]
    A symmetric function $K: \mc Z \times \mc Z \to \R$ is positive definite if for any $n \in \mbb N$ and any choices 
   of $(z_i)_{i=1}^n \in \mc Z$ and $(\alpha_i)_{i=1}^n \in \R$, we have
    \be \label{eq:kernel-def}
        \sum_{i=1}^n\sum_{j=1}^n \alpha_i \alpha_j K(z_i, z_j) \ge 0.
    \ee
   Moreover, $K$ is strictly positive definite if we have in addition that for mutually distinct $(z_i)_{i=1}^n \in \mc Z$, the equality in~\eqref{eq:kernel-def} implies $\alpha_1\!=\ldots\!=\alpha_n\!=0$. 
\end{definition}

Positive definite kernels are a powerful tool to model geographical interactions~\cite{ref:brunsdon1998geographically}, to characterize the covariance structure in Gaussian processes~\cite[\S4]{rasmussen2005GP}, and to construct non-linear kernel methods~\cite{scholkopf2018learning}. Interestingly, the two above-mentioned families of kernels have a significant overlap. Examples of kernels that are both non-negative and strictly positive definite include the Gaussian kernel with bandwidth $h > 0$ defined for any $z$, $z' \in \mc Z$ as
\[
K(z, z') = \exp(-\norm{z - z'}_2^2/h^2),
\] 
the Laplacian kernel, the Cauchy kernel, the Mat\'ern kernel, the rational quadratic kernel, etc.

It is well-known that the  non-parametric statistical estimator obtained by solving~\eqref{eq:erm} is sensitive to the corruptions of the training data~\cite{ref:du2018robust, ref:liu2017density, DBLP:conf/acl/RajpurkarJL18}. Similar phenomenon is also observed in machine learning where the solution of the risk minimization problem~\eqref{eq:erm} is not guaranteed to be robust or generalizable~\cite{ref:alvarez2018on, pmlr-v48-amodei16, ref:ghorbani2019interpretation, NIPS2016_9d268236, ref:kim2012robust,  ref:namkoong2017variance, NIPS2017_82161242, pmlr-v28-zemel13, NIPS2014_792c7b5a}.
The quality of the solution to~\eqref{eq:erm} also deteriorates if the training sample size $N$ is small. Reweighting, obtained by modifying $\omega(\wh z_i)$, is arising as an attractive resolution to improve robustness and enhance the out-of-sample performance in the test data~\cite{ren2018learning, DBLP:conf/nips/ShuXY0ZXM19, wu2016}. At the same time, reweighting schemes have shown to produce many favorable effects: reweighting can increase fairness~\cite{hashimoto2018fairness,  ref:lahoti2020fairness, ref:wang2020robust}, and can also effectively handle covariate shift \cite{duchi2019distributionally, ref:hu2018does, zhang2020coping}.

While reweighting has been successfully applied to the \textit{empirical} risk minimization regime in which the weights are uniformly $1/N$, reweighting the samples when the weighting function $\omega$ is tied to a kernel is not a trivial task. In fact, the kernel captures inherently the \textit{relative} positions of the relevant covariates $\wh z$, and any reweighting scheme should also reflect these relationship in a global viewpoint. Another difficulty also arises due to the lack of convexity or concavity, which prohibits the modifications of the kernel parameters. For example, the mapping $h \mapsto \exp(-\norm{z - z'}_2^2/h^2)$ for the Gaussian kernel is neither convex nor concave if $z \neq z'$. Thus, it is highly challenging to optimize over $h$ in the bandwidth parameter space. Alternatively, modifying the covariates $(\wh z_i)_{i=1}^N$ will also result in reweighting effects. Nevertheless, optimizing over the covariates is intractable for sophisticated kernels such as the Mat\'ern kernel.

\textbf{Contributions.} This paper relies fundamentally on an observation that the Gram matrix of a non-negative, (strictly) positive definite kernel is a non-negative, positive (semi)definite (also known as \textit{doubly non-negative}) matrix. It is thus natural to modify the weights by modifying the corresponding matrix parametrization in an appropriate manner. Our contributions in this paper are two-fold:
\begin{itemize}[leftmargin = 4mm]
    \item We propose a novel scheme for reweighting using a reparametrization of the sample weights as a doubly non-negative matrix. The estimate is characterized as the solution to a min-max optimization problem, in which the admissible values of the weights are obtained through a projection of an uncertainty set from the matrix space.
    
    \item We report in-depth analysis on two reweighting approaches based on the construction of the matrix uncertainty set with the log-determinant divergence and the Bures-Wasserstein distance. Exploiting strong duality, we show that the worst-case loss function and its gradient can be efficiently evaluated by solving the univariate dual problems. Consequently, the adversarially reweighted estimate can be found efficiently using first-order methods.
\end{itemize}

\textbf{Organization of the paper.} Section~\ref{sec:framework} introduces our generic framework of reweighting using doubly non-negative matrices. Sections~\ref{sec:KL} and~\ref{sec:Wass} study two distinctive ways to customize our reweighting framework using the log-determinant divergence and the Bures-Wasserstein distance. Section~\ref{sec:numerical} empirically illustrates that our reweighting strategy delivers promising results in the conditional expectation task based on numerous real life datasets. 

\textbf{Notations.} The identity matrix is denoted by $I$. For any $A \in \R^{p \times p}$, $\Tr{A}$ denotes the trace of $A$, $A \ge 0$ means that all entries of $A$ are nonnegative. 
Let $\mathbb{S}^p$ denote the vector space of $p$-by-$p$ real and symmetric matrices. The set of positive {(semi-)definite} matrices is denoted by $\PD^p$ (respectively, $\PSD^p$). For any $A,B\in\R^{p\times p}$, we use 
$\inner{A}{B} = \Tr{A^\top B}$ 
to denote the Frobenius inner product  between $A$ and $B$, and $\|v\|_2$ to denote the Euclidean norm of $v\in \R^p$.

\section{A Reweighting Framework with Doubly Non-negative Matrices}
\label{sec:framework}

We delineate in this section our reweighting framework using doubly non-negative matrices. This framework relies on the following observation: we can \textit{reparametrize} the weights in~\eqref{eq:erm} into a matrix $\wh \Omega$ and the loss terms in~\eqref{eq:erm} into a matrix $V(\beta)$, and the solution to the estimation problem~\eqref{eq:erm} can be equivalently characterized as the minimizer of the problem
\be  \label{eq:ke}
    \Min{\beta}~\inner{\wh \Omega}{V(\beta)}.
\ee
Notice that there may exist multiple equivalent reparametrizations of the form~\eqref{eq:ke}. However, in this paper, we focus on one specific parametrization where $\wh \Omega$ is the nominal matrix of weights
\[
    \wh \Omega = \begin{bmatrix}
        \wh \Omega_{00} & \wh \Omega_{01} & \cdots & \wh \Omega_{0N} \\
        \wh \Omega_{10} & \wh \Omega_{11} &\cdots & \wh \Omega_{1N} \\
        \vdots  & \vdots &  \ddots & \vdots \\
        \wh \Omega_{N0} & \wh \Omega_{N1} & \cdots & \wh \Omega_{NN}
    \end{bmatrix} \in \mathbb{S}^{N+1}
\]
with the elements being given by the weighting function $\omega$ as $\wh \Omega_{0i} = \wh \Omega_{i0} = \omega(\wh z_i)$ for $i=1,\ldots,N$, and the matrix-valued mapping $V: \beta \mapsto V(\beta) \in \mathbb{S}^{N+1}$ satisfies
\[
    V(\beta)\!=\!\! \begin{bmatrix}
        0 & \!\!\ell(\beta, \wh x_1, \wh y_1) & \cdots & \!\!\ell(\beta, \wh x_N, \wh y_N) \\
        \ell(\beta, \wh x_1, \wh y_1) & 0 &\cdots & 0 \\
        \vdots  & \vdots &  \ddots & \vdots \\
        \ell(\beta, \wh x_N, \wh y_N) & 0 & \cdots & 0
    \end{bmatrix}\!.
\]
A simple calculation reveals that the objective function of~\eqref{eq:ke} is equivalent to that of~\eqref{eq:erm} up to a positive constant factor of 2. As a consequence, their solutions coincide. 

Problem~\eqref{eq:ke} is an overparametrized reformulation of the weighted risk minimization problem~\eqref{eq:erm}. Indeed, the objective function of problem~\eqref{eq:ke} involves an inner product of two symmetric matrices, while problem~\eqref{eq:erm} can be potentially reformulated using an inner product of two vectors. While lifting the problem to the matrix space is not necessarily the most efficient approach, it endows us with more flexibility to perturb the weights in a coherent manner. This flexibility comes from the following two observations: (i) there may exist multiple matrices that can be used as the nominal matrix $\wh \Omega$, and one can potentially choose $\wh \Omega$ to improve the quality of the estimator, (ii) the geometry of the space of positive (semi)definite matrices is richer than the space of vectors.  

To proceed, we need to make the following assumption.
\begin{assumption}[Regularity conditions] \label{a:assumption}
    The following assumptions hold throughout the paper.
    \begin{enumerate}[label=(\roman*), leftmargin=6mm]
        \item \label{a:assumption1} The function $\ell$ is nonnegative, and $\ell( \cdot, x, y)$ is convex, continuously differentiable for any $(x, y)$.
        \item \label{a:assumption2} The nominal weighting matrix $\wh \Omega$ is symmetric positive definite and nonnegative.
    \end{enumerate}
\end{assumption}
In this paper, we propose to find an estimate $\beta\opt$ that solves the following adversarially reweighted estimation problem
\be \label{eq:dro}
    \Min{\beta}~ \Max{\Omega \in \mc U_{\varphi, \rho}(\wh \Omega)}~\inner{\Omega}{V(\beta)} 
\ee
for some set $\mc U_{\varphi, \rho}(\wh \Omega)$ of feasible weighting matrices. The estimate $\beta\opt$ thus minimizes the worst-case loss uniformly over all possible perturbations of the weight $\Omega \in \mc U_{\varphi, \rho}(\wh \Omega)$. In particular, we explore the construction of the uncertainty set $\mc U_{\varphi, \rho}(\wh \Omega)$ that is motivated by the Gram matrix obtained via some non-negative and positive definite kernels. In this way, the weighting matrix can capture more information on the pair-wise relation among training data. Hence, it is reasonable to consider the set $\mc U_{\varphi, \rho}(\wh \Omega)$ of the form
\be \label{eq:U-def}
    \mc U_{\varphi, \rho}(\wh \Omega) \Let \left\{ \Omega \in \PSD^{N+1}: \Omega \ge 0,~\varphi\big( \Omega, \wh \Omega\big) \le \rho \right\}.
\ee
By definition, any $\Omega \in \mc U_{\varphi, \rho}(\wh \Omega)$ is a symmetric, positive semidefinite matrix and all elements of $\Omega$ are nonnegative. 
A matrix with these properties is called \textit{doubly nonnegative}. 
From a high level perspective, the set $\mc U_{\varphi, \rho}(\wh \Omega)$ is defined as a ball of radius $\rho$ centered at the nominal matrix $\wh \Omega$ and this ball is prescribed by a pre-determined measure of dissimilarity $\varphi$. Throughout this paper, we prescribe the uncertainty set $\mc U_{\varphi, \rho}(\wh \Omega)$ using some divergence $\varphi$ on the space of symmetric, positive semidefinite matrices $\PSD^{N+1}$.
\begin{definition}[Divergence] \label{def:divergence}
    For any $N \in \mathbb N$, $\varphi$ is a divergence on the symmetric positive semidefinite matrix space $\PSD^{N+1}$ if it is:
    (i) \textbf{non-negative}: $\varphi(\Omega_1, \Omega_2 ) \ge 0$~for all $\Omega_1,~\Omega_2 \in \PSD^{N+1}$, and
    (ii) \textbf{indiscernable}: if $\varphi(\Omega_1, \Omega_2) = 0$ then $\Omega_1 = \Omega_2$.
\end{definition}

If we denote the adversarially reweighted loss function associated with $\mc U_{\varphi, \rho}(\wh \Omega)$ by
\[
F_{\varphi, \rho}(\beta) \Let \Max{\Omega \in \mc U_{\varphi, \rho}(\wh \Omega)}~\inner{\Omega}{V(\beta)},
\]
then $\beta\opt$ can be equivalently rewritten as
\be
    \beta\opt = \arg\Min{\beta}~ F_{\varphi, \rho}(\beta).
\ee
 A direct consequence is that the function $F_{\varphi, \rho}$ is convex in $\beta$ as long as the loss function $\ell$ satisfies the convex property of Assumption~\ref{a:assumption}\ref{a:assumption1}. Hence, the estimate $\beta\opt$ can be found efficiently using convex optimization provided that the function $F_{\varphi, \rho}$ and its gradient can be efficiently evaluated. Moreover, because $\varphi$ is a divergence, $\mc U_{\varphi, 0}(\wh \Omega) = \{\wh \Omega\}$. Hence by setting $\rho= 0$, we will recover the nominal estimate that solves~\eqref{eq:erm}. 
 In Section~\ref{sec:KL} and~\ref{sec:Wass}, we will subsequently specify two possible choices of $\varphi$ that lead to the desired efficiency in computing $F_{\varphi, \rho}$ as well as its gradient. Further discussion on Assumption~\ref{a:assumption} is relegated to the appendix. We close this section by discussing the robustness effects of our weighting scheme~\eqref{eq:dro} on the conditional expectation estimation problem.

\begin{remark}[Connection to distributionally robust optimization] \label{rem:dro}
    Consider the conditional expectation estimation setting, in which $\EE[Y | Z = z_0]$ is the solution of the minimum mean square error estimation problem
    \[
        \EE[Y | Z = z_0] = \arg \Min{\beta}~\EE[ (\beta - Y)^2 | Z = z_0].
    \]
    In this setting, our reweighting scheme~\eqref{eq:dro} coincides with the following distributionally robust optimization problem
    \[
    \min\limits_{\beta} \max\limits_{\QQ_{Y | Z = z_0} \in \mathcal{B}(\hat{\mathbb{P}}_{Y | Z = z_0} )} \EE_{\QQ_{Y | Z = z_0}} [ (\beta - Y)^{2}],
    \]
    with the nominal conditional distribution defined as $\hat{\mathbb{P}}_{Y | Z = z_0} (\mathrm{d} y)  \propto  \sum_{i=1}^N K(z_0, \hat z_i)  \delta_{\hat y_i}(\mathrm{d} y)$. The ambiguity set $\mathcal{B}(\hat{\mathbb{P}}_{Y | Z = z_0} )$ is a set of conditional probability measures of $Y | Z = z_0$ constructed specifically as
    \[
        \mathcal{B}(\hat{\mathbb{P}}_{Y | Z = z_0} ) = \left\{ \QQ_{Y | Z = z_0}~: \begin{array}{l}
        \exists \Omega \in \mathcal{U}_{\varphi, \rho}(\hat{\Omega}) \text{ so that } \Omega_{0i} = \Omega_{i0} = \omega(\hat z_i) ~~\forall i \\
        \QQ_{Y | Z = z_0}(\mathrm{d}y) \propto  \sum_{i=1}^N \omega(\hat z_i)  \delta_{\hat y_i}(\mathrm{d} y) 
        \end{array}
        \right\}.
    \]
\end{remark}

Remark~\ref{rem:dro} reveals that our reweighting scheme recovers a specific robustification with distributional ambiguity. This robustification relies on using a kernel density estimate to construct the nominal conditional distribution, and the weights of the samples are induced by $\mc U_{\varphi, \rho}(\wh \Omega)$. Hence, our scheme is applicable for the emerging stream of robustifying conditional decisions, see~\cite{ref:esteban2020distributionally, ref:kannan2020residuals, ref:nguyen2020distributionally-1, ref:nguyen2021robustifying}.

\begin{remark}[Choice of the nominal matrix]
    The performance of the estimate may depend on the specific choice of the nominal matrix $\wh \Omega$. However, in this paper, we do not study this dependence in details. When the weights $\omega(\wh z_i)$ are given by a kernel, it is advised to choose $\wh \Omega$ as the Gram matrix.
\end{remark}

\section{Adversarial Reweighting Scheme using the Log-Determinant Divergence}
\label{sec:KL}

We here study the adversarially reweighting scheme when the $\varphi$ is the log-determinant divergence.

\begin{definition}[Log-determinant divergence] \label{def:KL}
    For any positive integer $p \in \mbb N$, the log-determinant divergence from $\Omega_1 \in \PD^p$ to $\Omega_2 \in \PD^p$ amounts to
    \begin{align*}
        \D \big( \Omega_1, \Omega_2 \big) \Let \Tr{\Omega_1 \Omega_2^{-1}} - \log\det (\Omega_1 \Omega_2^{-1}) - p.
    \end{align*}
\end{definition}

The divergence $\D$ is the special instance of the log-determinant $\alpha$-divergence with $\alpha = 1$~\cite{ref:chebbi2012means}. Being a divergence, $\D$ is non-negative and it vanishes to zero if and only if $\Omega_1 = \Omega_2$. It is important to notice that the divergence $\D$ is only well-defined when both $\Omega_1$ and $\Omega_2$ are positive definite. Moreover, $\D$ is non-symmetric and $\D(\Omega_1, \Omega_2) \neq \D(\Omega_2, \Omega_1)$ in general. The divergence $\D$ is also tightly connected to the Kullback-Leibler divergence between two Gaussian distributions, and that $\D(\Omega_1, \Omega_2)\!=\!\KL(\mc N(0,\Omega_1)\!\parallel\!\mc N(0, \Omega_2))$, where $\mc N(0, \Omega)$ is a normal distribution with mean 0 and covariance matrix $\Omega$.

Suppose that $\wh \Omega$ is invertible. Define the uncertainty set 
\begin{align*}
    \mc U_{\D, \rho}(\wh \Omega) &= \{ \Omega \in \PSD^{N+1} : \Omega \ge 0,~ \D( \Omega, \wh \Omega ) \le \rho \}. 
\end{align*}
For any positive definite matrix $\wh \Omega$, the function $\D(\cdot, \wh \Omega)$ is convex, thus the set $\mc U_{\D, \rho}(\wh \Omega)$ is also convex. For this section, we examine the following optimal value function
\be \label{eq:F-KL-def}
    F_{\D, \rho}(\beta) = \Max{\Omega \in \mc U_{\D, \rho}(\wh \Omega)}~\inner{\Omega}{V(\beta)},
\ee
which corresponds to the worst-case reweighted loss using the divergence $\D$. The maximization problem~\eqref{eq:F-KL-def} constitutes a nonlinear, convex semidefinite program. Leveraging a strong duality argument, the next theorem asserts that the complexity of evaluating $F_{\D, \rho}(\beta)$ is equivalent to the complexity of solving a univariate convex optimization problem.

\begin{theorem}[Primal representation] \label{thm:F-KL-primal}
    For any $\wh \Omega \in \PD^{N+1}$ and $\rho \in (0,+\infty)$, the function $F_{\D, \rho}$ is convex. Moreover, for any $\beta$ such that $V(\beta) \neq 0$, let $\dualvar\opt$ be the unique solution of the convex univariate optimization problem
    \be \label{eq:F-KL}
    \Inf{\gamma I \succ \wh \Omega^{\half} V(\beta) \wh \Omega^{\half}}~\gamma \rho - \gamma \log\det( I - \gamma^{-1} \wh \Omega^\half V(\beta) \wh \Omega^\half),
    \ee 
    then
    $F_{\D, \rho}(\beta) = \inner{\Omega\opt}{V(\beta)}$, where $\Omega\opt = \wh \Omega^\half [I - (\dualvar\opt)^{-1} \wh \Omega^\half V(\beta) \wh \Omega^\half]^{-1} \wh \Omega^\half$.
    Moreover, the symmetric matrix $\Omega\opt$ is unique and doubly nonnegative.
\end{theorem}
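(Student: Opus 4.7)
The plan is threefold: (i) derive convexity of $F_{\D,\rho}$ from the standard sup-of-convex-functions argument, (ii) use Lagrangian duality to collapse the inner semidefinite program to the one-dimensional dual~\eqref{eq:F-KL}, and (iii) verify a posteriori that the resulting closed-form optimizer $\Omega\opt$ is doubly nonnegative, so that the entrywise constraint $\Omega\geq 0$ is never binding in the relaxation.

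First, since $V(\beta)$ is supported only in its first row and column, for any $\Omega\in\mc U_{\D,\rho}(\wh\Omega)$ we have $\inner{\Omega}{V(\beta)}=2\sum_{i=1}^{N}\Omega_{0i}\,\ell(\beta,\wh x_i,\wh y_i)$, a nonnegative combination of convex losses by Assumption~\ref{a:assumption}\ref{a:assumption1}; taking the pointwise supremum over $\Omega$ preserves convexity, yielding (i). For (ii) I would drop the constraint $\Omega\geq 0$ and note that Slater's condition is immediate since $\wh\Omega$ is strictly feasible ($\D(\wh\Omega,\wh\Omega)=0<\rho$). Setting the gradient of the Lagrangian $L(\Omega,\gamma)=\inner{\Omega}{V(\beta)}-\gamma(\D(\Omega,\wh\Omega)-\rho)$ to zero produces the first-order condition $\Omega^{-1}=\wh\Omega^{-1}-\gamma^{-1}V(\beta)$, which has a positive definite solution iff $\gamma I\succ\wh\Omega^{\half}V(\beta)\wh\Omega^{\half}$. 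Substituting back and simplifying via $\log\det(\Omega\wh\Omega^{-1})=-\log\det(I-\gamma^{-1}\wh\Omega^{\half}V(\beta)\wh\Omega^{\half})$ produces the dual objective in~\eqref{eq:F-KL}. Convexity of this dual (as the perspective of the convex map $X\mapsto-\log\det(I-X)$), coercivity at both ends of the feasible interval for $\gamma$, and strict convexity when $V(\beta)\neq 0$ (readable from the spectral decomposition of $M:=\wh\Omega^{\half}V(\beta)\wh\Omega^{\half}$) together give existence and uniqueness of $\gamma\opt$.

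The hard part will be step (iii): verifying entrywise nonnegativity of $\Omega\opt=\wh\Omega^{\half}[I-(\gamma\opt)^{-1}M]^{-1}\wh\Omega^{\half}$. My plan is to exploit the rank-two structure $V(\beta)=e_0 v^\top+v e_0^\top$ with $v=(0,\ell(\beta,\wh x_1,\wh y_1),\dotsc,\ell(\beta,\wh x_N,\wh y_N))^\top$. Writing $p:=\wh\Omega e_0$, $q:=\wh\Omega v$, $c:=e_0^\top\wh\Omega e_0$, $d:=e_0^\top\wh\Omega v$, $e:=v^\top\wh\Omega v$, the Sherman-Morrison-Woodbury identity applied to $\Omega\opt=(\wh\Omega^{-1}-(\gamma\opt)^{-1}V(\beta))^{-1}$ produces
\begin{equation*}
\Omega\opt=\wh\Omega+\frac{1}{(\gamma\opt-d)^2-ce}\bigl[e\,pp^\top+(\gamma\opt-d)(pq^\top+qp^\top)+c\,qq^\top\bigr].
\end{equation*}
A short calculation shows the nonzero eigenvalues of $M$ are $d\pm\sqrt{ce}$, so the feasibility condition $\gamma\opt I\succ M$ is equivalent to $\gamma\opt-d>\sqrt{ce}$. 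This simultaneously makes the scalar prefactor and the middle coefficient positive; combined with $p,q\geq 0$ entrywise (from $\wh\Omega\geq 0$ and $v\geq 0$) and $c,e>0$ (from $\wh\Omega\succ 0$), the bracketed matrix is entrywise nonnegative. Hence $\Omega\opt\geq\wh\Omega\geq 0$ entrywise, establishing the doubly nonnegative property, and uniqueness of $\Omega\opt$ follows from the explicit formula and the uniqueness of $\gamma\opt$.
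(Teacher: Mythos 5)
Your proposal is correct, and steps (i)--(ii) follow essentially the same route as the paper: relax the entrywise constraint $\Omega \ge 0$, invoke Slater/strong duality for the relaxed set, solve the inner maximization in closed form via the stationarity condition $\Omega^{-1} = \wh\Omega^{-1} - \gamma^{-1}V(\beta)$, and then close the sandwich by showing a posteriori that the resulting $\Omega\opt$ is entrywise nonnegative. (Two small points you gloss over: you should note that the inner supremum is $+\infty$ whenever $\gamma \wh\Omega^{-1} - V(\beta) \not\succ 0$ or $\gamma = 0$, which is what legitimizes restricting the dual to $\gamma I \succ \wh\Omega^{\half}V(\beta)\wh\Omega^{\half}$; and uniqueness of $\Omega\opt$ should be attributed to the strict concavity of $\Omega \mapsto \log\det\Omega$ in the inner Lagrangian maximization, not merely to the explicit formula.) Where you genuinely diverge is step (iii). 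The paper proves entrywise nonnegativity abstractly: it shows via Gelfand's formula that the Neumann series $\sum_k A^k = (I-A)^{-1}$ converges for a nonnegative matrix $A$ of spectral radius less than one, and applies this to $[I - (\dualvar\opt)^{-1}\wh\Omega^{\half}V(\beta)\wh\Omega^{\half}]^{-1}$. Your rank-two Sherman--Morrison--Woodbury computation is more concrete: it gives the explicit update $\Omega\opt = \wh\Omega + [(\gamma\opt-d)^2-ce]^{-1}[e\,pp^\top + (\gamma\opt-d)(pq^\top+qp^\top)+c\,qq^\top]$, whose entrywise nonnegativity is immediate from $\gamma\opt - d > \sqrt{ce}$, and it even yields the stronger conclusion $\Omega\opt \ge \wh\Omega$ entrywise. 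It also sidesteps a subtlety in the Neumann-series route as written in the paper: that argument needs the matrix inside $(I-\cdot)^{-1}$ to be entrywise nonnegative, yet $\wh\Omega^{\half}$ need not be entrywise nonnegative even when $\wh\Omega$ is doubly nonnegative (the fix is to work with the similar factorization $\Omega\opt = (I - (\dualvar\opt)^{-1}\wh\Omega V(\beta))^{-1}\wh\Omega$, where $\wh\Omega V(\beta)$ is a product of nonnegative matrices). The trade-off is that the paper's lemma is dimension- and rank-agnostic, whereas your calculation is tied to the arrowhead structure of $V(\beta)$; for this theorem that structure is guaranteed by Lemma~\ref{lemma:Vbeta}, so your argument is a legitimate and arguably more transparent alternative.
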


Notice that the condition $V(\beta) \neq 0$ is not restrictive: if $V(\beta) = 0$, then Assumption~\ref{a:assumption}\ref{a:assumption1} implies that the incumbent solution $\beta$ incurs zero loss with $\ell(\beta, \wh x_i, \wh y_i) = 0$ for all $i$. In this case, $\beta$ is optimal and reweighting will produce no effect whatsoever. 
Intuitively, the infimum problem~\eqref{eq:F-KL} is the dual counterpart of the supremum problem~\eqref{eq:F-KL-def}.
The objective function of~\eqref{eq:F-KL} is convex in the dual variable $\gamma$, and thus problem~\eqref{eq:F-KL} can be efficiently solved using a gradient descent algorithm. 

The gradient of $F_{\D, \rho}$ is also easy to compute, as asserted in the following lemma.

\begin{lemma}[Gradient of $F_{\D, \rho}$] \label{lemma:grad-F_D}
    The function $F_{\D, \rho}$ is continuously differentiable at $\beta$ with
    \[
        \nabla_{\beta} F_{\D,\rho}(\beta) = 2\sum_{i=1}^N \Omega_{0i}\opt \nabla_\beta \ell( \beta, \wh x_i, \wh y_i),
    \]
    where $\Omega\opt$ is defined as in Theorem~\ref{thm:F-KL-primal} using the parametrization
    \be \label{eq:Omega-opt}
    \Omega\opt = \begin{bmatrix}
        \Omega_{00}\opt & \Omega_{01}\opt & \cdots & \Omega_{0N}\opt \\
        \Omega_{10}\opt & \Omega_{11}\opt &\cdots & \Omega_{1N}\opt \\
        \vdots  & \vdots &  \ddots & \vdots \\
        \Omega_{N0}\opt & \Omega_{N1}\opt & \cdots & \Omega_{NN}\opt
    \end{bmatrix}.
    \ee
\end{lemma}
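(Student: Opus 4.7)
The plan is to derive the gradient formula by invoking Danskin's envelope theorem for the parametric maximization defining $F_{\D,\rho}$. Writing $F_{\D,\rho}(\beta) = \max_{\Omega \in \mc U_{\D,\rho}(\wh \Omega)} g(\Omega,\beta)$ with $g(\Omega,\beta) \Let \inner{\Omega}{V(\beta)}$, the envelope principle yields $\nabla_\beta F_{\D,\rho}(\beta) = \nabla_\beta g(\Omega,\beta)\big|_{\Omega=\Omega\opt(\beta)}$ provided that (a) the feasible set is compact, (b) $g$ is jointly continuous and continuously differentiable in $\beta$, and (c) the inner maximizer $\Omega\opt(\beta)$ is unique and continuous in $\beta$. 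Uniqueness at every $\beta$ with $V(\beta)\neq 0$ is precisely the content of Theorem~\ref{thm:F-KL-primal}, and the differentiability in (b) follows directly from the bilinearity of the Frobenius pairing combined with Assumption~\ref{a:assumption}\ref{a:assumption1}.

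For the remaining conditions, I would verify that $\mc U_{\D,\rho}(\wh \Omega)$ is a compact subset of $\mathbb{S}^{N+1}$: it is closed because $\D(\cdot,\wh\Omega)$ is continuous on $\PD^{N+1}$ and blows up along any sequence approaching the boundary of $\PSD^{N+1}$, and it is bounded because the constraint $\Tr{\Omega\wh\Omega^{-1}} - \log\det(\Omega\wh\Omega^{-1}) \le \rho + (N+1)$ is incompatible with $\|\Omega\| \to \infty$. Continuity of the argmax map $\beta \mapsto \Omega\opt(\beta)$ at any $\beta$ with $V(\beta)\neq 0$ then follows from Berge's maximum theorem applied to a jointly continuous objective over the fixed compact feasible set, leveraging the single-valuedness granted by Theorem~\ref{thm:F-KL-primal}.

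Once Danskin's hypotheses are in place, the sparsity of $V(\beta)$ finishes the proof: since $V(\beta)$ has zeros everywhere except in its zeroth row and column, where it carries entries $\ell(\beta, \wh x_i, \wh y_i)$, one has $\inner{\Omega\opt}{V(\beta)} = 2\sum_{i=1}^N \Omega\opt_{0i}\,\ell(\beta, \wh x_i, \wh y_i)$, and differentiating in $\beta$ with $\Omega\opt$ frozen produces the announced formula; continuity of the gradient is inherited from the continuity of $\Omega\opt(\cdot)$ and of each $\nabla_\beta \ell(\cdot,\wh x_i,\wh y_i)$. The main obstacle will be the compactness argument together with the Berge continuity step, as both lean on the specific coercivity of $\D(\cdot,\wh\Omega)$; the boundary case $V(\beta)=0$ is degenerate and can be dispatched separately by noting that the nonnegativity, convexity, and differentiability of each $\ell(\cdot,\wh x_i,\wh y_i)$ force $\nabla_\beta \ell(\beta,\wh x_i,\wh y_i)=0$ for every $i$, so that both sides of the claimed identity vanish simultaneously.
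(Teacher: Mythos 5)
Your proposal is correct and follows essentially the same route as the paper: Danskin's theorem applied to the bilinear objective $\inner{\Omega}{V(\beta)}$ over the compact set $\mc U_{\D,\rho}(\wh\Omega)$, with the uniqueness of $\Omega\opt$ from Theorem~\ref{thm:F-KL-primal} collapsing the subdifferential to a singleton and the sparsity of $V(\beta)$ yielding the explicit formula. The extra care you take with compactness, the Berge continuity of the argmax map, and the degenerate case $V(\beta)=0$ only fills in details the paper leaves implicit.
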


The proof of Lemma~\ref{lemma:grad-F_D} exploits Danskin's theorem and the fact that $\Omega\opt$ is unique in~Theorem~\ref{thm:F-KL-primal}. Minimizing $F_{\D, \rho}$ is now achievable by applying state-of-the-art first-order methods.

\textbf{Sketch of Proof of Theorem~\ref{thm:F-KL-primal}.}~The difficulty in deriving the dual formulation~\eqref{eq:F-KL} lies in the non-negativity constraint $\Omega \ge 0$. In fact, this constraint imposes $(N+1)(N+2)/2$ individual component-wise constraints, and as such, simply dualizing problem~\eqref{eq:F-KL-def} using a Lagrangian multiplier will entail a large number of auxiliary variables. To overcome this difficulty, we consider the relaxed set
$\mc V_{\D, \rho}(\wh \Omega) \Let \{ \Omega \in \PSD^{N+1} : ~\D( \Omega, \wh \Omega ) \le \rho \}$. By definition, we have $\mc U_{\D, \rho}(\wh \Omega) \subseteq \mc V_{\D, \rho}(\wh \Omega)$, and $\mc V_{\D, \rho}(\wh \Omega)$ omits the nonnegativity requirement $\Omega \ge 0$. The set $\mc V_{\D, \rho}(\wh \Omega)$ is also more amenable to optimization thanks to the following proposition.
\begin{proposition}[Properties of $\mc V_{\D, \rho}(\wh \Omega)$] \label{prop:V-D}
    For any $\wh \Omega \in \PD^{N+1}$ and $\rho \ge 0$, 
    the set $\mc V_{\D, \rho}(\wh \Omega)$ is convex and compact. Moreover, the support function of $\mc V_{\D, \rho}(\wh \Omega)$ satisfies
    \begin{align*}
        \delta_{\mc V_{\D, \rho}(\wh \Omega)}^*(T) &\Let \Sup{\Omega \in \mc V_{\D, \rho}(\wh \Omega)}~\Tr{\Omega T} = \Inf{\substack{\gamma > 0 \\ \gamma \wh \Omega^{-1} \succ T }}~ \gamma \rho - \gamma \log \det (I - \wh \Omega^\half T \wh \Omega^\half /\gamma)
    \end{align*}
    for any symmetric matrix $T \in \mbb S^{N+1}$.
\end{proposition}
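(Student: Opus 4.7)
The plan is to establish the three assertions in sequence: convexity, compactness, and the dual representation of the support function. For convexity, I would observe that on $\PD^{N+1}$ the map $\Omega \mapsto \D(\Omega,\wh\Omega) = \Tr{\Omega\wh\Omega^{-1}} - \log\det\Omega + \log\det\wh\Omega - (N+1)$ is the sum of a linear functional and the convex function $-\log\det$. Its sublevel set is therefore convex, and intersecting with the convex cone $\PSD^{N+1}$ preserves convexity.

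For compactness, I would change variables to $A = \wh\Omega^{-1/2}\Omega\wh\Omega^{-1/2} \in \PD^{N+1}$, under which the divergence reads $\D(\Omega,\wh\Omega) = \sum_{i=1}^{N+1}\bigl(\mu_i - \log \mu_i - 1\bigr)$ with $\mu_i > 0$ the eigenvalues of $A$. Each summand is non-negative and coercive as $\mu_i \downarrow 0$ or $\mu_i \to \infty$, so the constraint $\D(\Omega,\wh\Omega) \le \rho$ confines every $\mu_i$ to a compact subset of $(0,\infty)$. This yields uniform two-sided spectral bounds on $A$, hence on $\Omega = \wh\Omega^{1/2}A\wh\Omega^{1/2}$, and in particular keeps $\Omega$ inside a compact subset of $\PD^{N+1}$, away from the boundary of $\PSD^{N+1}$; closedness then follows from continuity of $\D(\cdot,\wh\Omega)$ on $\PD^{N+1}$.

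For the support function, I would apply Lagrangian duality. The primal problem is a concave (linear) maximization over a compact convex set, and for $\rho > 0$ Slater's condition is satisfied by the strictly feasible point $\Omega = \wh\Omega$, so strong duality holds. Collecting terms in the Lagrangian yields
\[
\mc L(\Omega,\gamma) = \Tr{\Omega (T - \gamma \wh\Omega^{-1})} + \gamma\log\det\Omega + \gamma\rho + c(\gamma),
\]
with $c(\gamma) = -\gamma\log\det\wh\Omega + \gamma(N+1)$. Setting $\nabla_\Omega \mc L = 0$ on $\PD^{N+1}$ gives the closed-form maximizer $\Omega_\gamma = (\wh\Omega^{-1} - T/\gamma)^{-1}$, which is well-defined precisely when $\gamma \wh\Omega^{-1} \succ T$, equivalently $\gamma I \succ \wh\Omega^{1/2} T \wh\Omega^{1/2}$. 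Substituting $\Omega_\gamma$ back causes $\Tr{\Omega_\gamma(T - \gamma\wh\Omega^{-1})} = -\gamma(N+1)$ to cancel part of $c(\gamma)$, and the factorization $\wh\Omega^{-1} - T/\gamma = \wh\Omega^{-1}(I - \wh\Omega T/\gamma)$ absorbs the remaining $\log\det\wh\Omega$ term, leaving $g(\gamma) = \gamma\rho - \gamma\log\det(I - \wh\Omega T/\gamma)$. The identity $\det(I - AB) = \det(I - BA)$ then symmetrizes this into the advertised form $\gamma\rho - \gamma\log\det(I - \wh\Omega^{1/2}T\wh\Omega^{1/2}/\gamma)$. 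The degenerate case $\rho = 0$ reduces, by indiscernibility of $\D$, to $\mc V_{\D,0}(\wh\Omega) = \{\wh\Omega\}$ with support function $\Tr{\wh\Omega T}$, which coincides with the dual formula in the limit $\gamma \to \infty$ via the expansion $-\gamma\log\det(I - X/\gamma) = \Tr{X} + O(\gamma^{-1})$.

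The main obstacle will be the strong-duality step and its boundary behavior in $\gamma$. I need to rule out the dual being attained only in the limit $\gamma \downarrow \lambda_{\max}(\wh\Omega^{1/2}T\wh\Omega^{1/2})$, where the $\log\det$ term blows up, and to show that the infimum is finite and attained whenever the primal is. This will follow from the coercivity of $g(\gamma)$ (its linear-in-$\gamma$ growth at infinity dominates the $O(\log\gamma)$ correction) together with continuity on the open feasible interval, together yielding existence of a finite minimizer and hence the desired equality of primal supremum and dual infimum.
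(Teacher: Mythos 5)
Your proposal is correct and takes essentially the same route as the paper's proof: the identical spectral argument for compactness via the eigenvalues of $\wh\Omega^{-1/2}\Omega\wh\Omega^{-1/2}$, and the identical Lagrangian duality with Slater point $\wh\Omega$, closed-form inner maximizer $(\wh\Omega^{-1}-T/\gamma)^{-1}$, and the same limiting treatment of the $\rho=0$ and $T=0$ cases. The one step the paper spells out more carefully than you do is the exclusion of $\gamma=0$ as a dual optimizer (via an auxiliary lemma on $\sup_{\Omega\succ 0}\Tr{T\Omega}$), which is precisely the boundary issue you flag at the end and which your coercivity argument does resolve.
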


Moreover, we need the following lemma which asserts some useful properties of the matrix $V(\beta)$.
\begin{lemma}[Properties of $V(\beta)$] \label{lemma:Vbeta}
    For any $\beta$, the matrix $V(\beta)$ is symmetric, nonnegative, and it has only two non-zero eigenvalues of value 
    $\pm \sqrt{\sum_{i=1}^N \ell(\beta, \wh x_i, \wh y_i)^2}$.
\end{lemma}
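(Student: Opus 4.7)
The symmetry and entrywise nonnegativity claims are immediate from inspection: the definition of $V(\beta)$ places $\ell(\beta,\wh x_i,\wh y_i)$ symmetrically in positions $(0,i)$ and $(i,0)$ and zeros elsewhere, so $V(\beta)^\top = V(\beta)$; and by Assumption~\ref{a:assumption}\ref{a:assumption1} each entry $\ell(\beta,\wh x_i,\wh y_i)\ge 0$. I would dispatch these in a single line.

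The substantive part is the eigenvalue computation. My plan is to exhibit $V(\beta)$ as a bordered matrix and read off its spectrum from a rank-two representation. Concretely, letting $u = (\ell(\beta,\wh x_1,\wh y_1),\ldots,\ell(\beta,\wh x_N,\wh y_N))^\top\in\R^N$, I will write
\[
V(\beta) = \begin{bmatrix} 0 & u^\top \\ u & 0_{N\times N}\end{bmatrix} = e_0 \tilde u^\top + \tilde u e_0^\top,
\]
where $e_0 = (1,0,\ldots,0)^\top\in\R^{N+1}$ and $\tilde u = (0,u^\top)^\top\in\R^{N+1}$. This display makes it visible that $V(\beta)$ has rank at most $2$, hence at most two nonzero eigenvalues, call them $\lambda_1,\lambda_2$.

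The values of $\lambda_1,\lambda_2$ will then follow from two elementary invariants. Since the diagonal of $V(\beta)$ is zero, $\lambda_1+\lambda_2 = \Tr{V(\beta)} = 0$. Since the squared Frobenius norm equals the sum of squared eigenvalues (using symmetry),
\[
\lambda_1^2 + \lambda_2^2 \;=\; \|V(\beta)\|_F^2 \;=\; 2\sum_{i=1}^N \ell(\beta,\wh x_i,\wh y_i)^2.
\]
The two constraints $\lambda_1=-\lambda_2$ and $\lambda_1^2+\lambda_2^2 = 2\|u\|_2^2$ force $\lambda_1 = -\lambda_2 = \pm\sqrt{\sum_{i=1}^N \ell(\beta,\wh x_i,\wh y_i)^2}$, which is the claim. (As a sanity check one could alternatively plug the ansatz $x = (x_0, (x_0/\lambda)u)$ into $V(\beta)x = \lambda x$ and recover $\lambda^2 = \|u\|_2^2$ directly, but the trace/Frobenius route is cleaner.)

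I do not anticipate a genuine obstacle here; the only subtlety is to note that if $V(\beta)=0$, i.e.\ $u=0$, then both nonzero eigenvalues collapse to $0$ and the statement still holds vacuously, so no case split is needed. The whole argument is just a few lines.
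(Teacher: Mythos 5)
Your proof is correct, but it follows a genuinely different route from the paper's. The paper computes the characteristic polynomial directly: by the determinant formula for an arrowhead matrix, $\det(V(\beta)-\lambda I)=0$ reduces to $\lambda^{N-1}\bigl[\lambda^2-\sum_{i=1}^N \ell(\beta,\wh x_i,\wh y_i)^2\bigr]=0$, which yields the spectrum in one stroke (and, as a bonus, exhibits the zero eigenvalue with multiplicity exactly $N-1$). You instead write $V(\beta)=e_0\tilde u^\top+\tilde u e_0^\top$ to bound the rank by two, and then pin down the two possibly nonzero eigenvalues from the invariants $\Tr{V(\beta)}=0$ and $\lambda_1^2+\lambda_2^2=\|V(\beta)\|_F^2=2\|u\|_2^2$, using that for a symmetric matrix the number of nonzero eigenvalues equals the rank and the squared Frobenius norm equals the sum of squared eigenvalues. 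Both arguments are complete; yours has the advantage of not invoking the arrowhead determinant formula as a black box and of being manifestly basis-free, while the paper's gives the full characteristic polynomial explicitly, which is occasionally convenient downstream. Your closing remark about the degenerate case $u=0$ is a sensible precaution but, as you note, requires no separate treatment.
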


The proof of Theorem~\ref{thm:F-KL-primal} proceeds by first constructing a tight upper bound for $F_{\D, \rho}(\beta)$ as
\begin{align}
    F_{\D, \rho}(\beta)\!&\le \Max{\Omega \in \mc V_{\D, \rho}(\wh \Omega)}~\inner{\Omega}{V(\beta)}  = \Inf{\gamma \wh \Omega^{-1} \succ V(\beta) }~\gamma \rho\!-\!\gamma \log\det( I\!-\! \frac{1}{\gamma} \wh \Omega^\half V(\beta) \wh \Omega^\half), \label{eq:relation-2}
\end{align}
where the inequality in~\eqref{eq:relation-2} follows from the fact that $\mc U_{\D, \rho}(\wh \Omega) \subseteq \mc V_{\D, \rho}(\wh \Omega)$, and the equality follows from Proposition~\ref{prop:V-D}. Notice that $V(\beta)$ has one nonnegative eigenvalue by virtue of Lemma~\ref{lemma:Vbeta} and thus the constraint $\gamma \wh \Omega^{-1} \succ V(\beta)$ already implies the condition $\gamma > 0$. Next, we argue that the optimizer $\Omega\opt$ of problem~\eqref{eq:relation-2} can be constructed from the optimizer $\gamma\opt$ of the infimum problem via
\[
    \Omega\opt = \wh\Omega^\half [I - (\dualvar\opt)^{-1} \wh \Omega^\half V(\beta) \wh \Omega^\half]^{-1} \wh \Omega^\half.
\]
The last step involves proving that $\Omega\opt$ is a nonnegative matrix, and hence $\Omega\opt \in \mc U_{\D, \rho}(\wh \Omega)$. As a consequence, the inequality~\eqref{eq:relation-2} holds as an equality, which leads to the postulated result. The proof is relegated to the Appendix.

\section{Adversarial Reweighting Scheme using the Bures-Wasserstein Type Divergence}
\label{sec:Wass}

In this section, we explore the construction of the set of possible weighting matrices using the Bures-Wasserstein distance on the space of positive semidefinite matrices. 

\begin{definition}[Bures-Wasserstein divergence] For any positive integer $p \in \mbb N$, the Bures-Wasserstein divergence between $\Omega_1 \in \PSD^p$ and $\Omega_2 \in \PSD^p$ amounts to
	\[
	\W \big(\Omega_1, \Omega_2 \big) \Let  \Tr{\Omega_1 + \Omega_2 - 2 \big( \Omega_2^{\half} \Omega_1 \Omega_2^{\half} \big)^{\half} } .
	\]
\end{definition}
For any positive semidefinite matrices~$\Omega_1$ and $\Omega_2$, the value $\W(\Omega_1, \Omega_2)$ is equal to the 
square of the type-2 Wasserstein distance between two Gaussian distributions $\mc N(0, \Omega_1)$ and $\mc N(0, \Omega_2)$~\cite{ref:givens1984class}. As a consequence, $\W$ is a divergence: it is non-negative and indiscernable. However, $\W$ is not a proper distance because it may violate the triangle inequality. Compared to the divergence $\D$ studied in Section~\ref{sec:KL}, the divergence $\W$ has several advantages as it is symmetric and is well-defined for all positive \textit{semi}definite matrices. This divergence has also been of interest in quantum information, statistics, 
and the theory of optimal transport.

Given the nominal weighting matrix $\wh \Omega$, we define the set of possible weighting matrices using the Bures-Wasserstein divergence $\W$ as
\begin{align*}
    \mc U_{\W, \rho}(\wh \Omega) &\Let \{ \Omega \in \PSD^{N+1} : \Omega \ge 0,~ \W( \Omega, \wh \Omega ) \le \rho \}. 
\end{align*}
Correspondingly, the worst-case loss function is
\be \label{eq:F-W-def}
    F_{\W, \rho}(\beta) \Let \Max{\Omega \in\mc U_{\W, \rho}(\wh \Omega)}~\inner{\Omega}{V(\beta)}.
\ee

\begin{theorem}[Primal representation] \label{thm:F-W-primal}
    For any $\wh \Omega \in \PSD^{N+1}$ and $\rho \in (0,+\infty)$, the function $F_{\W, \rho}$ is convex. Moreover, for any $\beta$ such that $V(\beta) \neq 0$, let $\dualvar\opt$ be the unique solution of the convex univariate optimization problem
    \be \label{eq:F-W}
    \Inf{\dualvar I \succ V(\beta) }~\dualvar (\rho - \Tr{\wh \Omega}) + \dualvar^2 \inner{(\dualvar I - V(\beta))^{-1}}{\wh \Omega},
    \ee 
    then
    $F_{\W, \rho}(\beta) = \inner{\Omega\opt}{V(\beta)}$, where $\Omega\opt = (\dualvar\opt)^2 [\dualvar\opt I - V(\beta)]^{-1} \wh \Omega [\dualvar\opt I - V(\beta)]^{-1}$.
    Moreover, the symmetric matrix $\Omega\opt$ is unique and doubly nonnegative.
\end{theorem}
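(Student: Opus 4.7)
The plan is to follow the template of Theorem~\ref{thm:F-KL-primal}: relax the nonnegativity constraint $\Omega \ge 0$, compute a Lagrangian dual on the relaxed set, exhibit a closed-form primal optimizer from the stationarity conditions, and then verify \emph{a posteriori} that this optimizer is entrywise nonnegative so the relaxation is tight. Convexity of $F_{\W, \rho}$ is immediate: the sparsity of $V(\beta)$ gives $\inner{\Omega}{V(\beta)} = 2 \sum_{i=1}^N \Omega_{0i}\, \ell(\beta, \wh x_i, \wh y_i)$, a nonnegatively-weighted sum of convex functions by Assumption~\ref{a:assumption}\ref{a:assumption1}, and $F_{\W, \rho}$ is the pointwise supremum of such functions as $\Omega$ ranges over $\mc U_{\W, \rho}(\wh \Omega)$.

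To derive the dual, define the relaxation $\mc V_{\W, \rho}(\wh \Omega) \Let \{\Omega \in \PSD^{N+1} : \W(\Omega, \wh \Omega) \le \rho\}$. Since $\mc U_{\W, \rho}(\wh \Omega) \subseteq \mc V_{\W, \rho}(\wh \Omega)$, we have the upper bound $F_{\W, \rho}(\beta) \le \Sup{\Omega \in \mc V_{\W, \rho}(\wh \Omega)} \inner{\Omega}{V(\beta)}$. The constraint $\W(\Omega, \wh \Omega) \le \rho$ is convex in $\Omega$ and the center $\wh \Omega$ is strictly feasible for $\rho > 0$, so Slater's condition justifies dualizing with a multiplier $\dualvar \ge 0$. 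Using $\W(\Omega, \wh \Omega) = \Tr{\Omega + \wh \Omega} - 2 \Tr{(\wh \Omega^{\half} \Omega \wh \Omega^{\half})^{\half}}$ together with the gradient identity $\nabla_\Omega \Tr{(\wh \Omega^\half \Omega \wh \Omega^\half)^\half} = \tfrac{1}{2} \wh \Omega^\half (\wh \Omega^\half \Omega \wh \Omega^\half)^{-\half} \wh \Omega^\half$, the first-order condition for the inner supremum rearranges to
\[
\Omega = \dualvar^2 (\dualvar I - V(\beta))^{-1} \wh \Omega (\dualvar I - V(\beta))^{-1},
\]
which is well-defined precisely when $\dualvar I \succ V(\beta)$; note that $\dualvar > 0$ follows automatically since $V(\beta)$ has a strictly positive eigenvalue by Lemma~\ref{lemma:Vbeta}. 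Plugging this closed form back into the Lagrangian collapses it, after telescoping the $(\dualvar I - V(\beta))^{-1}$ factors, to the univariate objective displayed in~\eqref{eq:F-W}.

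The main obstacle is showing that the resulting candidate $\Omega\opt$ is entrywise nonnegative, so that $\Omega\opt \in \mc U_{\W, \rho}(\wh \Omega)$ and the relaxation bound is attained. I would exploit the block sparsity from Lemma~\ref{lemma:Vbeta}: $V(\beta)$ has zero diagonal blocks and off-diagonal entries $\ell(\beta, \wh x_i, \wh y_i) \ge 0$. Applying the Schur-complement block inversion to $\dualvar I - V(\beta)$ yields a closed form in which every entry is nonnegative whenever $\dualvar > \bigl(\sum_{i=1}^N \ell(\beta, \wh x_i, \wh y_i)^2\bigr)^{1/2}$; by Lemma~\ref{lemma:Vbeta} this scalar bound is exactly the spectral requirement $\dualvar I \succ V(\beta)$. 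Combined with $\wh \Omega \ge 0$ from Assumption~\ref{a:assumption}\ref{a:assumption2}, the product $(\dualvar\opt I - V(\beta))^{-1} \wh \Omega (\dualvar\opt I - V(\beta))^{-1}$ is entrywise nonnegative, and its positive semidefiniteness is automatic from the symmetry of $(\dualvar\opt I - V(\beta))^{-1}$ together with $\wh \Omega \succeq 0$; hence $\Omega\opt$ is doubly nonnegative.

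Finally, I would argue uniqueness of $\dualvar\opt$ from strict convexity of the dual objective on $\{\dualvar : \dualvar I \succ V(\beta)\}$: diagonalizing $V(\beta) = U \Lambda U^\top$ rewrites the objective as $\dualvar \rho + \sum_i (U^\top \wh \Omega U)_{ii}\bigl[\lambda_i + \tfrac{\lambda_i^2}{\dualvar - \lambda_i}\bigr]$, which is strictly convex because $\wh \Omega \succ 0$ (Assumption~\ref{a:assumption}\ref{a:assumption2}) forces $(U^\top \wh \Omega U)_{ii} > 0$ for every $i$, and $V(\beta) \neq 0$ supplies a nonzero $\lambda_i$ via Lemma~\ref{lemma:Vbeta}. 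This pins down $\Omega\opt$ uniquely, and since $\Omega\opt \in \mc U_{\W, \rho}(\wh \Omega)$ attains the relaxed upper bound, the chain of inequalities collapses to $F_{\W, \rho}(\beta) = \inner{\Omega\opt}{V(\beta)}$, completing the proof.
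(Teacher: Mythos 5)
Your proposal is correct and follows the same overall template as the paper: relax $\mc U_{\W,\rho}(\wh\Omega)$ to $\mc V_{\W,\rho}(\wh\Omega)$, dualize, and verify \emph{a posteriori} that the candidate $\Omega\opt$ is entrywise nonnegative so the relaxation is tight. The differences are in the sub-steps. For the dual representation and uniqueness of $\dualvar\opt$, the paper simply cites an external result (Proposition~A.4 of \cite{ref:nguyen2019bridging}), whereas you re-derive it via the Lagrangian first-order conditions and a diagonalization argument; your derivation checks out (the stationarity condition does rearrange to $\Omega = \dualvar^2(\dualvar I - V(\beta))^{-1}\wh\Omega(\dualvar I - V(\beta))^{-1}$, and the strict-convexity computation is correct), though the gradient identity for $\Tr{(\wh\Omega^{\half}\Omega\wh\Omega^{\half})^{\half}}$ tacitly assumes invertibility of $\wh\Omega^{\half}\Omega\wh\Omega^{\half}$ and your strict-convexity claim uses $\wh\Omega\succ 0$, so strictly speaking your self-contained route covers the positive-definite case of Assumption~\ref{a:assumption}\ref{a:assumption2} rather than the full $\PSD^{N+1}$ generality of the theorem statement. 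For the key nonnegativity step, the paper writes $[\dualvar\opt I - V(\beta)]^{-1} = (\dualvar\opt)^{-1}[I - (\dualvar\opt)^{-1}V(\beta)]^{-1}$ and invokes the Neumann-series fact (Lemma~\ref{inverse-nonnegative}) that an entrywise nonnegative matrix with spectral radius below one has an entrywise nonnegative resolvent; you instead invert the arrowhead matrix explicitly via a Schur complement and check the entries by hand, using Lemma~\ref{lemma:Vbeta} to identify the threshold $\dualvar > \bigl(\sum_i \ell(\beta,\wh x_i,\wh y_i)^2\bigr)^{1/2}$ with the constraint $\dualvar I \succ V(\beta)$. Both arguments are valid; the Neumann-series route is shorter and reusable (the paper applies the same lemma in Theorem~\ref{thm:F-KL-primal}), while your explicit inversion is more concrete and exposes the structure of $\Omega\opt$.
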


Thanks to the uniqueness of $\Omega\opt$ and Danskin's theorem, the gradient of $F_{\W, \rho}$ is now a by-product of Theorem~\ref{thm:F-W-primal}.

\begin{lemma}[Gradient of $F_{\W, \rho}$] \label{lemma:grad-F_W}
    The function $F_{\W, \rho}$ is continuously differentiable at $\beta$ with
    \[
        \nabla_{\beta} F_{\W,\rho}(\beta) = 2\sum_{i=1}^N \Omega_{0i}\opt \nabla_\beta \ell( \beta, \wh x_i, \wh y_i),
    \]
    where $\Omega\opt$ is defined as in Theorem~\ref{thm:F-W-primal} using the similar parametrization~\eqref{eq:Omega-opt}.
\end{lemma}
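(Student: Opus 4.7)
The approach mirrors the template already used for $F_{\D,\rho}$ in the previous section: invoke Danskin's theorem and then exploit the sparsity of $V(\beta)$ to simplify the resulting expression. Concretely, the plan is first to verify that the hypotheses of Danskin's theorem are satisfied for the parametric maximization problem~\eqref{eq:F-W-def}, then to write out $\nabla_\beta \inner{\Omega\opt}{V(\beta)}$ with $\Omega\opt$ held fixed, and finally to read off the claimed formula from the structure of $V(\beta)$.

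For the first step, I would check that the feasible set $\mc U_{\W,\rho}(\wh \Omega)$ is convex and compact. Convexity follows because $\W(\cdot,\wh \Omega)$ is convex on $\PSD^{N+1}$ and because the constraints $\Omega \succeq 0$ and $\Omega \ge 0$ are convex. Compactness uses continuity of $\W(\cdot,\wh \Omega)$ together with a trace bound of the type $\sqrt{\Tr{\Omega}} \le \sqrt{\Tr{\wh \Omega}} + \sqrt{\rho}$, which comes from $\W(\Omega,\wh \Omega) \le \rho$ and is enough to trap $\Omega$ inside a bounded subset of $\PSD^{N+1}$. The integrand $(\beta,\Omega) \mapsto \inner{\Omega}{V(\beta)}$ is jointly continuous and, for each fixed $\Omega$, is continuously differentiable in $\beta$ by Assumption~\ref{a:assumption}\ref{a:assumption1}. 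Uniqueness of the maximizer $\Omega\opt(\beta)$ is provided by Theorem~\ref{thm:F-W-primal}. Danskin's theorem then gives $\nabla_\beta F_{\W,\rho}(\beta) = \nabla_\beta \inner{\Omega\opt}{V(\beta)}$, with $\Omega\opt$ frozen at its optimum, while continuity of $\beta \mapsto \Omega\opt(\beta)$ (from Berge's maximum theorem and uniqueness) upgrades the conclusion to continuous differentiability.

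The second step is the sparsity calculation. The matrix $V(\beta)$ has nonzero entries only on its first row and first column, with $V(\beta)_{0i} = V(\beta)_{i0} = \ell(\beta,\wh x_i,\wh y_i)$ for $i=1,\dots,N$. Hence
\[
\inner{\Omega\opt}{V(\beta)} = \sum_{i,j=0}^N \Omega\opt_{ij}\, V(\beta)_{ji} = 2\sum_{i=1}^N \Omega\opt_{0i}\,\ell(\beta,\wh x_i,\wh y_i),
\]
the factor of $2$ arising from symmetry. Differentiating with $\Omega\opt$ held fixed (as justified above) yields exactly
\[
\nabla_\beta F_{\W,\rho}(\beta) = 2\sum_{i=1}^N \Omega\opt_{0i}\,\nabla_\beta \ell(\beta,\wh x_i,\wh y_i).
\]

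The main obstacle is the verification of compactness of $\mc U_{\W,\rho}(\wh \Omega)$, since unlike the log-determinant ball this must be argued directly from properties of the Bures-Wasserstein divergence rather than from the existing groundwork for $\mc V_{\D,\rho}$. Once compactness and the uniqueness statement of Theorem~\ref{thm:F-W-primal} are in hand, the rest of the argument is a routine transcription of Lemma~\ref{lemma:grad-F_D}.
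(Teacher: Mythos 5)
Your proposal is correct and follows essentially the same route as the paper: Danskin's theorem applied to the compact feasible set, with the uniqueness of the worst-case matrix $\Omega\opt$ from Theorem~\ref{thm:F-W-primal} collapsing the subdifferential to a singleton, and the arrowhead sparsity of $V(\beta)$ producing the factor-of-two formula. The paper simply defers to the argument for Lemma~\ref{lemma:grad-F_D}, so your explicit compactness check for $\mc U_{\W,\rho}(\wh\Omega)$ via the triangle inequality for $\sqrt{\W}$ (i.e.\ the type-2 Wasserstein distance between centered Gaussians) is a welcome, and correct, piece of added detail rather than a deviation.
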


A first-order minimization algorithm can be used to find the robust estimate with respect to the loss function $F_{\W, \rho}$. Notice that problem~\eqref{eq:F-W} is one-dimensional, and either a bisection search or a gradient descent subroutine can be employed to solve~\eqref{eq:F-W} efficiently.

\textbf{Sketch of Proof of Theorem~\ref{thm:F-W-primal}.}~The proof of Theorem~\ref{thm:F-W-primal} follows a similar line of argument as the proof of Theorem~\ref{thm:F-KL-primal}. Consider the relaxed set
$\mc V_{\W, \rho}(\wh \Omega) \Let \{ \Omega \in \PSD^{N+1} : ~\W( \Omega, \wh \Omega ) \le \rho\}$. By definition,  $\mc V_{\W, \rho}(\wh \Omega)$ omits the nonnegativity requirement $\Omega \ge 0$ and thus $\mc U_{\W, \rho}(\wh \Omega) \subseteq \mc V_{\W, \rho}(\wh \Omega)$. The advantage of considering $\mc V_{\W, \rho}(\wh \Omega)$ arises from the fact that the support function of the set $\mc V_{\W, \rho}(\wh \Omega)$ admits a simple form~\cite[Proposition~A.4]{ref:nguyen2019bridging}.
\begin{proposition}[Properties of $\mc V_{\W, \rho}(\wh \Omega)$] \label{prop:V-W}
    For any $\wh \Omega \in \PSD^{N+1}$ and $\rho \ge 0$, 
    the set $\mc V_{\W, \rho}(\wh \Omega)$ is convex and compact. Moreover, the support function of $\mc V_{\W, \rho}(\wh \Omega)$ satisfies
    \begin{align*}
        \delta_{\mc V_{\D, \rho}(\wh \Omega)}^*(T) &\Let \Sup{\Omega \in \mc V_{\D, \rho}(\wh \Omega)}~\Tr{\Omega T} = \Inf{\substack{\gamma > 0 \\ \dualvar I \succ T }}~\dualvar (\rho - \Tr{\wh \Omega}) + \dualvar^2 \inner{(\dualvar I - T)^{-1}}{\wh \Omega}.
    \end{align*}
\end{proposition}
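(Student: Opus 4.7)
Since $\W(\Omega_1,\Omega_2)$ is the squared 2-Wasserstein distance between $\mc N(0,\Omega_1)$ and $\mc N(0,\Omega_2)$ and the optimal Wasserstein coupling of two centered Gaussians is itself Gaussian, restricting the Kantorovich problem to Gaussian couplings yields the semidefinite representation
\[
\W(\Omega_1,\Omega_2)=\Min{B}\Big\{\Tr{\Omega_1}+\Tr{\Omega_2}-2\Tr{B}:\begin{bmatrix}\Omega_1 & B\\ B^\top & \Omega_2\end{bmatrix}\succeq 0\Big\},
\]
which is a partial infimum of a linear functional over a convex set in $(\Omega_1,\Omega_2,B)$ and is therefore jointly convex in $(\Omega_1,\Omega_2)$. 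Consequently $\W(\cdot,\wh\Omega)$ is convex and $\mc V_{\W,\rho}(\wh\Omega)$ is a convex sublevel set. Closedness is immediate from continuity of $\W(\cdot,\wh\Omega)$ on $\PSD^{N+1}$; boundedness follows from the triangle inequality $\sqrt{\Tr{\Omega}}\le\sqrt{\W(\Omega,\wh\Omega)}+\sqrt{\Tr{\wh\Omega}}$ for the 2-Wasserstein distance on centered Gaussians applied with $\mc N(0,0)$ as the third measure, so $\mc V_{\W,\rho}(\wh\Omega)$ is compact.

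\textbf{Strong duality.}
Slater's condition is trivial (take $\Omega=\wh\Omega$, which gives $\W=0<\rho$), so Lagrangian duality applies to the convex semidefinite program defining the support function, yielding
\[
\delta^*_{\mc V_{\W,\rho}(\wh\Omega)}(T)=\Inf{\gamma\ge 0}\Big\{\gamma\rho-\gamma\Tr{\wh\Omega}+\Sup{\Omega\succeq 0}\Big[\Tr{\Omega(T-\gamma I)}+2\gamma\Tr{(\wh\Omega^\half\Omega\wh\Omega^\half)^\half}\Big]\Big\}.
\]
Plugging in $\Omega=tvv^\top$ with $v$ a positive eigenvector of $T-\gamma I$ shows the $O(t)$ linear term dominates the $O(\sqrt t)$ square-root term as $t\to\infty$, so the inner supremum is $+\infty$ unless $\gamma I\succeq T$; a strict-feasibility perturbation then lets us restrict to the open region $\gamma>0$ and $\gamma I\succ T$, on which the inner problem is finite.

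\textbf{Inner supremum and the PSD edge case.}
Under $\gamma I\succ T$ and assuming first $\wh\Omega\succ 0$, the inner objective in $\Omega$ is smooth and strictly concave with gradient
\[
T-\gamma I+\gamma\,\wh\Omega^\half(\wh\Omega^\half\Omega\wh\Omega^\half)^{-\half}\wh\Omega^\half.
\]
Setting this gradient to zero, inverting, squaring, and conjugating by $\wh\Omega^{-\half}$ yields the unique maximizer $\Omega\opt=\gamma^2(\gamma I-T)^{-1}\wh\Omega(\gamma I-T)^{-1}$. Back-substitution collapses the linear term to $-\gamma^2\inner{(\gamma I-T)^{-1}}{\wh\Omega}$ and the square-root term to $2\gamma^2\inner{(\gamma I-T)^{-1}}{\wh\Omega}$, so the inner supremum equals $\gamma^2\inner{(\gamma I-T)^{-1}}{\wh\Omega}$ and the claimed formula follows. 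The merely-PSD case $\wh\Omega\in\PSD^{N+1}\setminus\PD^{N+1}$ is handled by regularizing $\wh\Omega_\eps=\wh\Omega+\eps I$, applying the PD result, and letting $\eps\da 0$; both the primal support function (via continuity of $\W$) and the dual objective (via continuity of $\inner{(\gamma I-T)^{-1}}{\wh\Omega_\eps}$) pass to the limit. The main obstacle is ensuring that the infimum passes through the limit uniformly in $\gamma$: this follows from a Berge-type continuity argument after confining $\gamma$ to a compact subinterval of $\{\gamma:\gamma I\succ T\}$ on which the $\eps$-parameterized dual objectives are equicoercive.
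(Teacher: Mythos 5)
Your derivation is essentially correct, but it is worth noting that the paper does not actually prove Proposition~\ref{prop:V-W}: it simply cites \cite[Proposition~A.4]{ref:nguyen2019bridging} for the support-function formula. Your write-up is therefore a self-contained reconstruction of that outsourced result, and its core is sound: the block-matrix SDP representation of $\W$ gives joint convexity; the triangle inequality against $\delta_0$ gives the trace bound and hence compactness; and the Lagrangian computation --- the $O(t)$ vs.\ $O(\sqrt{t})$ growth argument forcing $\gamma I \succeq T$, the gradient $T-\gamma I+\gamma\,\wh\Omega^\half(\wh\Omega^\half\Omega\wh\Omega^\half)^{-\half}\wh\Omega^\half$ of the concave inner objective, the maximizer $\Omega\opt=\gamma^2(\gamma I-T)^{-1}\wh\Omega(\gamma I-T)^{-1}$, and the back-substitution yielding $\gamma^2\inner{(\gamma I-T)^{-1}}{\wh\Omega}$ --- all check out. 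This is the same route the cited reference takes (Lagrangian duality plus first-order conditions); what your version buys over the paper's is that the reader sees where the dual objective and the closed-form maximizer used later in Theorem~\ref{thm:F-W-primal} actually come from.

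Two points deserve tightening. First, the statement allows $\rho\ge 0$, but your Slater argument ($\W(\wh\Omega,\wh\Omega)=0<\rho$) fails at $\rho=0$; there $\mc V_{\W,0}(\wh\Omega)=\{\wh\Omega\}$ and the support function is $\Tr{\wh\Omega T}$, which the dual recovers only asymptotically as $\gamma\uparrow+\infty$ via $\gamma(\rho-\Tr{\wh\Omega})+\gamma^2\inner{(\gamma I-T)^{-1}}{\wh\Omega}\to\Tr{T\wh\Omega}$. The paper's companion Proposition~\ref{prop:V-D} handles $\rho=0$ by exactly this kind of separate limiting argument, and you need the analogous sentence here. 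Second, the passage from $\{\gamma\ge 0,\ \gamma I\succeq T\}$ to the open region $\{\gamma>0,\ \gamma I\succ T\}$, and the interchange of limit and infimum in the $\eps$-regularization for singular $\wh\Omega$, are asserted rather than argued; both are standard and fixable (convexity of the dual objective in $\gamma$ plus locally uniform convergence of $\inner{(\gamma I-T)^{-1}}{\wh\Omega_\eps}$ on compact subsets of the open feasible region), but as written they are the weakest links in an otherwise complete proof.
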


The upper bound for $F_{\W, \rho}(\beta)$ can be constructed as
\begin{align}
    F_{\W, \rho}(\beta)\!&\le \Max{\Omega \in \mc V_{\W, \rho}(\wh \Omega)}~\inner{\Omega}{V(\beta)} = \Inf{\dualvar I \succ V(\beta)}\dualvar (\rho\!-\!\Tr{\wh \Omega})\!+\!\dualvar^2 \inner{(\dualvar I\!-\!V(\beta))^{-1}}{\wh \Omega}, \label{eq:relation-W-2}
\end{align}
where the inequality in~\eqref{eq:relation-W-2} follows from the fact that $\mc U_{\W, \rho}(\wh \Omega) \subseteq \mc V_{\W, \rho}(\wh \Omega)$, and the equality follows from Proposition~\ref{prop:V-W}. In the second step, we argue that the optimizer $\Omega\opt$ of problem~\eqref{eq:relation-W-2} can be constructed from the optimizer $\gamma\opt$ of the infimum problem via
\[
    \Omega\opt = (\dualvar\opt)^2 [\dualvar\opt I - V(\beta)]^{-1} \wh \Omega [\dualvar\opt I - V(\beta)]^{-1}.
\]
The last step involves proving that $\Omega\opt$ is a nonnegative matrix by exploiting Lemma~\ref{lemma:Vbeta}, and hence $\Omega\opt \in \mc U_{\D, \rho}(\wh \Omega)$. Thus, inequality~\eqref{eq:relation-W-2} is tight, leading to the desired result.

\section{Numerical Experiments on Real Data}
\label{sec:numerical}

We evaluate our adversarial reweighting schemes on the conditional expectation estimation task. To this end, we use the proposed reweighted scheme on the NW estimator of Example~\ref{ex:NW}. The robustification using the log-determinant divergence and the Bures-Wasserstein divergence are denoted by NW-LogDet and NW-BuresW, respectively. We compare our NW robust estimates against four popular baselines for estimating the conditional expectation: (i) the standard NW estimate in Example~\ref{ex:NW} with Gaussian kernel, (ii) the LLR estimate in Example~\ref{ex:LLR} with Gaussian kernel, (iii) the intercepted $\beta_1$ of LLR estimate (i.e., only the first dimension of $\beta_{\mathrm{LLR}}$), denoted as LLR-I, and (iv) the NW-Metric~\cite{noh2017generative} which utilizes the Mahalanobis distance in the Gaussian kernel.

\textbf{Datasets.} We use 8 real-world datasets: (i) abalone (\texttt{Abalone}), (ii) bank-32fh (\texttt{Bank}), (iii) cpu (\texttt{CPU}), (iv) kin40k (\texttt{KIN}), (v) elevators (\texttt{Elevators}), (vi) pol (\texttt{POL}), (vii) pumadyn32nm (\texttt{PUMA}), and (viii) slice (\texttt{Slice}) from the Delve datasets, the UCI datasets, the KEEL datasets and datasets in Noh et al. \cite{noh2017generative}. Due to space limitation, we report results on the first 4 datasets and relegate the remaining results to the Appendix. Datasets characteristics can also be found in the supplementary material.

\textbf{Setup.} For each dataset, we randomly split $1200$ samples for training, $50$ samples for validation to choose the bandwith $h$ of the Gaussian kernel, and $800$ samples for test. More specially, we choose the squared bandwidth $h^2$ for the Gaussian kernel from a predefined set $\left\{10^{-2:1:4}, 2\times10^{-2:1:4}, 5\times10^{-2:1:4} \right\}$. For a tractable estimation, we follow the approach in Brundsdon et al. \cite{ref:brunsdon1998geographically} and Silverman \cite{ref:silverman1986density} to restrict the relevant samples to $N$ nearest neighbors of each test sample $z_i$ with $N \! \in \! \{ 10, 20, 30, 50\}$. The range of the radius $\rho$ has 4 different values $\rho \! \in \! \left\{0.01, 0.1, 1, 10\right\}$. Finally, the prediction error is measured by the root mean square error (RMSE), i.e., $\mathrm{RMSE} = \sqrt{n_t^{-1}\sum_{i=1}^{n_t} (\wh y_i - \wh \beta_i)^2},$ where $n_t$ is the test sample size (i.e., $n_t\!=\!800$) and $\wh \beta_i$ is the conditional expectation estimate at the test sample $z_i$. We repeat the above procedure $10$ times to obtain the average RMSE. All our experiments are run on commodity hardware.

\textbf{Ideal case: no sample perturbation.} We first study how different estimators perform when there is no perturbation in the training data. In this experiment, we set the nearest neighbor size to $N\!=\!50$, and our reweighted estimators are obtained with the uncertainty size of $\rho\!=\!0.1$. 

\begin{figure}
   \vspace{-12pt}
  \begin{center}
    \includegraphics[width=0.75\textwidth]{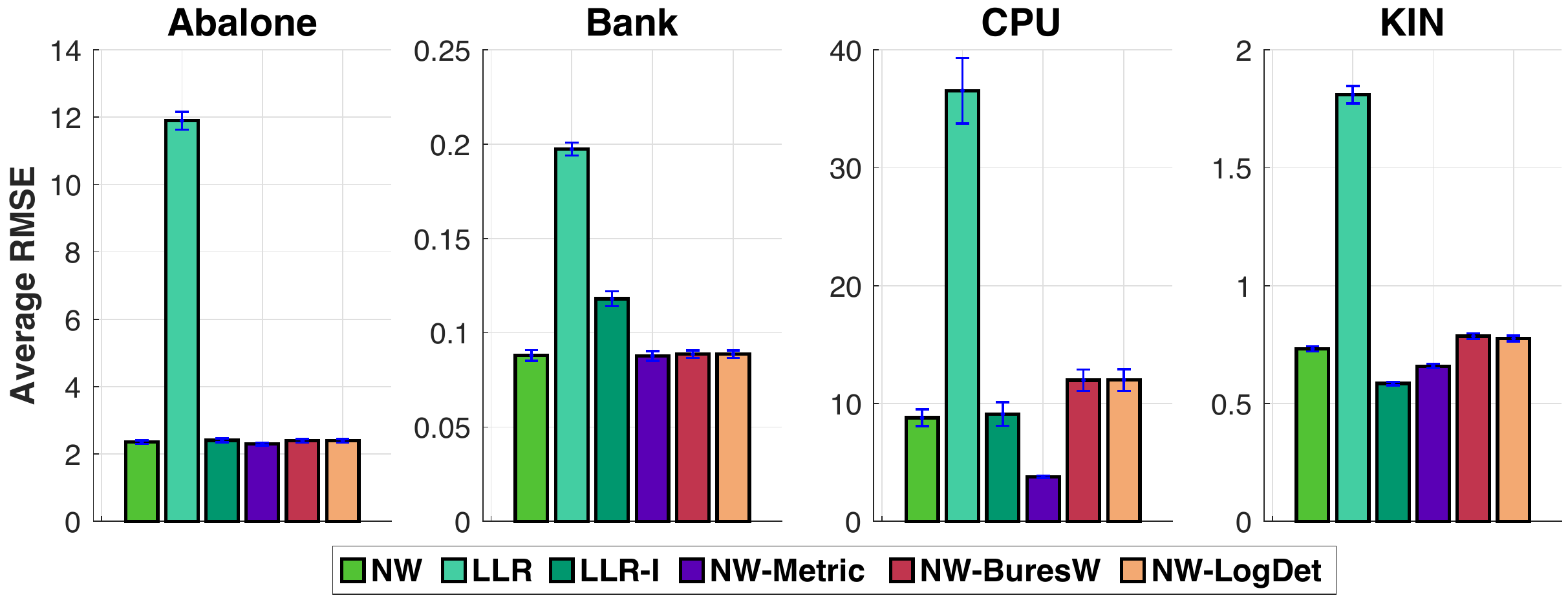}
    \end{center}
  \vspace{-10pt}
  \caption{Average RMSE for ideal case with no perturbation.}
  \label{fg:rmse_withoutATK}
  \vspace{-16pt}
 \end{figure}

Figure~\ref{fg:rmse_withoutATK} shows the average RMSE across the datasets. The NW-Metric estimator outperforms the standard NW, which agrees with the empirical observation in Noh et al.~\cite{noh2017generative}. More importantly, we observe that our adversarial reweighting schemes perform competitively against the baselines on several datasets.

\textbf{When training samples are perturbed.} We next evaluate the estimation performances when $\tau \hspace{-0.2em} \in \hspace{-0.6em} \{0.2N, 0.4N, 0.6N, 0.8N, N\}$ nearest samples from the $N$ training neighbors of each test sample are perturbed. We specifically generate perturbations only in the response dimension by shifting $y \! \mapsto \! \kappa y$, where $\kappa$ is sampled uniformly from $\left[1.8, 2.2\right]$. We set $N\!=\!50$ and $\rho\!=\!0.1$ as the experiment for the ideal case (no sample perturbation).

\begin{figure}[H]
  \vspace{-10pt}
  \begin{center}
    \includegraphics[width=0.9\textwidth]{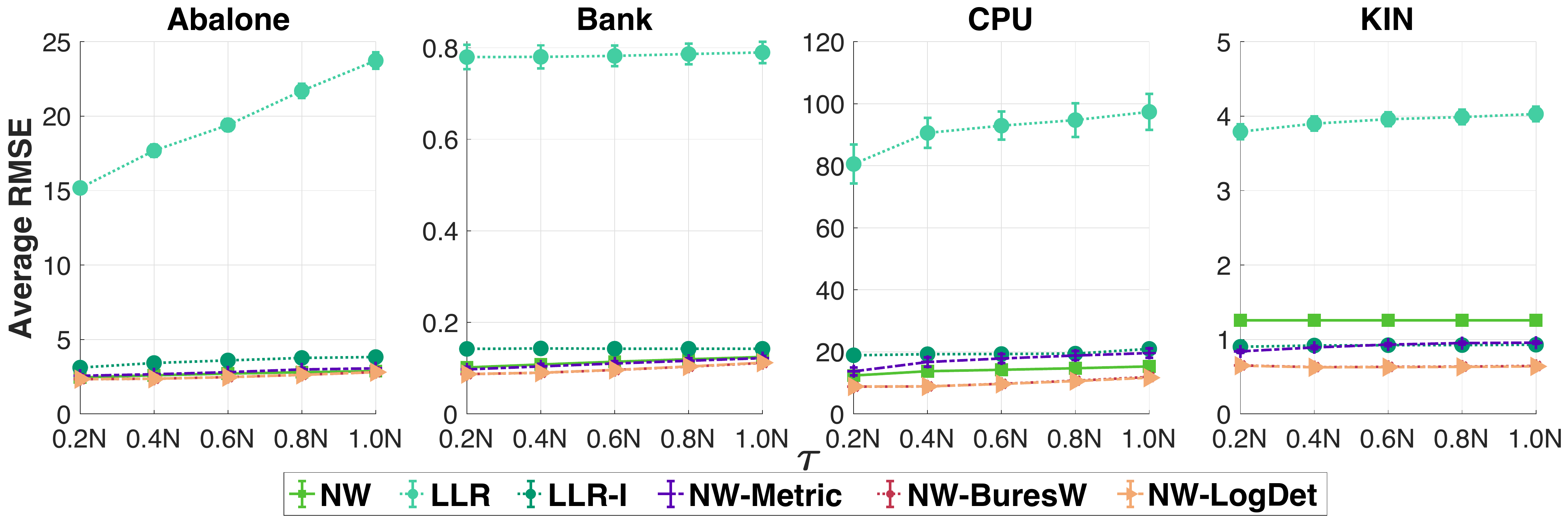}
  \end{center}
  \vspace{-12pt}
  \caption{RMSE for varying perturbation levels $\tau$. Results are averaged over 10 independent replications, each contains 800 test samples.} 
  \label{fg:rmse_withATK}
  \vspace{-16pt}
 \end{figure}

Figure~\ref{fg:rmse_withATK} shows the average RMSE for with varying perturbation level $\tau$. The performance of the NW, LLR, LLR-I, and NW-Metric baselines severely deteriorate, while both NW-LogDet and NW-BuresW can alleviate the effect of data perturbation. Our adversarial reweighting schemes consistently outperform all baselines in all datasets for the perturbed training data, across all 5 perturbations $\tau$. 

We then evaluate the effects of the uncertainty size $\rho$ and the nearest neighbor size $N$ on NW-LogDet and NW-BuresW.

\begin{figure}[H]
\vspace{-10pt}
  \begin{center}
    \includegraphics[width=0.9\textwidth]{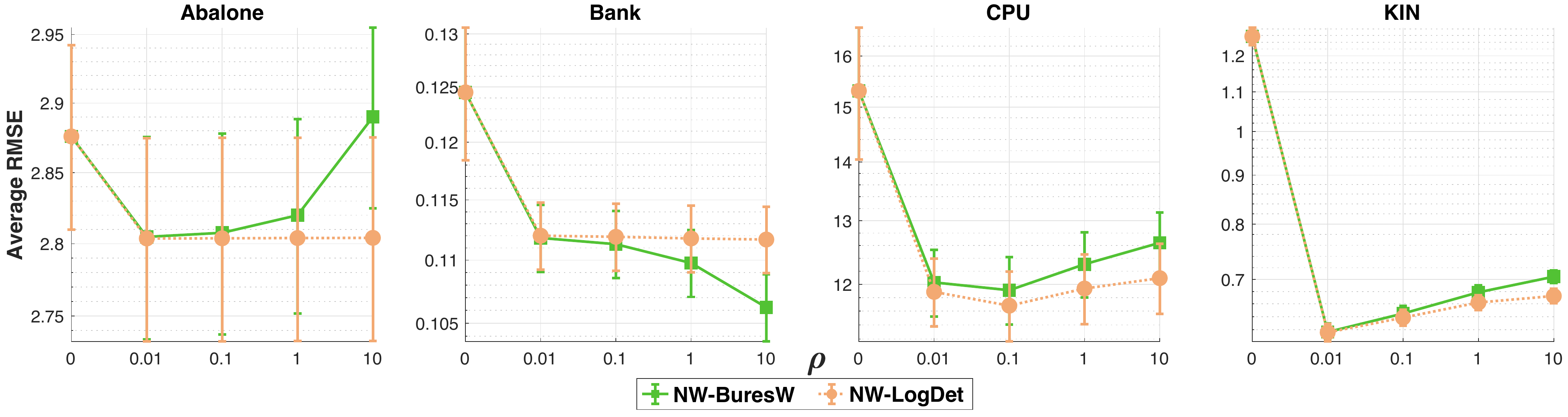}
    \vspace{-10pt}
  \caption{Average RMSE as a function of the ambiguity size $\rho$. Errors at $\rho = 0$ indicate the performance of the vanilla NW estimator.}\label{fg:rmse_rho}
  \end{center}
  \vspace{-16pt}
  \end{figure}

\textbf{Effects of the uncertainty size $\bm{\rho}$.} In this experiment, we set the nearest neighbor size to $N\!=50$, the perturbation $\tau=N$. Figure~\ref{fg:rmse_rho} illustrates the effects of the uncertainty size $\rho$ for the adversarial reweighting schemes. Errors at $\rho = 0$ indicate the performance of the vanilla NW estimator. We observe that the adversarial reweighting schemes perform well at some certain $\rho$ and when that $\rho$ is increased more, the performances decrease. The uncertainty size $\rho$ plays an important role for the adversarial reweighting schemes in applications. Tuning $\rho$ may consequently improve the performances of the adversarial reweighting schemes.

\textbf{Effects of the nearest neighbor size $\bm{N}$.} In this experiment, we set the uncertainty size to $\rho\!=0.1$. 

 \begin{wrapfigure}{r}{0.5\textwidth}
 \vspace{-6pt}
  \begin{center}
    \includegraphics[width=0.5\textwidth]{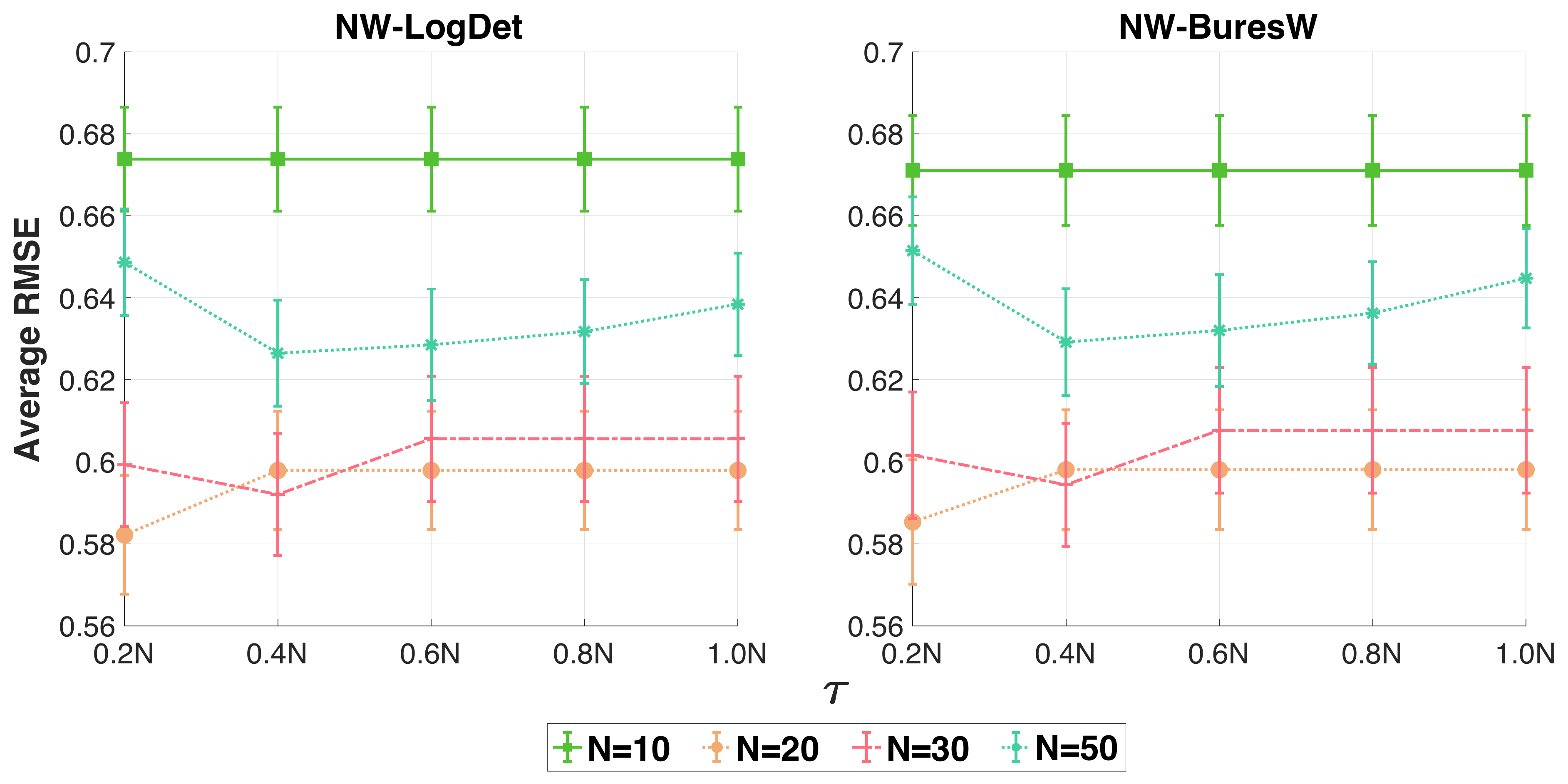}
  \end{center}
  \vspace{-12pt}
  \caption{Effects of the nearest neighbor size $N$.}
  \label{fg:rmse_NN}
  \vspace{-12pt}
\end{wrapfigure}

Figure~\ref{fg:rmse_NN} shows the effects of the nearest neighbor size $N$ for the adversarial reweighting schemes under varying perturbation $\tau$ in the \texttt{KIN} dataset. We observe that the performances of the adversarial reweighting schemes with $N\!\in\{20, 30\}$ perform better than those with $N\!\in\{10,50\}$. Note that when $N$ is increased, the computational cost is also increased (see Equation~\eqref{eq:erm} and Figure~\ref{fg:time}). Similar to the cases of the uncertainty size $\rho$, tuning $N$ may also help to improve performances of the adversarial reweighting schemes.

\begin{wrapfigure}{r}{0.5\textwidth}
 \vspace{-16pt}
  \begin{center}
    \includegraphics[width=0.5\textwidth]{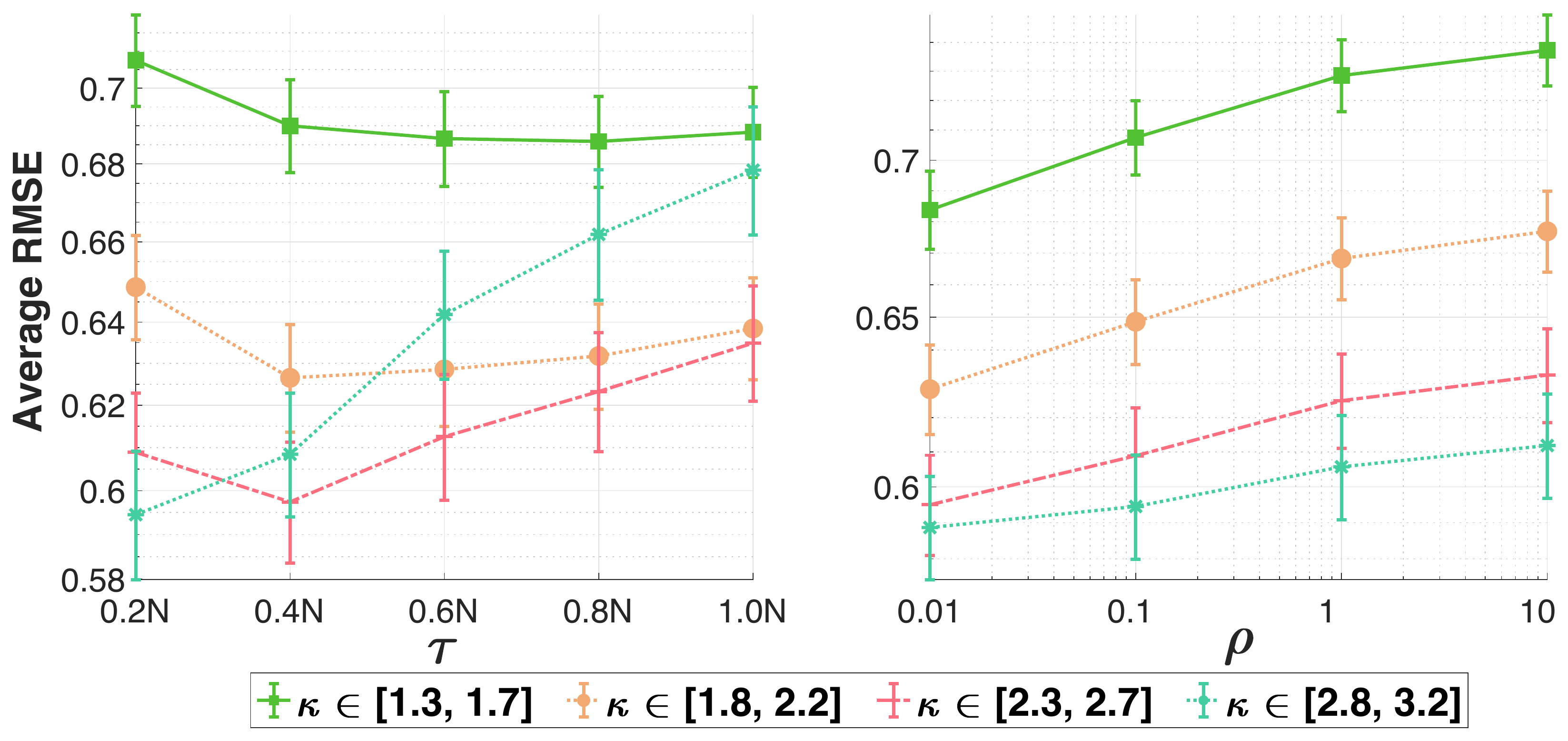}
  \end{center}
  \vspace{-12pt}
  \caption{Effects of perturbation intensity $\kappa$ for NW-LogDet estimate. Left plot: different perturbation~$\tau$, right plot: different uncertainty size~$\rho$.}
  \label{fg:rmse_Kappa_LogDet}
  \vspace{-10pt}
\end{wrapfigure}

\textbf{Under varying shifting w.r.t.~$\bm{\kappa}$.} In this experiment, we set the nearest neighbor size to $N\!=50$. Figure~\ref{fg:rmse_Kappa_LogDet} illustrates the performances of NW-LogDet under varying shifting w.r.t. $\kappa$ in the \texttt{KIN} dataset. For the left plots of the Figures \ref{fg:rmse_Kappa_LogDet}, we set the uncertainty size to $\rho\!=0.1$ when varying the perturbation $\tau$. We observe that the adversarial reweighting schemes provide different degrees of mitigation for the perturbation under varying shifting w.r.t. $\kappa$. For the right plots of the Figures~\ref{fg:rmse_Kappa_LogDet}, we set the perturbation to $\tau\!=0.2N$ when varying the uncertainty size $\rho$. We observe that the reweighting schemes under varying shifting w.r.t. $\kappa$ have the same behaviors as in Figure~\ref{fg:rmse_rho} when we consider the effects of the uncertainty size $\rho$ (i.e., when $\kappa \in [1.8, 2.2]$). Similar results for NW-BuresW are reported in the supplementary material.

\begin{wrapfigure}{r}{0.5\textwidth}
 \vspace{-18pt}
  \begin{center}
    \includegraphics[width=0.5\textwidth]{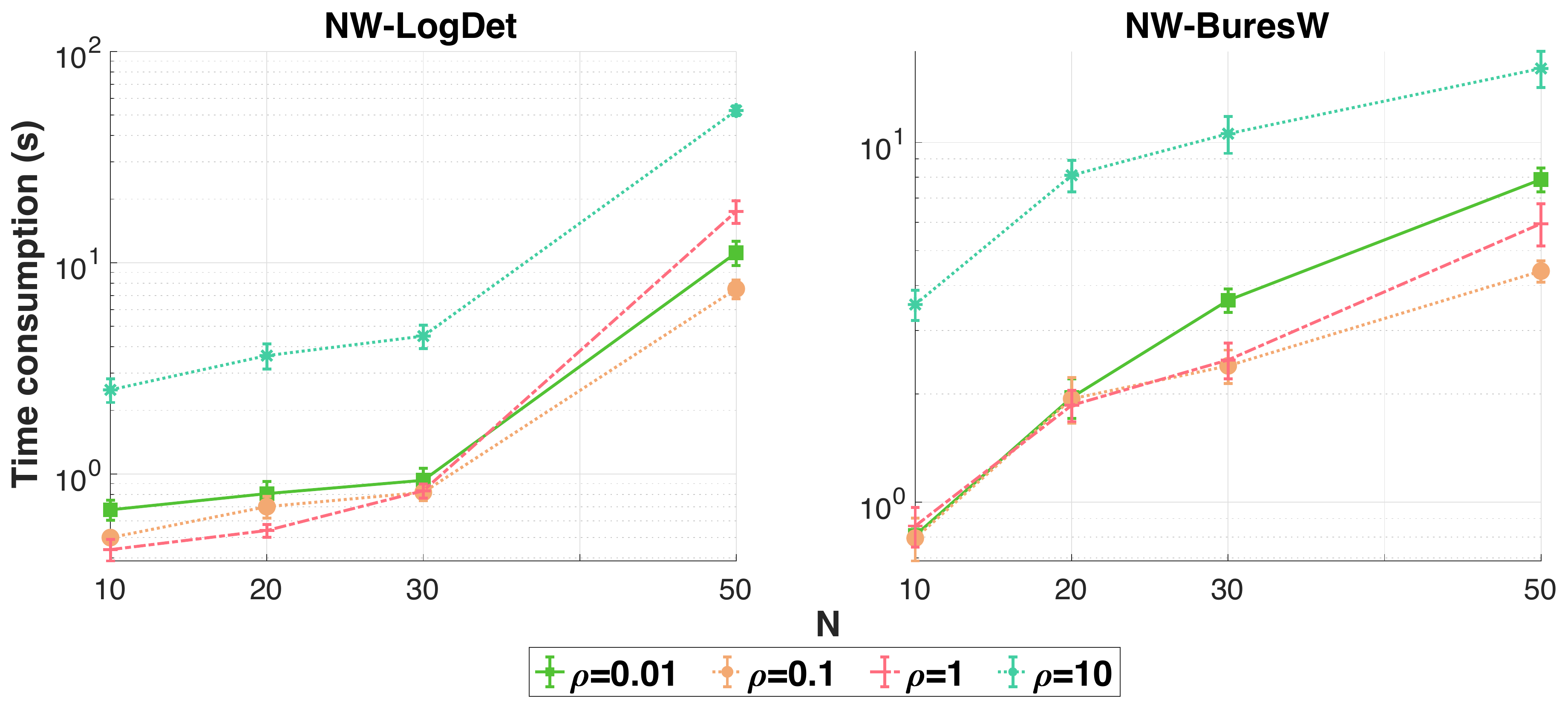}
  \end{center}
  \vspace{-12pt}
  \caption{Effects of $N$ on computational time.}
  \label{fg:time}
  \vspace{-12pt}
\end{wrapfigure}

\textbf{Time consumption.} In Figure~\ref{fg:time}, we illustrate the time consumption for the adversarial reweighting schemes under varying neighbor size $N$ and uncertainty size $\rho$ in the \texttt{KIN} dataset. The adversarial reweighting schemes averagely take about $10$ seconds for their estimation. When $N$ and/or $\rho$ increases, the computation of the adversarial reweighting schemes take longer. This is intuitive because the dimension of the weight matrix $\Omega$ scales quadratically in the neighbor size $N$. Bigger uncertainty size $\rho$ implies a larger feasible set $\mc U_{\varphi, \rho}(\wh \Omega)$, which leads to longer computing time to evaluate $F_{\varphi, \rho}$ and its gradient.

\textbf{Concluding Remarks. } We introduce two novel schemes for sample reweighting using matrix reparametrization. These two invariants are particularly attractive when the original weights are given through kernel functions. Note that the adversarial reweighting with Bures-Wasserstein distance $\W$ can be generalized to cases where the nominal weighting matrix $\wh \Omega$ is singular, unlike the reweighting with the log-determinant divergence $\D$. 

\begin{remark}[Invariant under permutation]
Our results hold under any simultaneous row and column permutation of the nominal weighting matrix $\widehat \Omega$ and the mapping $V(\beta)$. To see this, let $P$ be any $(N+1)$-dimensional permutation matrix, and let $\widehat \Omega_P \triangleq P \widehat \Omega P$ and $V_P(\beta) \triangleq P V(\beta) P$. Then
        \[
        \max\limits_{\Omega \in \mathcal U_{\varphi, \rho}(\widehat \Omega_P)}~\langle \Omega, V_P(\beta) \rangle = \langle P \Omega^\star P, V_P(\beta) \rangle = \langle \Omega^\star , V(\beta) \rangle,
        \]
        where $\Omega^\star$ is calculated as in Theorem~\ref{thm:F-KL-primal} for $\varphi = \mathds D$, and as in Theorem~\ref{thm:F-W-primal} for $\varphi = \mathds W$.
    The proof relies on the fact that $P^\top P = PP^\top = I$, that both $\mathds D$ and $\mathds W$ are permutation invariant (in the sense that $\varphi(\Omega_1, \Omega_2) = \varphi(P \Omega_1 P, P \Omega_2 P)$), and that the inner product is also permutation invariant. Similar results hold for the gradient information, and hence the optimal solution of $\beta$ is preserved under row and column permutations of $\widehat \Omega$ and $V(\beta)$.
\end{remark}

\textbf{Acknowledgements. } Material in this paper is based upon work supported by the Air Force Office of Scientific Research under award number FA9550-20-1-0397. Support is gratefully acknowledged from NSF grants 1915967, 1820942, 1838676, Simons Foundation grant ($\#318995$), JSPS KAKENHI grant
20K19873, and MEXT KAKENHI grants 20H04243, 21H04874. We thank the anonymous referees for their constructive feedbacks that helped improve and clarify this paper.

\bibliographystyle{plain}
\bibliography{arxiv_reweighted.bbl}

\newpage

\appendix
\section{Proofs and Additional Discussions}\label{appsec:proofs}

\subsection{Proofs of Section~\ref{sec:KL}}
In the following,  the symbol $\langle \cdot, \cdot \rangle$ will be used to represent  both   Frobenius norm of matrices and standard Euclidean norm of vectors. Also for a $p\times p$ matrix $A$, we let $\|A\|_2 = \sqrt{\langle A, A\rangle}$ to denote its  Frobenius norm and let 
$\lambda_{\max}(A)$ to represent its  maximum eigenvalue. In order to prove Proposition~\ref{prop:V-D}, we begin by computing the support function of the convex cone of symmetric positive definite (SPD) matrices.

\begin{lemma}[Support function of SPD matrices]\label{trace_of_product}
For any matrices $A\in \mathbb{S}^{N+1}$ and $\Omega\in \mathbb{S}^{N+1}_{+ +}$, we have
$\Tr{A\Omega} \leq \lambda_{\max}(A) \Tr{\Omega}$,
 and  $\Tr{A\Omega} <0$ if in addition $A \preceq 0$ and  $A \neq 0$.
Also for each $A\in \mathbb{S}^{N+1}$, 
 \begin{equation*}
 \Sup{\Omega \succ 0}~  \Tr{A \Omega} 
 =
\begin{cases}
+\infty & \text{if } \lambda_{\max}(A) >0,\\
0 & \text{if } \lambda_{\max}(A) \leq 0.
\end{cases}
\end{equation*}
\end{lemma}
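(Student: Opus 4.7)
My approach is to reduce everything to the eigen\-decomposition of the symmetric matrix $A$ and then exploit the cyclicity of the trace together with the key observation that the diagonal entries of a positive definite matrix are strictly positive. Concretely, I write $A = U \Lambda U^\top$ with $U$ orthogonal and $\Lambda = \diag(\lambda_1,\ldots,\lambda_{N+1})$ collecting the eigenvalues of $A$, so that
\[
\Tr{A \Omega} = \Tr{U \Lambda U^\top \Omega} = \Tr{\Lambda B}, \qquad B \Let U^\top \Omega U.
\]
Because $\Omega \in \PD^{N+1}$ and $U$ is orthogonal, $B$ is also positive definite; in particular every $B_{ii} > 0$ and $\Tr{B} = \Tr{\Omega}$.

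\textbf{The two pointwise inequalities.} Expanding the trace in the eigenbasis gives
\[
\Tr{A \Omega} = \sum_{i=1}^{N+1} \lambda_i B_{ii} \le \lambda_{\max}(A) \sum_{i=1}^{N+1} B_{ii} = \lambda_{\max}(A)\,\Tr{\Omega},
\]
which proves the first claim. For the second, when $A \preceq 0$ and $A \neq 0$, every $\lambda_i \le 0$ and at least one, say $\lambda_{i_0}$, is strictly negative. Since $B_{i_0 i_0} > 0$, the sum $\sum_i \lambda_i B_{ii}$ is strictly negative, yielding $\Tr{A\Omega} < 0$.

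\textbf{The supremum formula.} I split on the sign of $\lambda_{\max}(A)$. If $\lambda_{\max}(A) > 0$, I exhibit an explicit sequence: pick a unit eigenvector $v$ with $Av = \lambda_{\max}(A) v$ and set $\Omega_t = \eps I + t\, vv^\top$ for some fixed $\eps > 0$ and $t \to \infty$; then $\Omega_t \succ 0$ and
\[
\Tr{A \Omega_t} = \eps \Tr{A} + t \lambda_{\max}(A) \to +\infty,
\]
so the supremum is $+\infty$. If $\lambda_{\max}(A) \le 0$, the first inequality already gives $\Tr{A\Omega} \le \lambda_{\max}(A)\Tr{\Omega} \le 0$ for every $\Omega \succ 0$, so the supremum is at most $0$; conversely, substituting $\Omega = \eps I$ yields $\Tr{A\Omega} = \eps \Tr{A} \to 0$ as $\eps \da 0$ (since $\Tr{A} \le 0$), showing the supremum equals $0$.

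\textbf{Anticipated difficulty.} There is no real obstacle; the only subtlety is being meticulous about strict positivity of the diagonal entries $B_{ii}$ of the congruence-transformed matrix $B = U^\top \Omega U$, which is what upgrades the non-strict trace inequality into a strict one under the hypothesis $A \preceq 0$, $A \neq 0$. Once this is in place, the case split for the supremum is a direct construction.
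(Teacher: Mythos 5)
Your proof is correct and follows essentially the same route as the paper's: diagonalize $A$, use cyclicity of the trace and the strict positivity of the diagonal entries of the congruence-transformed $\Omega$ for the two pointwise inequalities, and exhibit explicit families $\Omega_t \succ 0$ for the two cases of the supremum. The only differences are cosmetic (you use $\eps I$ instead of the paper's $tI + t vv^\top$ in the case $\lambda_{\max}(A)\le 0$, and you spell out the diagonal-entry argument that the paper leaves implicit).
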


\begin{proof}[Proof of Lemma~\ref{trace_of_product}]
Since $A$ is symmetric, we can decompose it as  $D = Q^\top A Q$ with  $D$ being a diagonal matrix formed by eigenvalues of $A$ and $Q$ being  an orthogonal matrix whose columns are normalized eigenvectors of $A$. Then $Q^\top A = D Q^\top$ and hence 
\[\Tr{A\Omega} = \Tr{Q^\top A\Omega Q}
 =\Tr{D Q^\top \Omega Q}\leq \lambda_{\max}(D) \Tr{Q^\top\Omega Q}=\lambda_{\max}(A) \Tr{\Omega},
 \]
 where we have used the fact $Q^\top\Omega Q\succ 0$  to obtain the inequality. In case $A \preceq 0$ and  $A \neq 0$, then as all diagonal elements of $D$ are nonpositive with at least one negative element we have $\Tr{A\Omega} = \Tr{D Q^\top\Omega Q}<0$.
 These give the first part of the lemma. 
 
 For the second part, let  $v$ be an eigenvector of $A$ corresponding to eigenvalue $\lambda_{\max}(A)$.  In case $\lambda_{\max}(A) >0$, 
 by taking $\Omega_t  = I +  t v v^\top \succ 0$ with $t\in (0,+\infty)$ and 
 letting 
 $t \to +\infty$ we see that
 $ \Tr{A \Omega_t} = \Tr{A} + t \lambda_{\max}(A) \Tr{v v^\top}$ tends to $+\infty$. This implies that 
 $\sup_{\Omega \succ 0}~  \Tr{A \Omega} 
 =+\infty$. For the case $\lambda_{\max}(A) \leq 0$, we instead take $\Omega_t  = t I +  t v v^\top \succ 0$ with $t\in (0,1)$ and let   $t \to 0^+$ to conclude that $\sup_{\Omega \succ 0}~  \Tr{A \Omega} 
 \geq 0$. On the other hand, due to the first part of the lemma we always have $\sup_{\Omega \succ 0}~  \Tr{A \Omega} 
 \leq  0$. Therefore, the desired result follows.
\end{proof}

\paragraph{For Proposition~\ref{prop:V-D}}
\begin{proof}[Proof of Proposition~\ref{prop:V-D}]
    The set $\mc V_{\D, \rho}(\wh \Omega)$ is bounded. Indeed, if $\rho = 0$ then $\mc V_{\D, \rho}(\wh \Omega) = \{ \wh \Omega\}$ which is trivially bounded. If $\rho > 0$, then any $\Omega \in \mc V_{\D, \rho}(\wh \Omega)$ satisfies $\Tr{\Omega \wh \Omega^{-1}} - \log\det (\Omega \wh \Omega^{-1}) - (N+1) \le \rho$. Since  $\Omega \wh \Omega^{-1}$ is similar to the matrix $\wh \Omega^{-\half} \Omega \wh \Omega^{-\half}$, we can rewrite this as 
    \begin{equation}\label{eq:in_ball}
        \Tr{\wh \Omega^{-\half} \Omega \wh \Omega^{-\half}} - \log\det (\wh \Omega^{-\half} \Omega \wh \Omega^{-\half}) - (N+1)\le \rho.
    \end{equation}
We claim that this implies that 
$\Omega$ is  bounded in the sense that there exist numbers $0< \underline{\sigma} \leq \bar \sigma <\infty$ depending only on $\rho, \, \wh \Omega,$ and $N$ such that $
\underline{\sigma} I \preceq \Omega \preceq \bar \sigma
I$.
To see this, let $\sigma_i$ ($i=1,...,N+1$) be the eigenvalues of $\wh \Omega^{-\half} \Omega \wh \Omega^{-\half}$. Then \eqref{eq:in_ball} gives
\[
\sum_{i=1}^{N+1} [\sigma_i  - \log \sigma_i -1] \leq \rho.
\]
Because the function $\sigma \mapsto \sigma - \log \sigma - 1$ is
non-negative for every $\sigma > 0$, we then find that $\{\sigma_i  - \log \sigma_i -1\}_{i=1}^{N+1}$ is bounded. Therefore, $\sigma_i$ must be bounded between two positive constants depending only on $\rho$ and $N$. Now let $\lambda$
be an eigenvalue of $\Omega$ and let $v$ be its associated eigenvector. Then $\lambda \|v\|_2^2 = \langle v, \Omega v \rangle = \langle (\wh \Omega^{\half} v),  \wh \Omega^{-\half} \Omega \wh \Omega^{-\half} (\wh\Omega^{\half} v)\rangle$, and so 
\[
\sigma_*\| \wh\Omega^{\half} v\|_2^2 \leq   \lambda \|v\|_2^2\leq \sigma^*\| \wh\Omega^{\half} v\|_2^2,
\]
where $\sigma_*$ and $\sigma^*$ respectively denote the smallest and largest eigenvalues of $\wh \Omega^{-\half} \Omega \wh \Omega^{-\half}$. It follows that $\sigma_* \lambda_{\min}(\wh \Omega)\leq \lambda \leq \sigma^* \lambda_{\max}(\wh\Omega)$. Thus all eigenvalues of $\Omega$ are bounded between two positive constants, and  the claim is proved.

    One now can rewrite $\mc V_{\D, \rho}(\wh \Omega)$ as
    \[
        \mc V_{\D, \rho}(\wh \Omega) = \left\{\Omega \in \PSD^{N+1}: \underline{\sigma} I \preceq \Omega \preceq \bar \sigma I,~\Tr{\Omega \wh \Omega^{-1}} - \log\det (\Omega \wh \Omega^{-1}) - (N+1) \le \rho \right\}.
    \]
    The function $\Omega \mapsto \Tr{\Omega \wh \Omega^{-1}} - \log\det (\Omega \wh \Omega^{-1})$ is convex and continuous on the set $\{\Omega: \underline{\sigma} I \preceq \Omega \preceq \bar \sigma I\}$, thus the set $\mc V_{\D, \rho}(\wh \Omega)$ is convex and compact. 
    
    We now proceed to provide the support function of $\mc V_{\D, \rho}(\wh \Omega)$. One can verify that $\wh \Omega$ is the Slater point of the convex set $\mc V_{\D, \rho}(\wh \Omega)$.
    Assume momentarily that $T \neq 0$, using a duality argument, we find
	\begin{align}
	&\Sup{\Omega \in \mc V_{\D, \rho}(\wh \Omega)} ~ \Tr{T \Omega} \notag \\
	=& \Sup{\Omega \succ 0} \Inf{\gamma \ge 0} ~\Tr{T \Omega} + \gamma \big(\bar \rho - \Tr{\wh \Omega^{-1} \Omega} + \log\det \Omega \big) \notag\\
	=& \Inf{\gamma \ge 0} ~\left\{ \gamma \bar \rho + \Sup{\Omega \succ 0}~ \big\{ \Tr{(T - \gamma \wh \Omega^{-1})\Omega}  + \gamma \log \det \Omega \big\} \right\},  \label{eq:inf}
	\end{align}
	where the last equality follows from strong duality~\cite[Proposition~5.3.1]{ref:bertsekas2009convex}, and  $\bar \rho \Let \rho + (N+1) - \log\det \wh \Omega \in \R$. Note that $\gamma = 0$ is not an optimal solution to the minimization problem~\eqref{eq:inf}. Indeed, if the maximum eigenvalue of $T$ is strictly positive, then the objective value of problem~\eqref{eq:inf} evaluated at $\gamma = 0$ is $+\infty$ by Lemma~\ref{trace_of_product}. However, because $\mc V_{\D, \rho}(\wh \Omega)$ is compact and bounded, we have $\sup_{\Omega \in \mc V_{\D, \rho}(\wh \Omega)}\Tr{T \Omega} < \infty$. In case  the maximum eigenvalue of $T$ is nonpositive, then from  Lemma~\ref{trace_of_product} we see that the objective value of problem~\eqref{eq:inf} evaluated at $\gamma = 0$ is $0$, while $\sup_{\Omega \in \mc V_{\D, \rho}(\wh \Omega)}\Tr{T \Omega} < 0$. Thus, the infimum in  problem~\eqref{eq:inf} can be restricted to $\gamma > 0$.

	If $T - \gamma \wh \Omega^{-1} \not\prec 0$, then the inner supremum problem in~\eqref{eq:inf} becomes unbounded according to Lemma~\ref{trace_of_product}. 
	
	If $T - \gamma \wh \Omega^{-1} \prec 0$  then the inner supremum problem admits the unique optimal solution
	\be \label{eq:unique-cov}
	\Omega\opt(\gamma) = \gamma (\gamma \wh \Omega^{-1} - T)^{-1},
	\ee
	which is obtained by solving the first-order optimality condition. By placing this optimal solution into the objective function and arranging terms together with using the definition of $\bar \rho$, we have
	\be \label{eq:support-inner}
	\Sup{\Omega \in \mc V_{\D, \rho}(\wh \Omega)} ~ \Tr{T \Omega} = \Inf{\substack{\gamma > 0 \\ \gamma \wh \Omega^{-1} \succ T }}~ \gamma \rho - \gamma \log \det (I - \gamma^{-1}\wh \Omega^\half T \wh \Omega^\half ).
	\ee
	To complete the proof, we show that the reformulation~\eqref{eq:support-inner} holds even for $T = 0$ or $\rho = 0$. Notice that if $T= 0$ or $\rho = 0$, then the left-hand side of~\eqref{eq:support-inner} evaluates to 0. If $T = 0$, the infimum problem on the right-hand side of~\eqref{eq:support-inner} also attains the optimal value of 0 asymptotically as $\gamma$ decreases to 0. If $\rho = 0$ and $T \neq 0$, then the infimum problem on the right-hand side of~\eqref{eq:support-inner} also attains the optimal value of 0 asymptotically as $\gamma$ increases to $+\infty$ by the l'Hopital rule
	\[
	    \lim_{\gamma \uparrow +\infty}~- \gamma \log \det (I - \wh \Omega^\half T \wh \Omega^\half /\gamma) = \Tr{T\wh \Omega}.
	\]
	This observation completes the proof.
\end{proof}

For an $p\times p$ real matrix $A$,  its spectral radius $\mc R(A)$ is defined as   the largest absolute value of its eigenvalues. The following elementary fact is well known.
\begin{lemma}[Nonnegativity of inverse]\label{inverse-nonnegative}
Let  $A$ be an $p\times p$ real matrix such that  $\mc R(A)< 1$ and all its entries  are nonnegative. Then the matrix $I - A$ is invertible and all entries of $(I - A)^{-1}$ are nonnegative. 
\end{lemma}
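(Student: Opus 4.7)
The plan is to invoke the Neumann series representation of $(I-A)^{-1}$. Since $\mc R(A) < 1$, it is a standard fact in linear algebra that $I - A$ is invertible (as $1$ is not an eigenvalue of $A$) and that the partial sums $S_n \Let \sum_{k=0}^{n} A^k$ converge to $(I-A)^{-1}$ as $n \to \infty$. Concretely, one first notes the telescoping identity $(I - A) S_n = I - A^{n+1}$, and then observes that $\mc R(A) < 1$ forces $A^{n+1} \to 0$ (this can be seen, for instance, from Gelfand's formula $\mc R(A) = \lim_{k\to\infty} \|A^k\|^{1/k}$, which yields $\|A^k\| \to 0$ for any submultiplicative matrix norm once $\mc R(A)<1$). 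Passing to the limit in $(I-A)S_n = I - A^{n+1}$ then gives $(I - A) \sum_{k=0}^{\infty} A^k = I$, so that $(I - A)^{-1} = \sum_{k=0}^{\infty} A^k$.

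Now I would use the entrywise nonnegativity of $A$. Since all entries of $A$ are nonnegative, an easy induction on $k$ shows that all entries of $A^k$ are nonnegative (the entries of $A^{k+1} = A \cdot A^k$ are sums of products of nonnegative numbers). Hence every partial sum $S_n$ has nonnegative entries. Because entrywise nonnegativity is preserved under taking limits (the set $\{M \in \R^{p\times p} : M \ge 0\}$ is closed), the limit $(I-A)^{-1} = \lim_{n\to\infty} S_n$ also has all entries nonnegative, which is the desired conclusion.

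The argument is essentially routine; the only mildly delicate point is justifying the convergence of the Neumann series solely from the spectral radius condition $\mc R(A) < 1$ (rather than from a norm bound $\|A\| < 1$), but this is precisely what Gelfand's formula provides. No obstacle beyond that is expected, and the structure of the proof is: (i) invertibility of $I - A$ from $1 \notin \mathrm{spec}(A)$, (ii) Neumann series convergence via $\mc R(A) < 1$, (iii) entrywise nonnegativity of $A^k$ by induction, (iv) passage to the limit.
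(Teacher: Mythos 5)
Your proof is correct and follows essentially the same route as the paper's: convergence of the Neumann series $\sum_{k\ge 0} A^k = (I-A)^{-1}$ justified via Gelfand's formula from $\mc R(A)<1$, followed by entrywise nonnegativity of the powers $A^k$ and passage to the limit. Your write-up is in fact slightly more explicit about the telescoping identity and the closedness of the nonnegative orthant, but the argument is the same.
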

\begin{proof}[Proof of Lemma~\ref{inverse-nonnegative}]
For completeness, we include a proof here. By Gelfand's formula, we have
\[
\lim_{k\to +\infty} \| A^k\|_2^{\frac1k} =\mc R(A)< 1.
\]
Thus there exist constants $\tau \in (0,1)$ and $k_0 \in \mathbb{N}$ such that $ \| A^k\|_2< \tau^k$ for every $k\geq k_0$. Therefore, the Neumann series 
\[
\sum_{k=0}^\infty A^k  = (I-A)^{-1}
\]
converges. This together with the  assumption about the nonnegativity of $A$ implies that all entries of $(I - A)^{-1}$ are nonnegative. 
\end{proof}

\paragraph{For Theorem~\ref{thm:F-KL-primal}}
\begin{proof}[Proof of Theorem~\ref{thm:F-KL-primal}] 
    We find
    \begin{subequations}
\begin{align}
    F_{\D, \rho}(\beta) &= \Max{\Omega \in \mc U_{\D, \rho}(\wh \Omega)}~\inner{\Omega}{V(\beta)} \label{eq:relation-app1} \\
    &\le \Max{\Omega \in \mc V_{\D, \rho}(\wh \Omega)}~\inner{\Omega}{V(\beta)} \label{eq:relation-app2}\\
    &= \Inf{\gamma I \succ \wh \Omega^{\half} V(\beta) \wh \Omega^{\half}}~\gamma \rho - \gamma \log\det( I - \gamma^{-1} \wh \Omega^\half V(\beta) \wh \Omega^\half), \label{eq:relation-app3}
\end{align}
\end{subequations}
where equality~\eqref{eq:relation-app1} is the definition of $F_{\D, \rho}$ as in~\eqref{eq:F-KL-def}, inequality~\eqref{eq:relation-app2} follows from the fact that $\mc U_{\D, \rho}(\wh \Omega) \subseteq \mc V_{\D, \rho}(\wh \Omega)$, and equality~\eqref{eq:relation-app3} follows from Proposition~\ref{prop:V-D}. Notice that $V(\beta)$ has one nonnegative eigenvalue by virtue of Lemma~\ref{lemma:Vbeta} and thus the constraint $\gamma I \succ \wh \Omega^\half V(\beta) \wh \Omega^\half$ implies the condition $\gamma > 0$.

    The strictly positive radius condition $\rho \in (0,+\infty)$ implies the existence of a Slater point $\wh \Omega$ of the set $\mc U_{\D, \rho}(\wh \Omega)$. By~\cite[Proposition~5.5.4]{ref:bertsekas2009convex}, the existence of a solution $\dualvar\opt > 0$ that minimizes the dual problem~\eqref{eq:relation-app3} is guaranteed. Moreover, the objective function of problem~\eqref{eq:relation-app3} is strictly convex, and thus the solution $\dualvar\opt$ is unique. By inspecting~\eqref{eq:unique-cov}, the solution 
        \[
            \Omega\opt(\dualvar\opt) = (\wh \Omega^{-1} - \frac{1}{\dualvar\opt} V(\beta))^{-1}
        \]
        thus solves the primal problem~\eqref{eq:relation-app2} by~\cite[pp.~178]{ref:bertsekas2009convex}.

      Assumption~\ref{a:assumption} implies that both $\wh \Omega$ and $V(\beta)/\gamma\opt$ are nonnegative matrices. Also 
      the spectral radius of $(\dualvar\opt)^{-1}\wh \Omega^{\half} V(\beta) \wh \Omega^{\half}$ is 
    smaller than $1$ by the feasibility of $\dualvar\opt$ in problem~\eqref{eq:relation-app3}. Hence the  matrix $[I - (\dualvar\opt)^{-1}\wh \Omega^{\half} V(\beta) \wh \Omega^{\half}]^{-1}$ is nonnegative by Lemma~\ref{inverse-nonnegative}. 
     
    As $\Omega\opt(\dualvar\opt) = \wh\Omega^{\half}[I - (\dualvar\opt)^{-1}\wh \Omega^{\half} V(\beta) \wh \Omega^{\half}]^{-1}\wh\Omega^{\half}$,  we conclude that $\Omega\opt(\dualvar\opt)$ is a matrix with nonnegative entries. Moreover, $\Omega\opt(\dualvar\opt)$ is also positive semidefinite. Thus $\Omega\opt(\dualvar\opt)$ is doubly nonnegative. This observation completes the proof.
\end{proof}

\paragraph{For Lemma~\ref{lemma:grad-F_D}}
\begin{proof}[Proof of Lemma~\ref{lemma:grad-F_D}]
     Because $\mc U_{\D, \rho}(\wh \Omega)$ is compact, by~\cite[Proposition~A.22]{ref:bertsekas1971control}, the subdifferential of $F_{\D, \rho}$ is
    \[
        \partial F_{\D, \rho}(\beta) = \mathrm{ConvexHull}\Big( 2\sum_{i=1}^N \Omega_{0i}\opt \nabla_\beta \ell(\beta, \wh x_i, \wh y_i) ~\Big
        \vert
        ~\Omega\opt \in \mc O\opt(\beta) \Big),
    \]
    where $\mc O\opt(\beta) = \big\{ \Omega \in \mc U_{\D, \rho}(\wh \Omega): \inner{\Omega}{V(\beta)} = F_{\D,\rho}(\beta) \big\}$ is the optimal solution set. By Theorem~\ref{thm:F-KL-primal}, the optimal solution set $\mc O\opt(\beta)$ is a singleton, which leads to the postulated result.
\end{proof}

\paragraph{For Lemma~\ref{lemma:Vbeta}}
\begin{proof}[Proof of Lemma~\ref{lemma:Vbeta}]
The symmetry of $V(\beta)$ follows from definition. The nonnegativity of $V(\beta)$ follows from Assumption~\ref{a:assumption}.
Let $\lambda$ be an eigenvalue of $V(\beta)$, then $\lambda$ solves the characteristic equation $\det(V(\beta) - \lambda I) = 0$. By exploiting the form of $V(\beta)$ and by the determinant formula for the arrowhead matrix, the eigenvalue $\lambda$ then solves the  algebraic equation $\lambda^{N-1}\big[\lambda^2 - \sum_{i=1}^N \ell(\beta, 
\wh x_i, \wh y_i)^2\big] = 0$. This completes the proof.
\end{proof}

\subsection{Proofs of Section~\ref{sec:Wass}}

\paragraph{For Theorem~\ref{thm:F-W-primal}}
\begin{proof}[Proof of Theorem~\ref{thm:F-W-primal}]
    The statement regarding $\dualvar\opt$ and the expression of $F_{\W,\rho}$ and $\Omega\opt$ follows from~\cite[Proposition~A.4]{ref:nguyen2019bridging}. It remains to show that $\Omega\opt$ is nonnegative.
    Indeed, the constraint $\dualvar\opt I \succ V(\beta)$
    implies that the spectral radius of 
    $(\dualvar\opt)^{-1} V(\beta)$ is smaller than $1$. Therefore, the inverse matrix $[I - (\dualvar\opt)^{-1} V(\beta)]^{-1}$ is nonnegative according to 
    Lemma~\ref{inverse-nonnegative}. The matrix $\Omega\opt$ is thus the product of three nonnegative matrices and thus it is also nonnegative.
\end{proof}

\paragraph{For Lemma~\ref{lemma:grad-F_W}}
\begin{proof}[Proof of
Lemma~\ref{lemma:grad-F_W}]
 The proof of this lemma is similar to that of Lemma~\ref{lemma:grad-F_D} with the optimal set is  now given by $\mc O\opt(\beta) = \big\{ \Omega \in \mc U_{\W, \rho}(\wh \Omega): \inner{\Omega}{V(\beta)} = F_{\W, \rho}(\beta) \big\}$. The singleton of this $\mc O\opt(\beta)$ is  guaranteed by Theorem~\ref{thm:F-W-primal}.
\end{proof}

\subsection{Discussions on Assumption~\ref{a:assumption}}
\label{sec:discussion}
Assumption~\ref{a:assumption}\ref{a:assumption1} is standard in the machine learning literature and it holds naturally in many classification and regression tasks. Regarding Assumption~\ref{a:assumption}\ref{a:assumption2}, the non-negativity of $\wh \Omega$ follows directly if the weighting function $\omega$ is also non-negative. As shown below, positive definiteness holds under mild conditions of the training data and the weighting kernel.

\begin{lemma}[Positive definite nominal weighting matrix]
    \label{lemma:kernel}
    Suppose that the weighting function $\omega$ can be represented by a strictly positive definite kernel $K: \mc Z \times \mc Z \to \R$ in the sense that $\omega(\wh z_i) = K(z_0, \wh z_i)$ for $i=1,\ldots,N$ and for  some distinct covariates $(z_0, \wh z_1, \ldots, \wh z_N) \in \mc Z^{N+1}$. Then
     \[
         \wh \Omega \Let \begin{bmatrix}
             K(z_0, z_0) & K(z_0, \wh z_1) & \cdots & K(z_0, \wh z_N) \\
             K(\wh z_1, z_0) & K(\wh z_1, \wh z_1) & \cdots & K(\wh z_1, \wh z_N) \\
             \vdots  & \vdots &  \ddots & \vdots \\
             K(\wh z_N, z_0) & K(\wh z_N, \wh z_1) & \cdots & K(\wh z_N, \wh z_N)
         \end{bmatrix}
     \]
     is positive definite with $\wh \Omega_{0i}\!=\!\wh \Omega_{i0}\!=\!\omega(\wh z_i)$ for $i=1,\ldots,N$. 
\end{lemma}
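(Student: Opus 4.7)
The plan is to recognize that $\widehat\Omega$ is precisely the Gram matrix of the kernel $K$ evaluated at the $N+1$ points $(z_0, \widehat z_1, \ldots, \widehat z_N) \in \mathcal Z^{N+1}$, so the conclusion should follow immediately from the definition of strictly positive definite kernel together with the standing hypothesis that these $N+1$ covariates are mutually distinct.

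First I would verify the claim on the border entries: since $K$ is symmetric (being strictly positive definite), we have $K(z_0, \widehat z_i) = K(\widehat z_i, z_0)$, and combined with the hypothesis $\omega(\widehat z_i) = K(z_0, \widehat z_i)$ this gives $\widehat\Omega_{0i} = \widehat\Omega_{i0} = \omega(\widehat z_i)$ for $i = 1, \ldots, N$, matching the structure of the nominal weighting matrix described in Section~\ref{sec:framework}.

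Next I would establish positive definiteness. Relabel the points as $u_0 = z_0$ and $u_i = \widehat z_i$ for $i = 1, \ldots, N$, so $\widehat\Omega_{ij} = K(u_i, u_j)$ for $i,j = 0,\ldots,N$. Take an arbitrary nonzero vector $\alpha = (\alpha_0, \alpha_1, \ldots, \alpha_N)^\top \in \R^{N+1}$. Then
\[
\alpha^\top \widehat\Omega \alpha = \sum_{i=0}^{N}\sum_{j=0}^{N} \alpha_i \alpha_j K(u_i, u_j) \ge 0
\]
by the positive definiteness condition~\eqref{eq:kernel-def} in the definition of positive definite kernels. Because the $u_i$'s are mutually distinct by assumption and $K$ is \emph{strictly} positive definite, equality can hold only if $\alpha_0 = \alpha_1 = \cdots = \alpha_N = 0$, contradicting our choice of $\alpha$. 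Hence $\alpha^\top \widehat\Omega \alpha > 0$ for every nonzero $\alpha$, which shows $\widehat\Omega \in \PD^{N+1}$.

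There is essentially no main obstacle: the entire argument is an unpacking of the definition of strict positive definiteness applied to the Gram matrix on $N+1$ distinct points. The only point worth emphasizing is that the distinctness hypothesis on $(z_0, \widehat z_1, \ldots, \widehat z_N)$ is exactly what is required to upgrade from positive semidefiniteness (guaranteed by any positive definite kernel) to strict positive definiteness of $\widehat\Omega$.
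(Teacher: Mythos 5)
Your proposal is correct and follows essentially the same route as the paper, which simply observes that $\wh\Omega$ is the Gram matrix of the strictly positive definite kernel $K$ at $N+1$ mutually distinct points and cites a reference for the conclusion; you merely unpack that citation into the explicit quadratic-form argument. No gaps.
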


The proof of Lemma~\ref{lemma:kernel} follows by noticing that $\wh \Omega$ is the Gram matrix of a kernel $K$, and hence~$\wh \Omega$ is a positive definite matrix~\cite[pp.~2392]{sriperumbudur2011universality}.

The next result shows that given any weight $\omega(\wh z_i)$, there exists a matrix $\wh \Omega$ satisfying Assumption~\ref{a:assumption}(ii). Notice that it is always possible to choose the numbers $d_k$ satisfying the specified conditions below.
\begin{lemma}[Existence of $\wh \Omega$] \label{lemma:wellpose}
  Given any nonnegative weights $\omega(\wh z_i)$ for $i =1,\ldots, N$, let $A$ be a symmetric matrix with all entries zero except for those on the first row and first column, and on the diagonal as follows
  \[
        A \Let  \begin{bmatrix}
             d_0 & \omega(\wh z_1) & \cdots & \omega(\wh z_N) \\
             \omega(\wh z_1) & d_1 & \cdots & 0 \\
             \vdots  & \vdots &  \ddots & \vdots \\
             \omega(\wh z_N) & 0 & \cdots & d_N
         \end{bmatrix}.
     \]
Then $A$ is positive definite and nonnegative if $d_k$ are chosen such that  $d_0 >0$, $d_1 \Delta_1 > \omega(\wh z_1)^2$, 
     and  $d_k \Delta_k > d_1 \ldots d_{k-1} \omega(\wh z_k)^2$ for $k = 2, \ldots, N+1$, where $\Delta_k$ denotes the $k$th leading principal minor of the matrix $A$ which  is independent of $d_k,\ldots, d_{N+1}$.
\end{lemma}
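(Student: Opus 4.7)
The plan is to invoke Sylvester's criterion for positive definiteness of the symmetric matrix $A$, and to derive nonnegativity of every entry as a by-product of the sign constraints that the hypotheses impose on the diagonal entries $d_k$.

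First, I would exploit the arrowhead structure of $A$: any leading $k{\times}k$ principal submatrix of $A$ is itself an arrowhead matrix of the same form. Expanding along the last row and column (or equivalently via a Schur-complement argument with respect to the $1{\times}1$ block $d_{k-1}$), I would establish the recursion
\[
\Delta_{k+1} \;=\; d_k\,\Delta_k \;-\; d_1 d_2 \cdots d_{k-1}\,\omega(\wh z_k)^2, \qquad k=1,2,\ldots,N,
\]
with the empty-product convention $d_1\cdots d_0 := 1$ when $k=1$. This formula both confirms that $\Delta_k$ is independent of $d_k,\ldots, d_N$ (it is a polynomial in $d_0,\ldots, d_{k-1}$ and in the weights $\omega(\wh z_i)$) and reduces the remainder of the argument to a one-step recursion.

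Next, I would run a simultaneous induction on $k$ to establish that $d_{k-1}>0$ and $\Delta_k>0$. The base case $\Delta_1 = d_0 > 0$ is the first hypothesis. For the inductive step, the hypothesis $d_k\Delta_k > d_1\cdots d_{k-1}\,\omega(\wh z_k)^2 \ge 0$, combined with the already established $\Delta_k>0$, forces $d_k>0$; the recursion then yields $\Delta_{k+1}>0$. Iterating gives $\Delta_k>0$ for all $k=1,\ldots,N+1$ and $d_k>0$ for all $k=0,\ldots,N$.

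Finally, Sylvester's criterion applied to the sequence $\Delta_1,\ldots,\Delta_{N+1}$ yields positive definiteness of $A$, while nonnegativity of every entry follows immediately from $d_k>0$, $\omega(\wh z_i)\ge 0$, and the zeros filling the remaining positions. I do not anticipate any substantive obstacle; the only delicate step is verifying the determinant recursion, which the arrowhead structure renders routine via a direct cofactor expansion.
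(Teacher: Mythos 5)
Your proof is correct and follows the same route as the paper's (which simply asserts that the stated conditions force $\Delta_k>0$ for all $k$ and invokes Sylvester's criterion); your determinant recursion $\Delta_{k+1}=d_k\Delta_k-d_1\cdots d_{k-1}\,\omega(\wh z_k)^2$ is exactly the detail the paper leaves implicit, and your induction correctly extracts $d_k>0$ along the way, which gives entrywise nonnegativity. The only mismatch is notational: the matrix is $(N+1)\times(N+1)$ with diagonal $d_0,\dots,d_N$, so the paper's index range ``$k=2,\dots,N+1$'' (and its reference to $d_{N+1}$) is off by one, and your indexing $k=1,\dots,N$ producing $\Delta_2,\dots,\Delta_{N+1}$ is the correct reading.
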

\begin{proof}[Proof of Lemma~\ref{lemma:wellpose}]
It is clear that $A$ is symmetric and nonnegative. The conditions on $d_k$ also ensures that $\Delta_k>0$ for every $k=1, \ldots,N+1$. Thus $A$ is positive definite as well.
\end{proof}

For the given non-negative weights $\omega(\wh z_i)$, Lemma~\ref{lemma:wellpose} shows that there are infinitely many doubly non-negative matrices $\wh \Omega$ that  satisfy the condition $\wh \Omega_{0i} = \wh \Omega_{i0} = \omega(\wh z_i)$ for $i=1,\ldots,N$. 

In practice, the choice of $\wh \Omega$ can impact the performance of our robust estimate, and fine-tuning the elements of $\wh \Omega$ may improve the predictive power. For the scope of this paper, we 
aim to improve the robustness of NW and LLR estimators and therefore 
will mainly focus on the scenario  described in Lemma~\ref{lemma:kernel} where $\wh \Omega$ is given by a strictly positive definite kernel. 
\section{Implementation Details}\label{appsec:implementary}

\subsection{Gradient Information}

We provide here the gradient information for the convex problems~\eqref{eq:F-KL} and~\eqref{eq:F-W}. This information can be exploited to derive fast numerical routines to find the optimal dual solution $\gamma\opt$.

Denote by $g$ the objective function of problem~\eqref{eq:F-KL}. The gradient of $g$ is
    \[
        \nabla g(\dualvar) = \rho - \log\det( I - \gamma^{-1} \wh \Omega^\half V(\beta) \wh \Omega^\half) - \frac{1}{\dualvar} \inner{(I - \gamma^{-1} \wh \Omega^\half V(\beta) \wh \Omega^\half)^{-1}}{ \wh \Omega^\half V(\beta) \wh \Omega^\half}.
    \]

Let $h$ be the objective function of problem~\eqref{eq:F-W}. The gradient of $h$ is
\[
    \nabla h(\dualvar) = \rho - \inner{\wh \Omega}{(I - \dualvar\opt [\dualvar\opt I - V(\beta)]^{-1})^2}.
\]

\subsection{Implementation}

Following Lemma \ref{lemma:Vbeta}, for a fixed value of $\beta$, we can rewrite the \textit{low-rank} matrix $V(\beta)$ using the eigenvalue decomposition $V(\beta) = Q \Lambda Q^\top$, where $\Lambda \in \R^{2 \times 2}$ is a diagonal matrix and $Q \in \R^{(N+1) \times 2}$ is an orthonormal matrix. Therefore, we can leverage the \textit{Woodbury matrix identity} to implement the inverse in gradient computation (e.g., $\nabla g$ and $\nabla h$) efficiently. For examples, (i)
\[
(\dualvar I - V(\beta))^{-1} = \dualvar^{-1}I - \dualvar^{-2}Q\left(-\Lambda^{-1} + \dualvar^{-1}I_2 \right)^{-1} Q^{\top},
\]
where we have exploited the fact that $Q^\top Q = I_2$ with $I_2$ being the $2\times2$ identity matrix; and (ii)
\[
(I - \dualvar^{-1} \wh \Omega^{\frac{1}{2}} V(\beta) \wh \Omega^{\frac{1}{2}})^{-1} = I - \wh \Omega^{\frac{1}{2}}Q\left(-\gamma\Lambda^{-1} + Q^\top \wh \Omega Q \right)^{-1} Q^{\top} \wh \Omega^{\frac{1}{2}}.
\]
Therefore, the inverse in gradient (e.g., $\nabla g$ and $\nabla h$) is computed on $2 \times 2$ matrices.

\section{Additional Empirical Results}\label{appsec:empirical_results}

\paragraph{Details of datasets.} Table~\ref{tb:datasets} lists the detailed statistical characteristics of the datasets used in our experiments.

\begin{table*}[ht]
\vspace{-12pt}
\caption{Statistical characteristics of the datasets.}\label{tb:datasets}
\centering
\begin{tabular}{|c|c|c|c|c|c|c|c|c|}
\hline
           & \texttt{Abalone} & \texttt{Bank} & \texttt{CPU} & \texttt{Elevators} & \texttt{KIN} & \texttt{POL} & \texttt{Puma} & \texttt{Slice} \\ \hline
\#samples  & 4177 & 8192 & 8192 & 16599 & 40000 & 15000 & 8192 & 53500 \\ \hline
\#features & 7  & 32 & 21 & 17 & 8 & 26 & 32 & 384  \\ \hline
\end{tabular}
\end{table*}

\paragraph{Further results for the ideal case: no sample perturbation.} We illustrate further empirical results for \textit{all $8$ datasets} and with different nearest neighbor size $N$ (e.g., similar to Figure~\ref{fg:rmse_withoutATK} with the nearest neighbor size $N=50$).
\begin{itemize}
    \item For $N=50$, we illustrate results in Figure~\ref{fg:rmse_withoutATK_N50}.
    
    \item For $N=30$, we illustrate results in Figure~\ref{fg:rmse_withoutATK_N30}.
        
    \item For $N=20$, we illustrate results in Figure~\ref{fg:rmse_withoutATK_N20}.
    
    \item For $N=10$, we illustrate results in Figure~\ref{fg:rmse_withoutATK_N10}.
\end{itemize}

\begin{figure}[H]
  \begin{center}
    \includegraphics[width=0.95\textwidth]{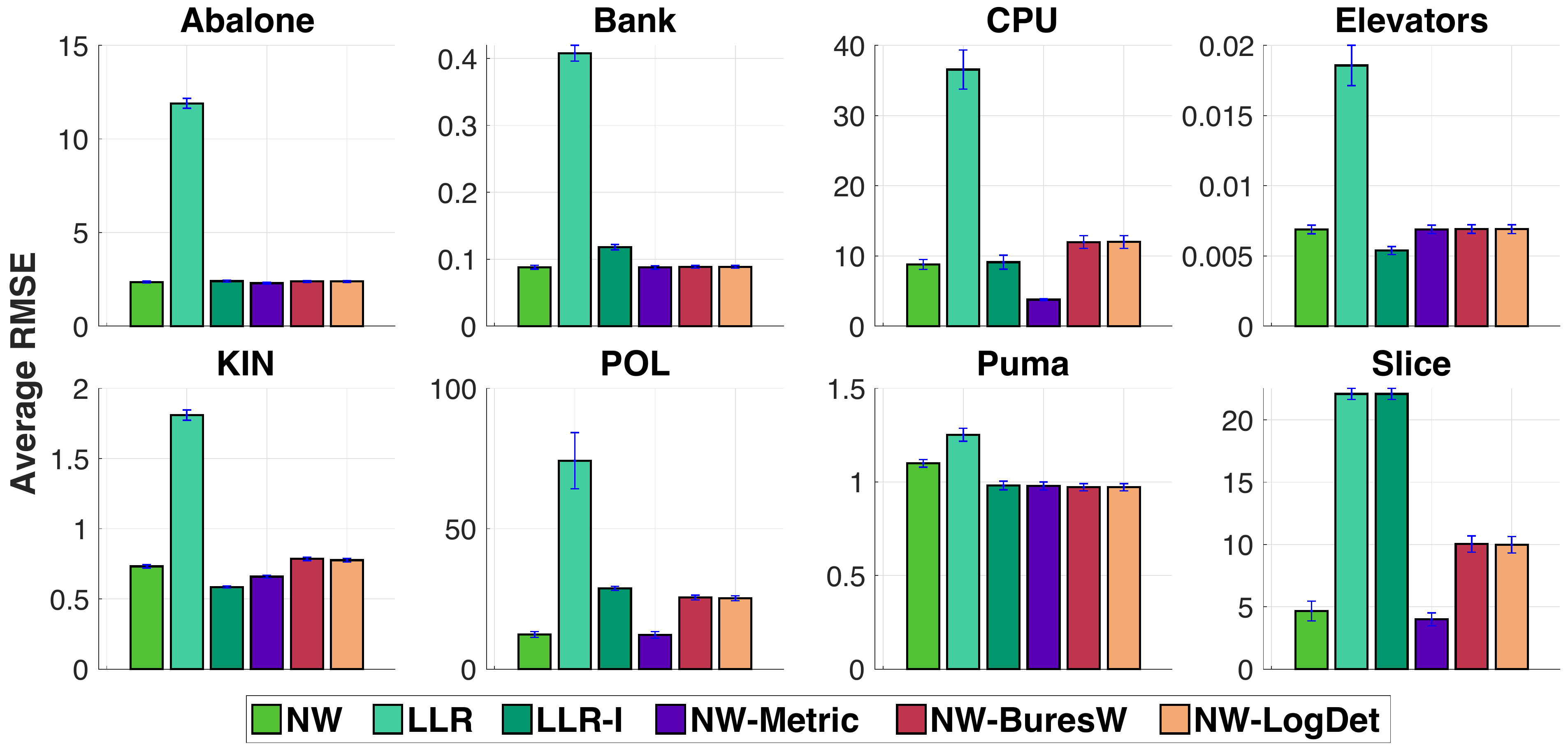}
    \end{center}
  \vspace{-12pt}
  \caption{Average RMSE for ideal case with no perturbation when $N=50$ for all $8$ datasets.}
  \label{fg:rmse_withoutATK_N50}
\end{figure}

\begin{figure}[H]
  \begin{center}
    \includegraphics[width=0.95\textwidth]{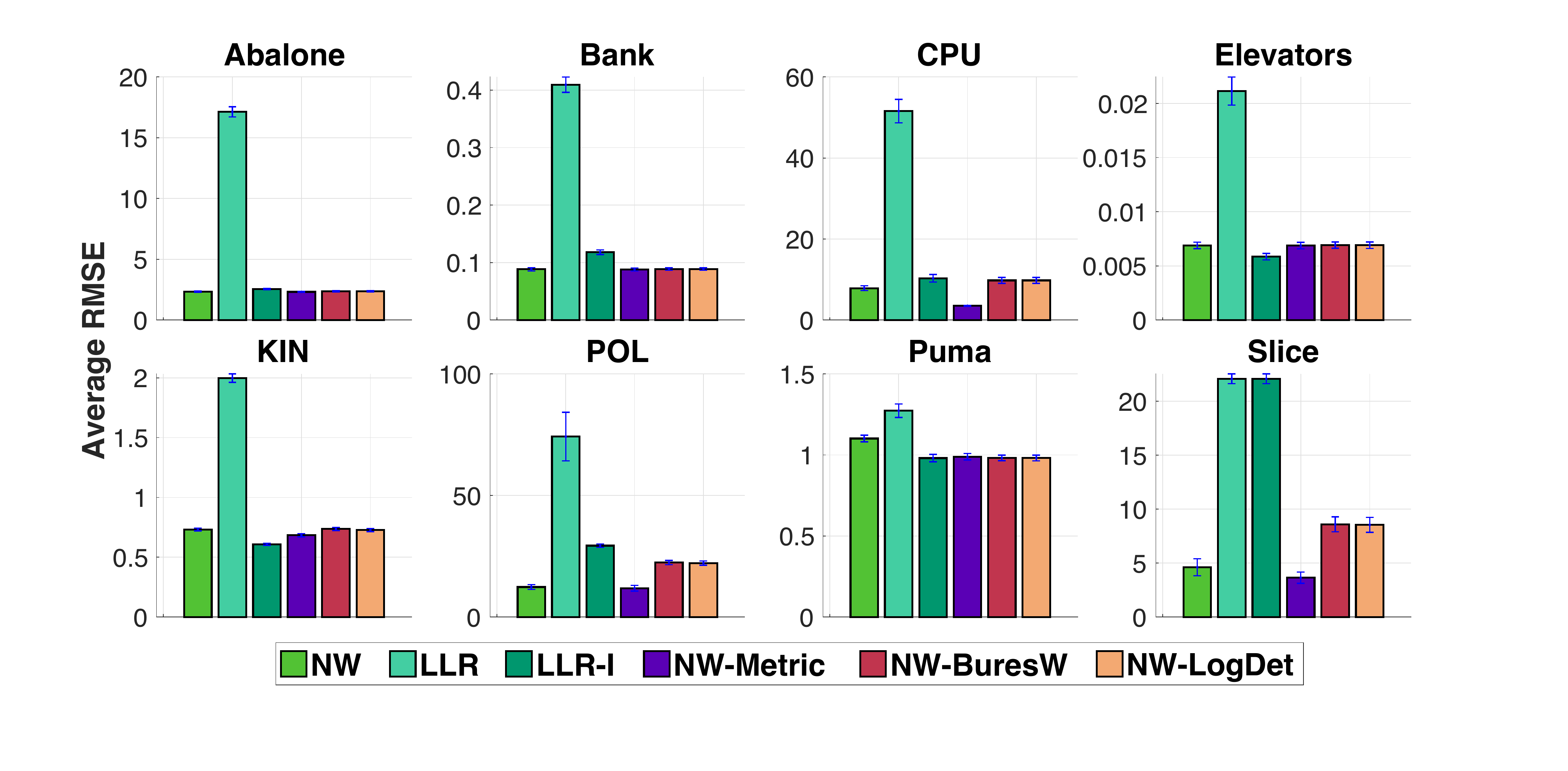}
    \end{center}
  \vspace{-12pt}
  \caption{Average RMSE for ideal case with no perturbation when $N=30$ for all $8$ datasets.}
  \label{fg:rmse_withoutATK_N30}
\end{figure}

\begin{figure}[H]
  \begin{center}
    \includegraphics[width=0.95\textwidth]{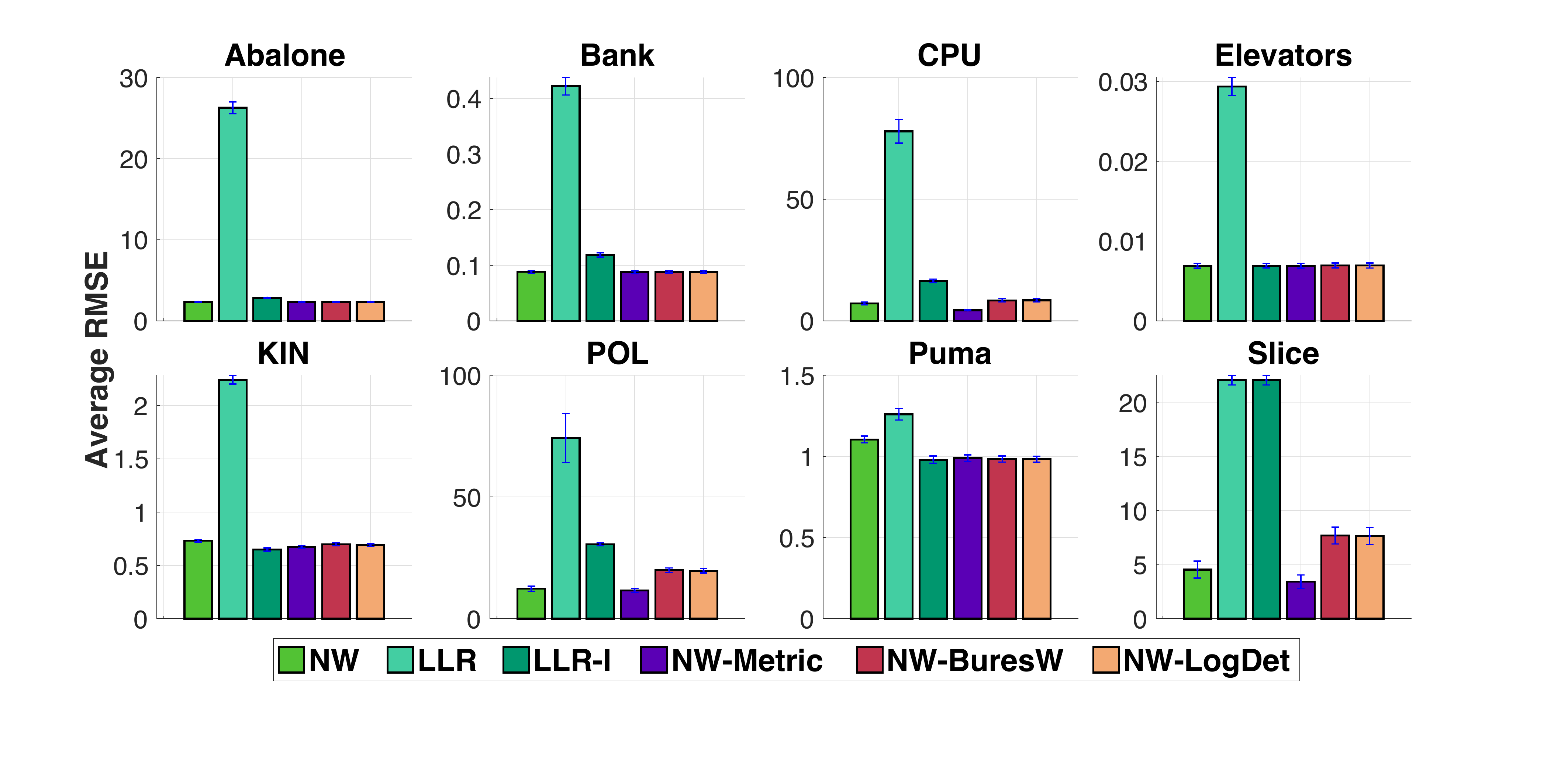}
    \end{center}
  \vspace{-12pt}
  \caption{Average RMSE for ideal case with no perturbation when $N=20$ for all $8$ datasets.}
  \label{fg:rmse_withoutATK_N20}
\end{figure}

\begin{figure}[H]
  \begin{center}
    \includegraphics[width=0.95\textwidth]{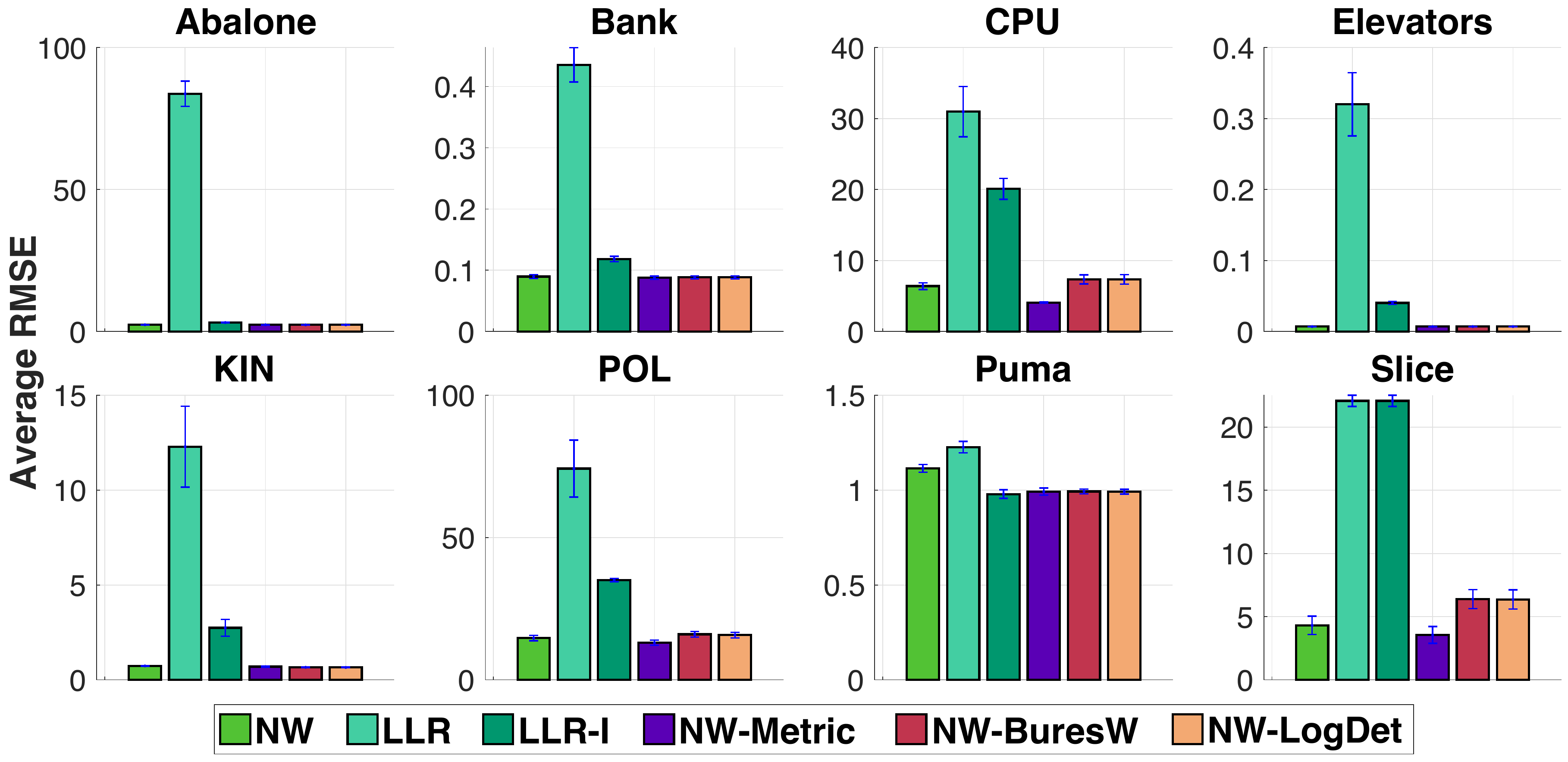}
    \end{center}
  \vspace{-12pt}
  \caption{Average RMSE for ideal case with no perturbation when $N=10$ for all $8$ datasets.}
  \label{fg:rmse_withoutATK_N10}
\end{figure}


\paragraph{Further results for the cases when training samples are perturbed.} We illustrate further empirical results for \textit{all $8$ datasets} with different nearest neighbor size $N$ (e.g., similar to Figure~\ref{fg:rmse_withATK} where the nearest neighborr size $N=50$).
\begin{itemize}
    \item For $N=50$, we illustrate results in Figure~\ref{fg:rmse_withATK_N50}.
    
    \item For $N=30$, we illustrate results in Figure~\ref{fg:rmse_withATK_N30}.
        
    \item For $N=20$, we illustrate results in Figure~\ref{fg:rmse_withATK_N20}.
    
    \item For $N=10$, we illustrate results in Figure~\ref{fg:rmse_withATK_N10}.
\end{itemize}

 \begin{figure}[H]
  \begin{center}
    \includegraphics[width=0.9\textwidth]{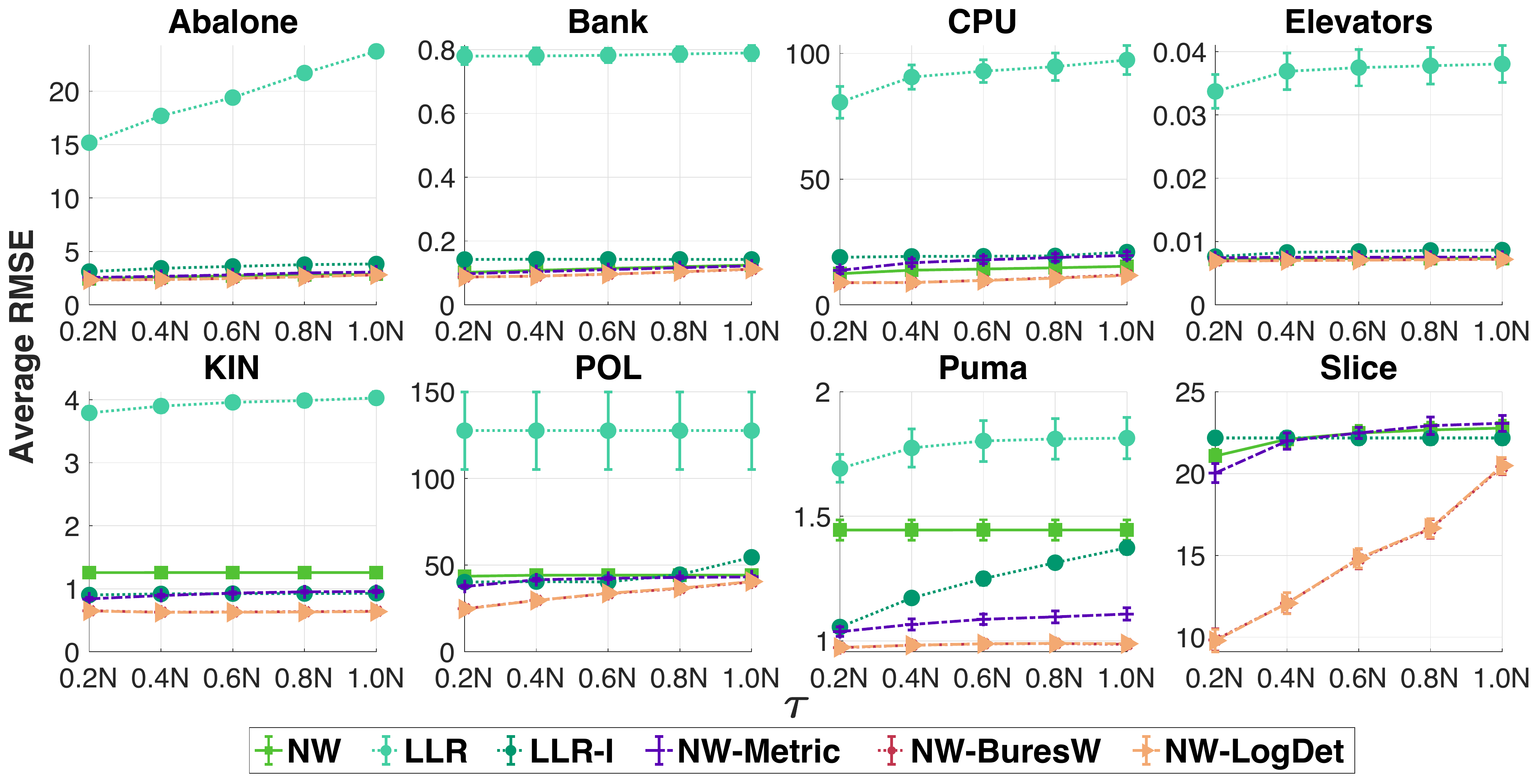}
  \end{center}
  \vspace{-12pt}
  \caption{RMSE for varying perturbation levels $\tau$ where $N=50$.} 
  \label{fg:rmse_withATK_N50}
\end{figure}

 \begin{figure}[H]
  \begin{center}
    \includegraphics[width=0.9\textwidth]{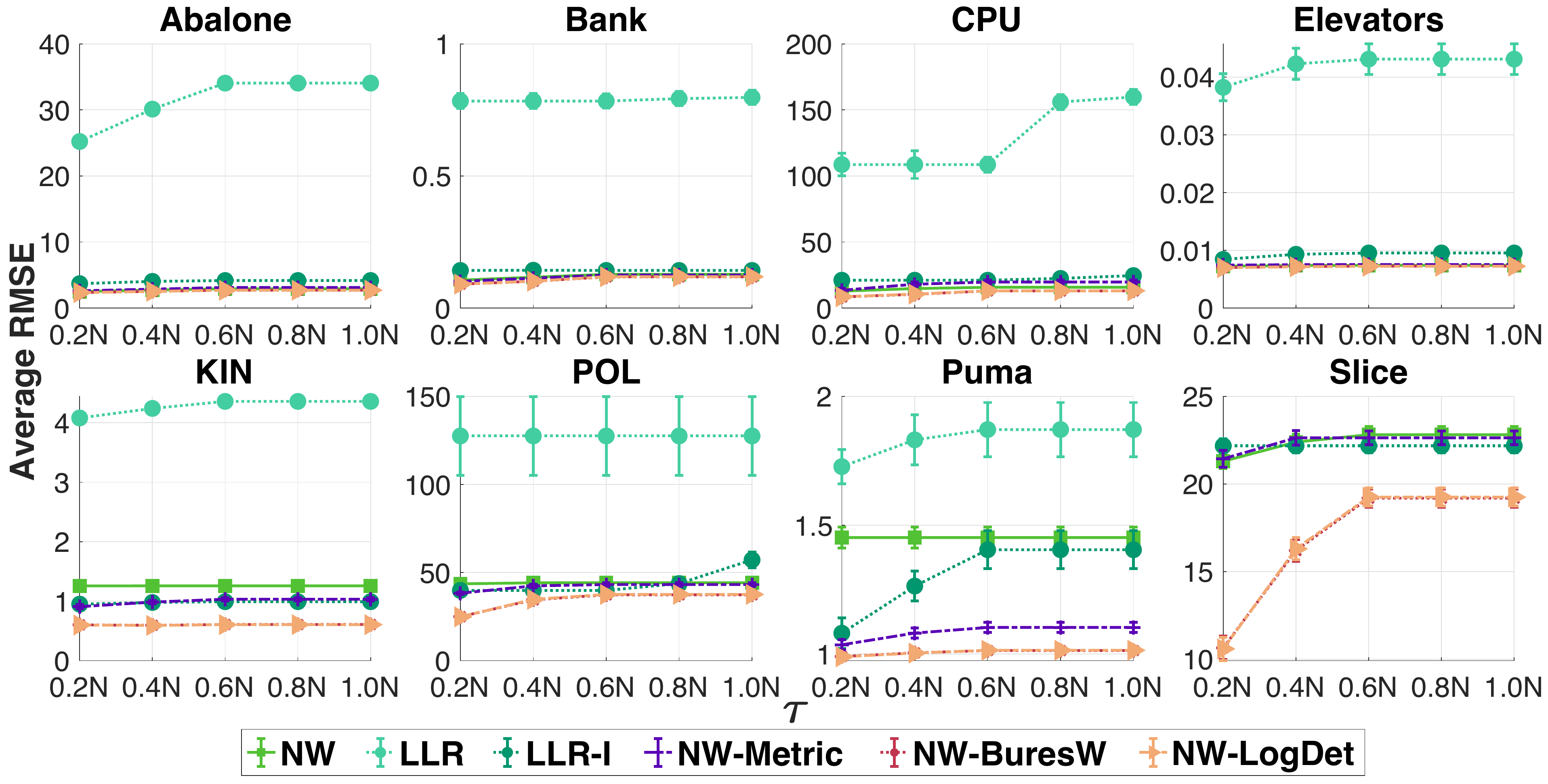}
  \end{center}
  \vspace{-12pt}
  \caption{RMSE for varying perturbation levels $\tau$ where $N=30$.} 
  \label{fg:rmse_withATK_N30}
\end{figure}

 \begin{figure}[H]
  \begin{center}
    \includegraphics[width=0.9\textwidth]{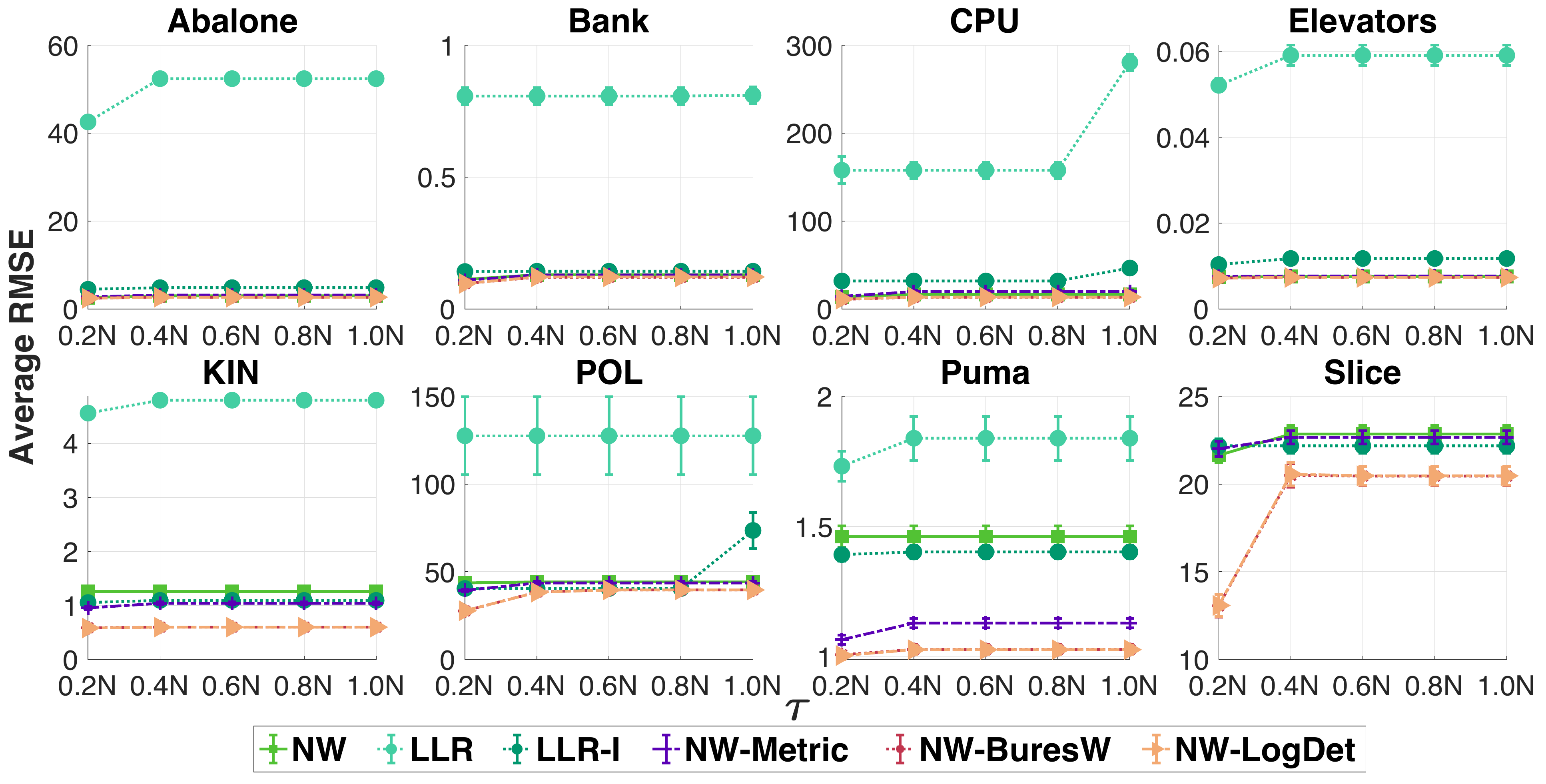}
  \end{center}
  \vspace{-12pt}
  \caption{RMSE for varying perturbation levels $\tau$ where $N=20$.} 
  \label{fg:rmse_withATK_N20}
\end{figure}

 \begin{figure}[H]
  \begin{center}
    \includegraphics[width=0.9\textwidth]{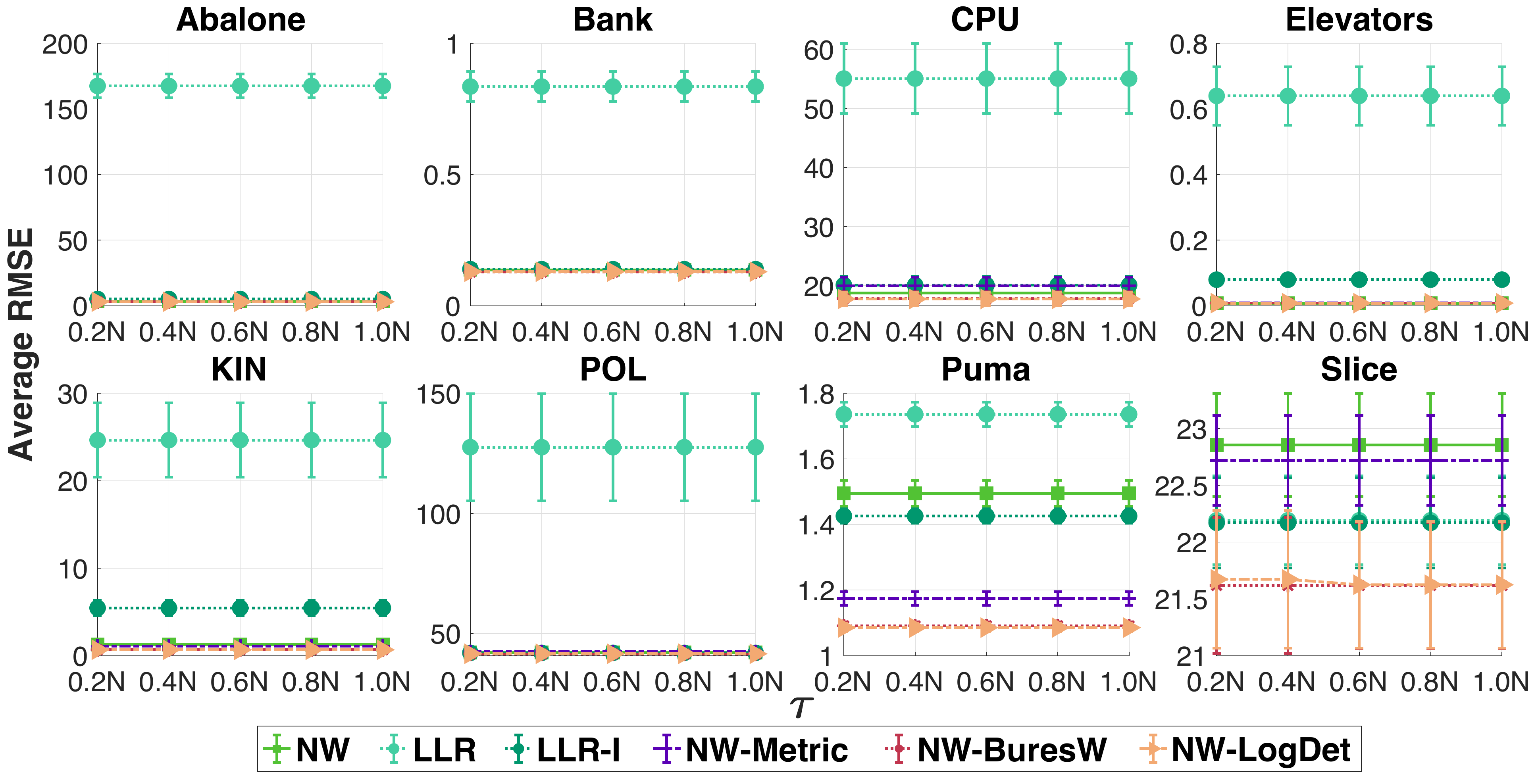}
  \end{center}
  \vspace{-12pt}
  \caption{RMSE for varying perturbation levels $\tau$ where $N=10$.} 
  \label{fg:rmse_withATK_N10}
\end{figure}


\paragraph{Further results for the effects of the uncertainty size $\rho$.} We illustrate further empirical results for different nearest neighbor size $N$ in \textit{all 8 datasets} (e.g., similar to Figure~\ref{fg:rmse_rho} where the nearest neighbor size $N=50$ and the perturbation $\tau=N$). Note that when $\rho=0$, the reweighting schemes are equivalent to the vanilla NW estimator.

\begin{itemize}
    \item For $N=50$, we illustrate results for NW-LogDet and NW-BuresW in Figure~\ref{fg:rmse_rho_LogDet_NN50} and Figure~\ref{fg:rmse_rho_BuresW_NN50} respectively.
    
     \item For $N=30$, we illustrate results for NW-LogDet and NW-BuresW in Figure~\ref{fg:rmse_rho_LogDet_NN30} and Figure~\ref{fg:rmse_rho_BuresW_NN30} respectively.
        
    \item For $N=20$, we illustrate results for NW-LogDet and NW-BuresW in Figure~\ref{fg:rmse_rho_LogDet_NN20} and Figure~\ref{fg:rmse_rho_BuresW_NN20} respectively.
    
    \item For $N=10$, we illustrate results for NW-LogDet and NW-BuresW in Figure~\ref{fg:rmse_rho_LogDet_NN10} and Figure~\ref{fg:rmse_rho_BuresW_NN10} respectively.
\end{itemize}

 \begin{figure}[H]
  \begin{center}
    \includegraphics[width=0.9\textwidth]{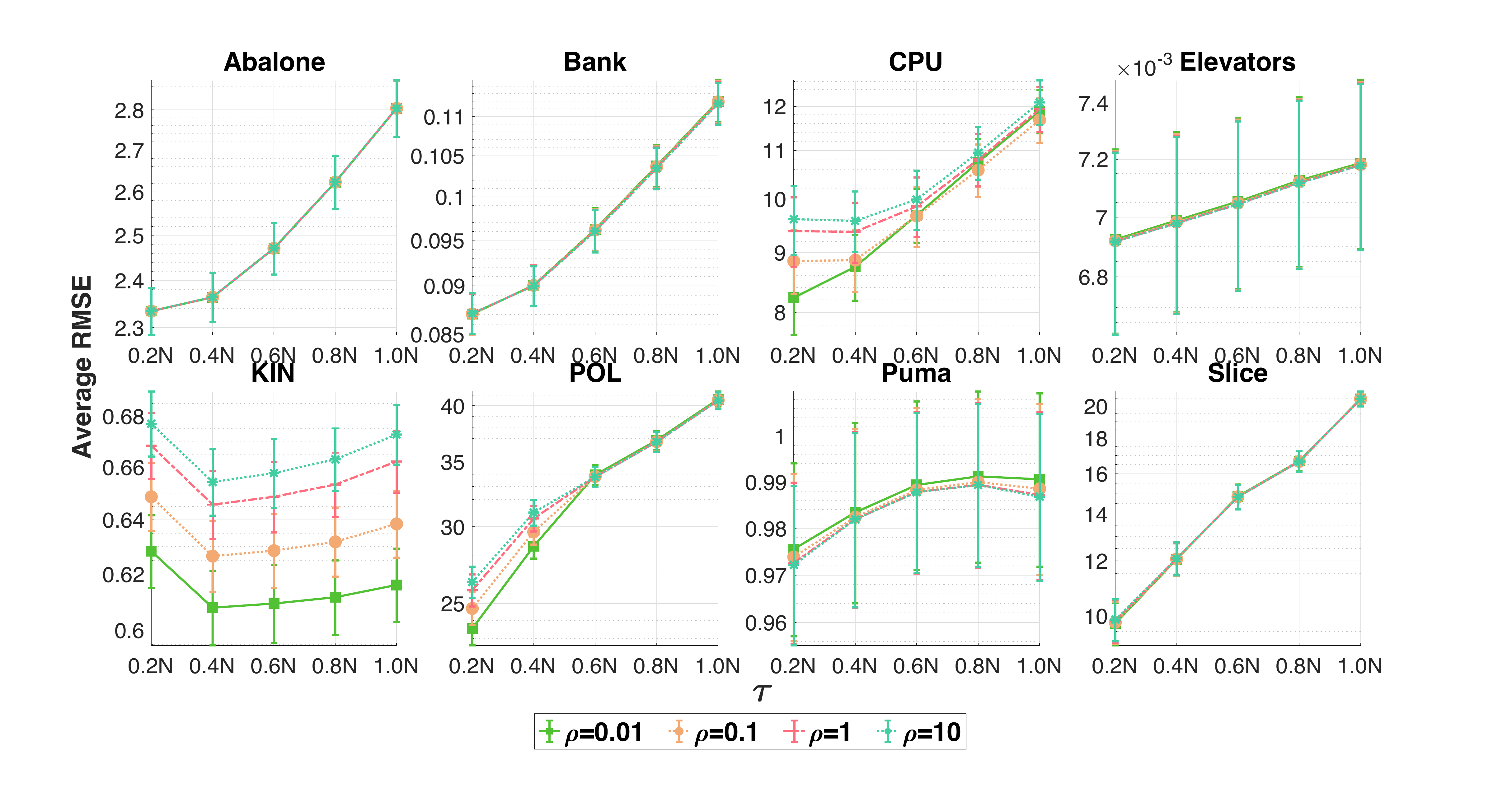}
  \end{center}
  \vspace{-12pt}
  \caption{Effects of the uncertainty size $\rho$ on RMSE for NW-LogDet when $N=50$.}
  \label{fg:rmse_rho_LogDet_NN50}
\end{figure}

 \begin{figure}[H]
  \begin{center}
    \includegraphics[width=0.9\textwidth]{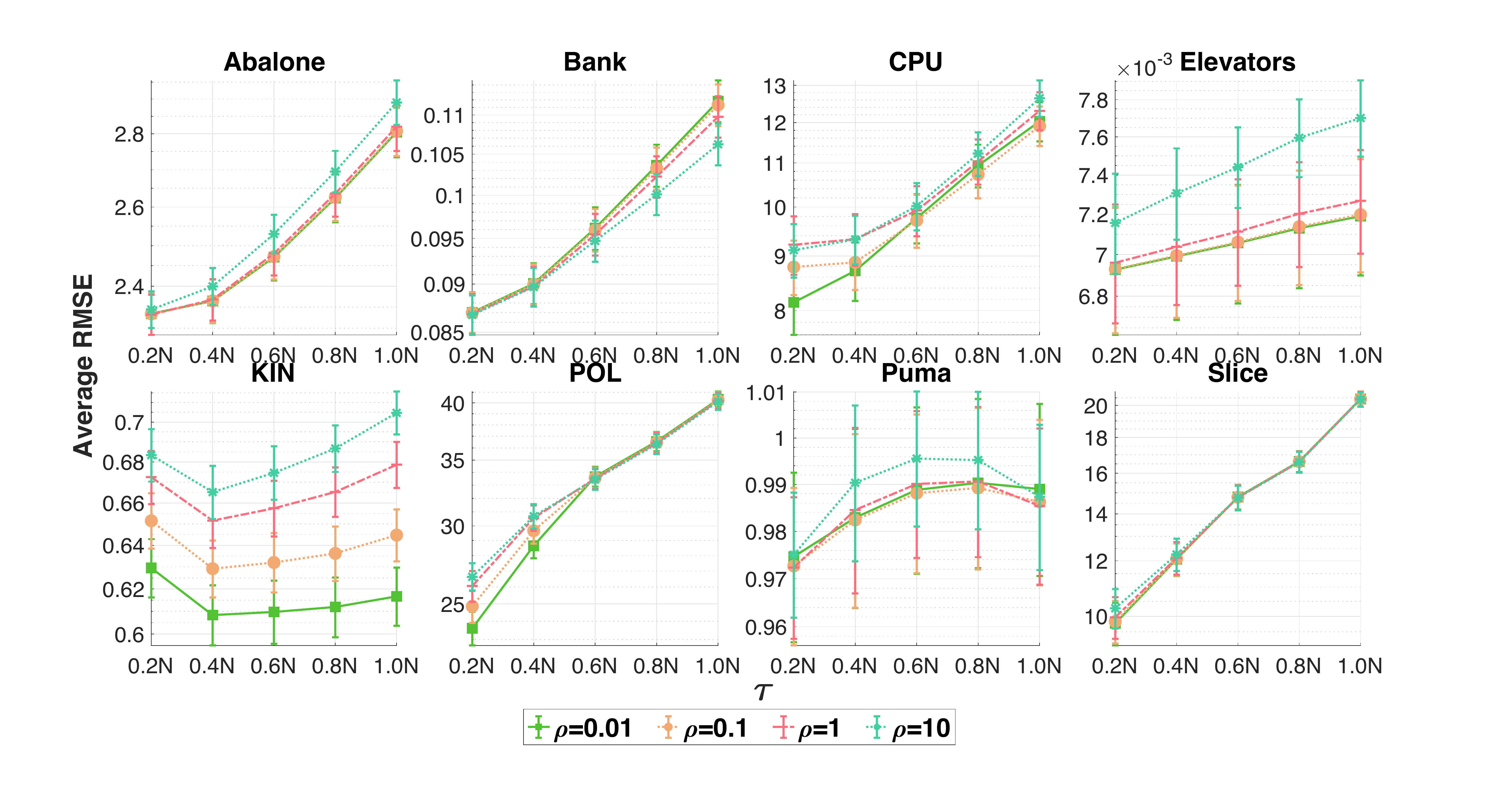}
  \end{center}
  \vspace{-12pt}
  \caption{Effects of the uncertainty size $\rho$ on RMSE for NW-BuresW when $N=50$.}
  \label{fg:rmse_rho_BuresW_NN50}
  \vspace{-8pt}
\end{figure}

 \begin{figure}[H]
 \vspace{-6pt}
  \begin{center}
    \includegraphics[width=0.9\textwidth]{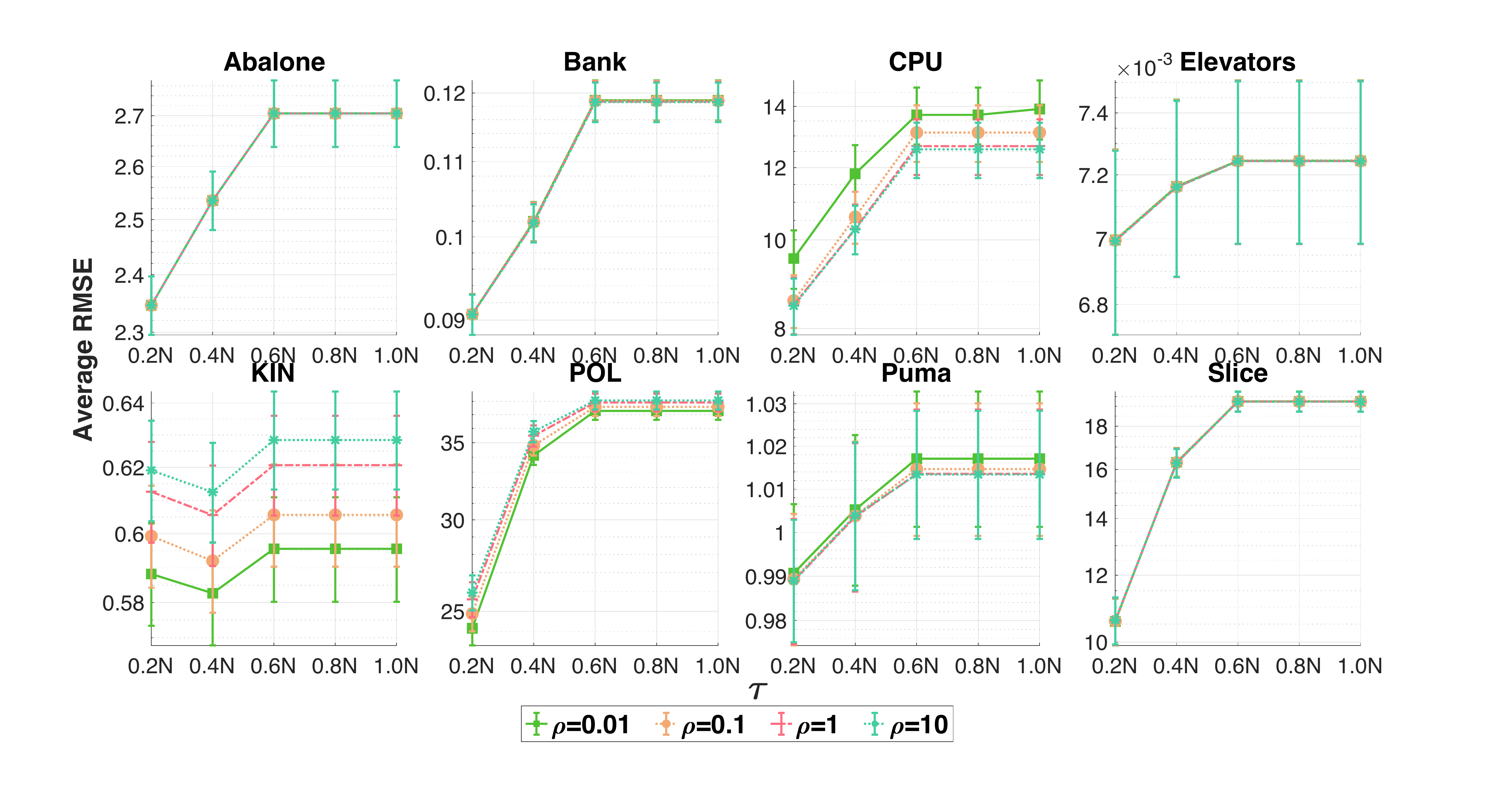}
  \end{center}
  \vspace{-12pt}
  \caption{Effects of the uncertainty size $\rho$ on RMSE for NW-LogDet when $N=30$.}
  \label{fg:rmse_rho_LogDet_NN30}
  \vspace{-8pt}
\end{figure}

 \begin{figure}[H]
 \vspace{-6pt}
  \begin{center}
    \includegraphics[width=0.9\textwidth]{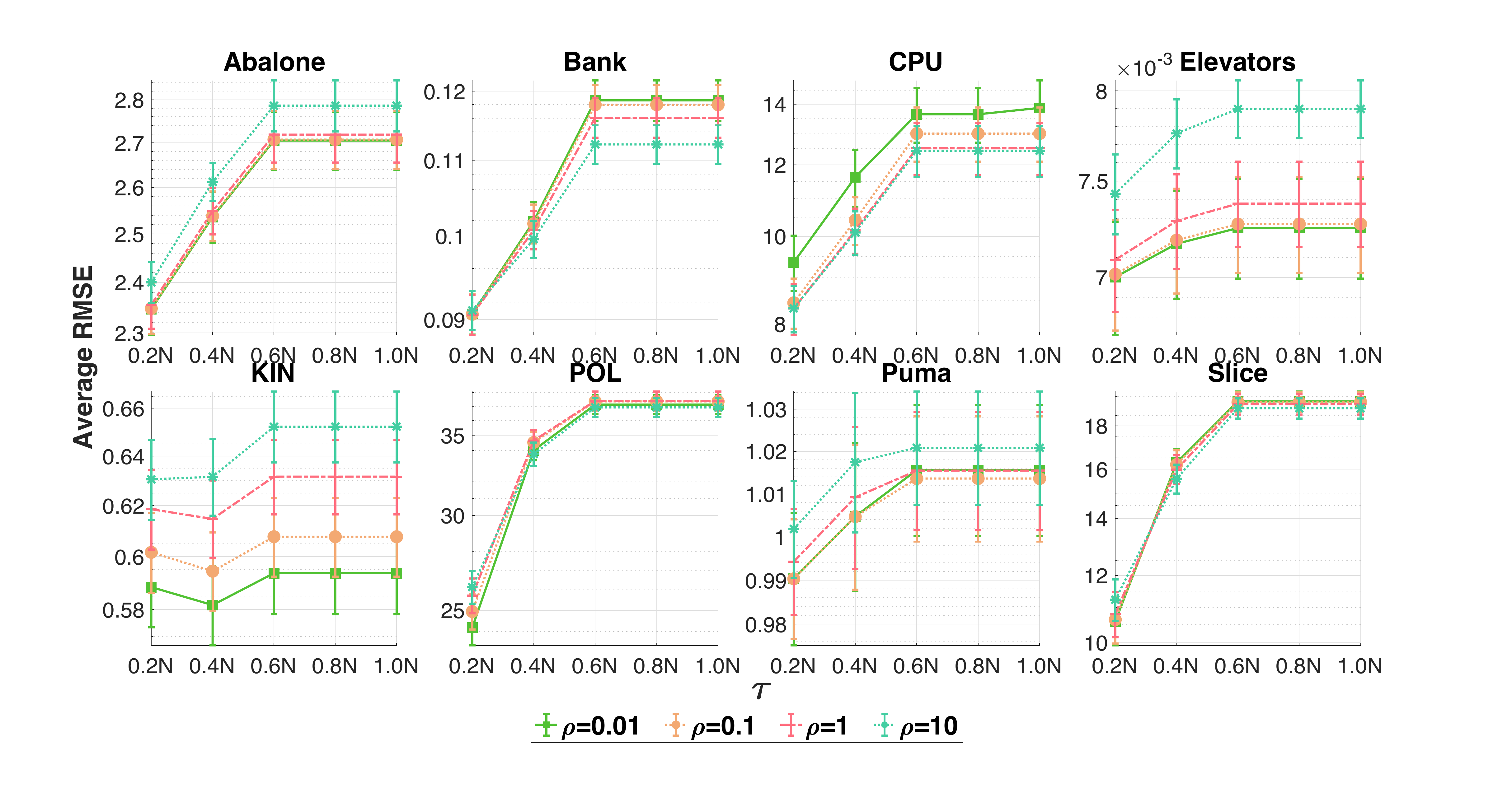}
  \end{center}
  \vspace{-12pt}
  \caption{Effects of the uncertainty size $\rho$ on RMSE for NW-BuresW when $N=30$.}
  \label{fg:rmse_rho_BuresW_NN30}
  \vspace{-8pt}
\end{figure}

 \begin{figure}[H]
  \begin{center}
    \includegraphics[width=0.9\textwidth]{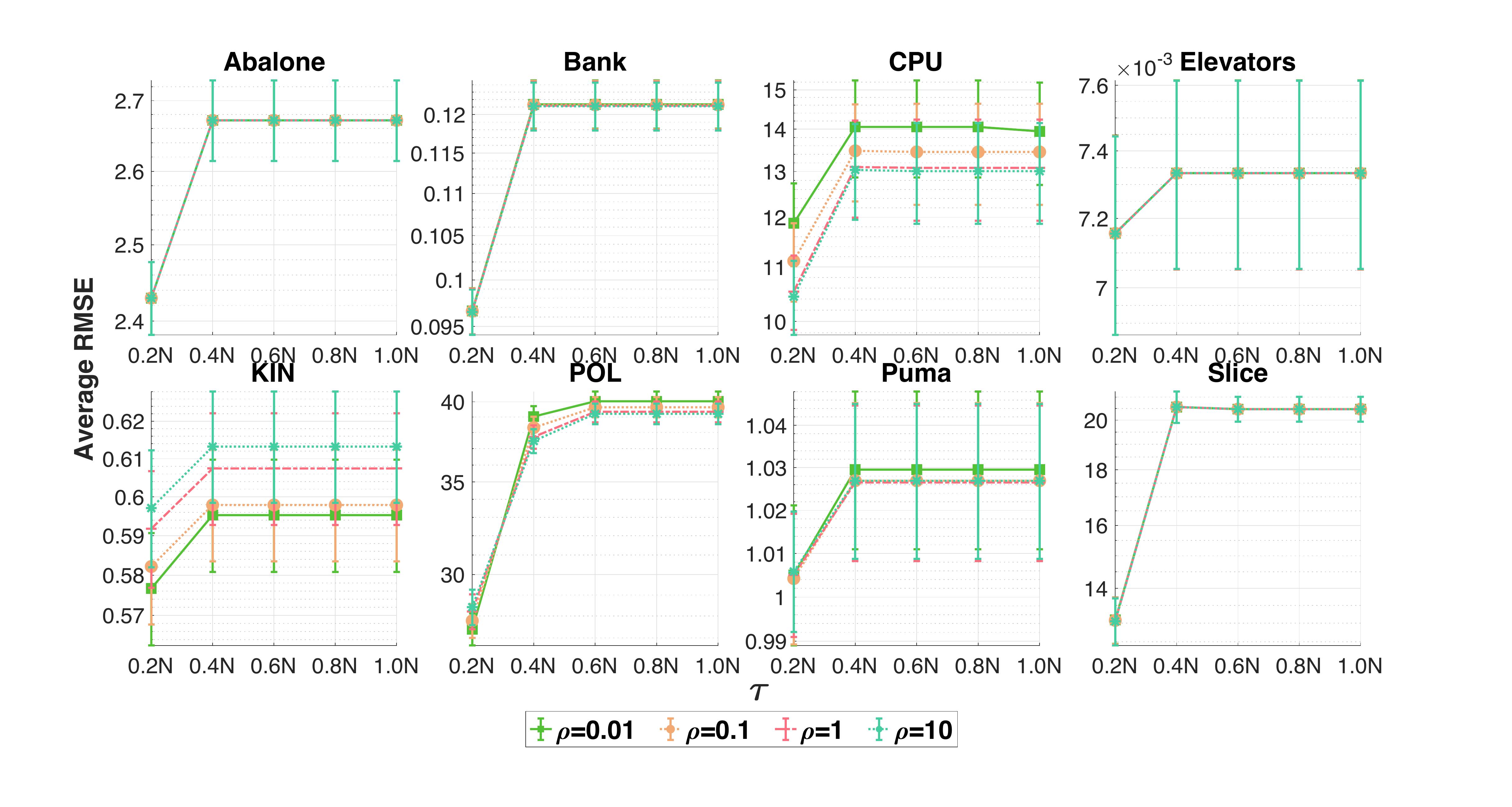}
  \end{center}
  \vspace{-12pt}
  \caption{Effects of the uncertainty size $\rho$ on RMSE for NW-LogDet when $N=20$.}
  \label{fg:rmse_rho_LogDet_NN20}
  \vspace{-8pt}
\end{figure}

 \begin{figure}[H]
 \vspace{-6pt}
  \begin{center}
    \includegraphics[width=0.9\textwidth]{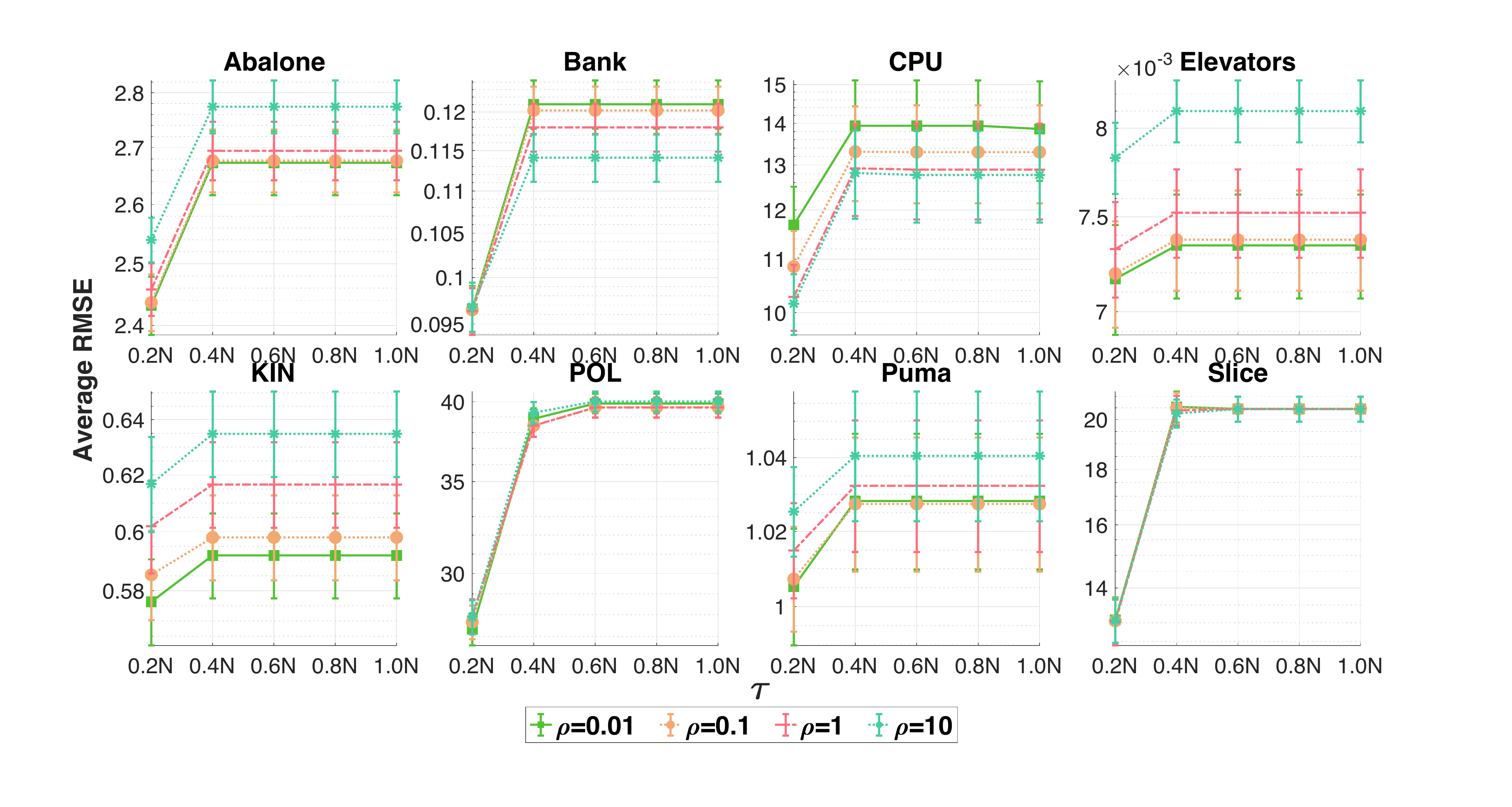}
  \end{center}
  \vspace{-12pt}
  \caption{Effects of the uncertainty size $\rho$ on RMSE for NW-BuresW when $N=20$.}
  \label{fg:rmse_rho_BuresW_NN20}
  \vspace{-8pt}
\end{figure}

 \begin{figure}[H]
 \vspace{-6pt}
  \begin{center}
    \includegraphics[width=0.9\textwidth]{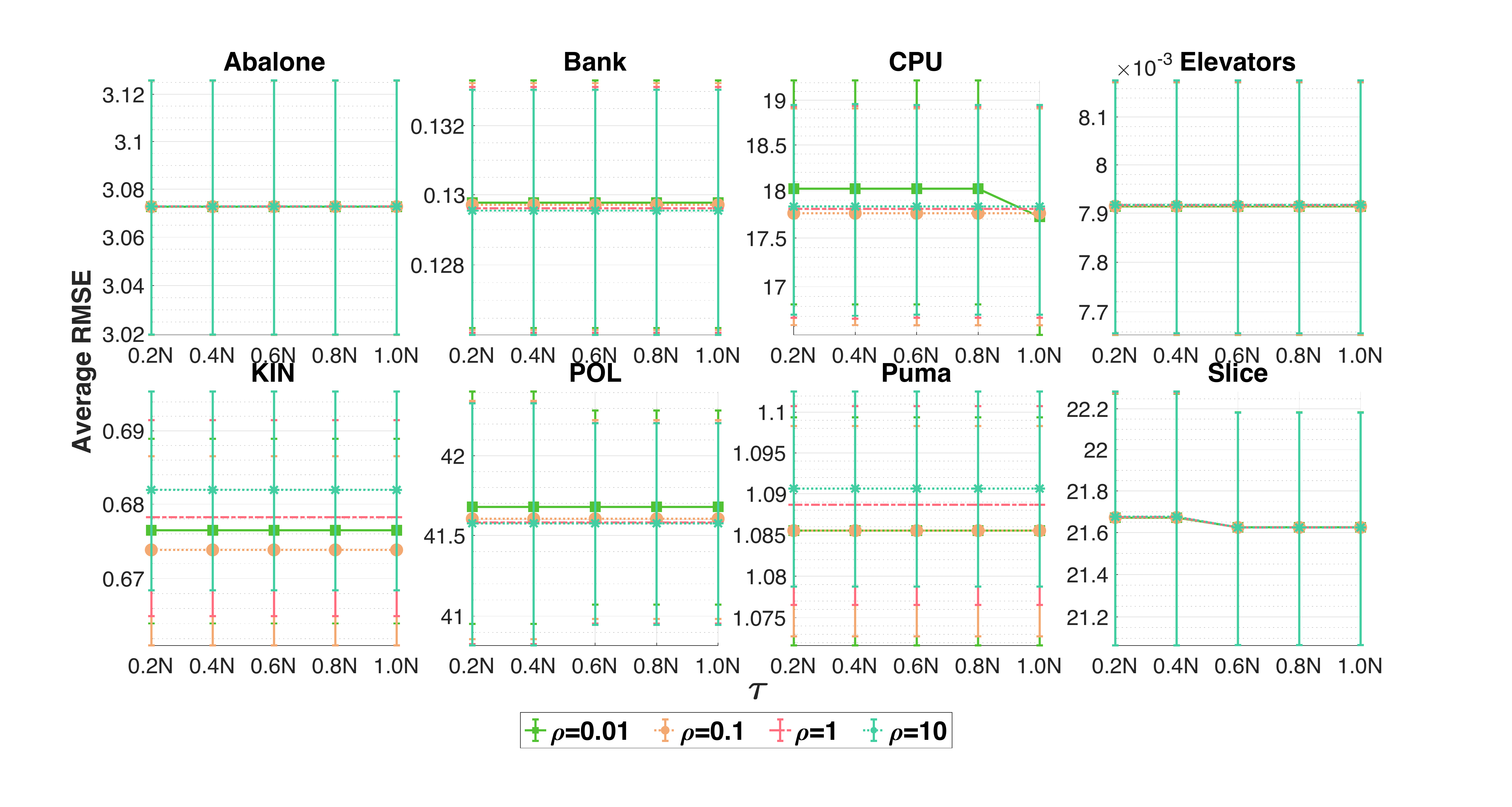}
  \end{center}
  \vspace{-12pt}
  \caption{Effects of the uncertainty size $\rho$ on RMSE for NW-LogDet when $N=10$.}
  \label{fg:rmse_rho_LogDet_NN10}
  \vspace{-8pt}
\end{figure}

 \begin{figure}[H]
  \begin{center}
    \includegraphics[width=0.9\textwidth]{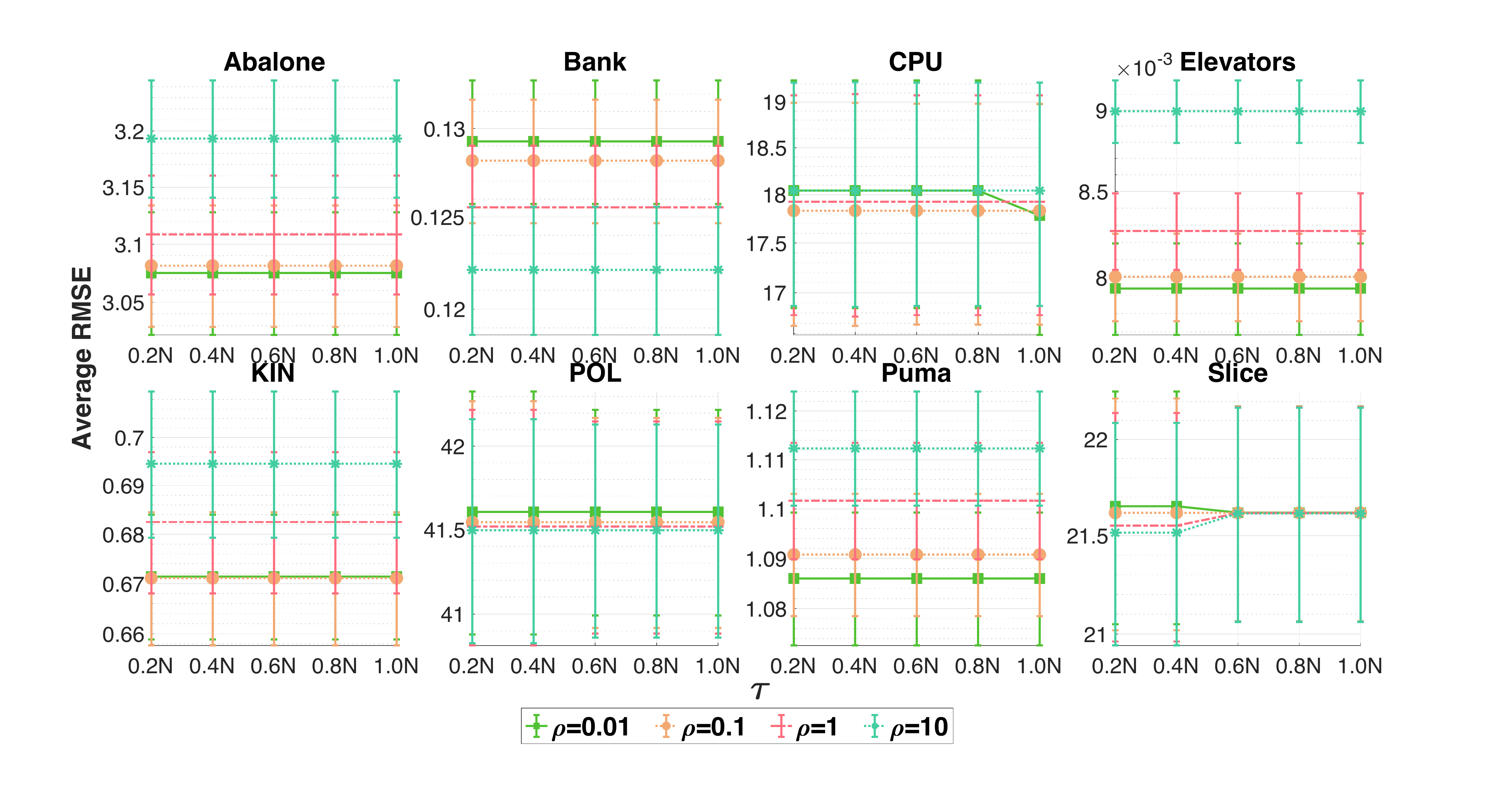}
  \end{center}
  \vspace{-12pt}
  \caption{Effects of the uncertainty size $\rho$ on RMSE for NW-BuresW when $N=10$.}
  \label{fg:rmse_rho_BuresW_NN10}
\end{figure}


\paragraph{Further results for the effects of the nearest neighbor size $N$.} We illustrate further empirical results for different uncertainty size $\rho$ in \textit{all 8 datasets} (e.g., similar to Figure~\ref{fg:rmse_NN} where $\rho=0.1$ in the \texttt{KIN} dataset).
\begin{itemize}
    \item For $\rho=10$, we illustrate results for NW-LogDet and NW-BuresW in Figure~\ref{fg:rmse_NN_LogDet_RHO10} and Figure~\ref{fg:rmse_NN_BuresW_RHO10} respectively.
    
    \item For $\rho=1$, we illustrate results for NW-LogDet and NW-BuresW in Figure~\ref{fg:rmse_NN_LogDet_RHO1} and Figure~\ref{fg:rmse_NN_BuresW_RHO1} respectively.
    
    \item For $\rho=0.1$, we illustrate results for NW-LogDet and NW-BuresW in Figure~\ref{fg:rmse_NN_LogDet_RHO01} and Figure~\ref{fg:rmse_NN_BuresW_RHO01} respectively.
    
    \item For $\rho=0.01$, we illustrate results for NW-LogDet and NW-BuresW in Figure~\ref{fg:rmse_NN_LogDet_RHO001} and Figure~\ref{fg:rmse_NN_BuresW_RHO001} respectively.
\end{itemize}

\begin{figure}[H]
 \vspace{-6pt}
  \begin{center}
    \includegraphics[width=0.9\textwidth]{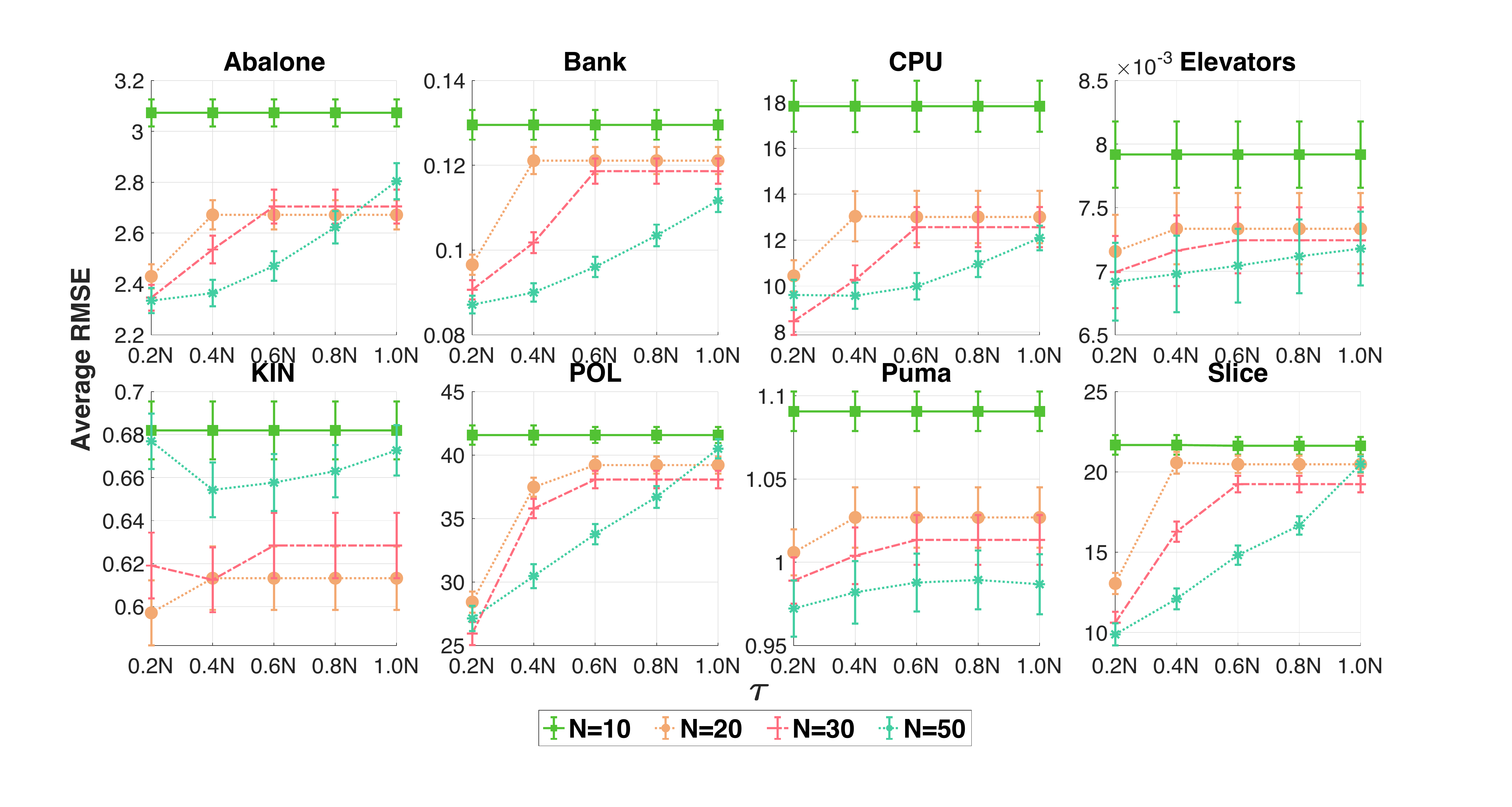}
  \end{center}
  \vspace{-12pt}
  \caption{Effects of the nearest neighbor size $N$ on RMSE for NW-LogDet when $\rho=10$.}
  \label{fg:rmse_NN_LogDet_RHO10}
  \vspace{-8pt}
\end{figure}

\begin{figure}[H]
 \vspace{-6pt}
  \begin{center}
    \includegraphics[width=0.9\textwidth]{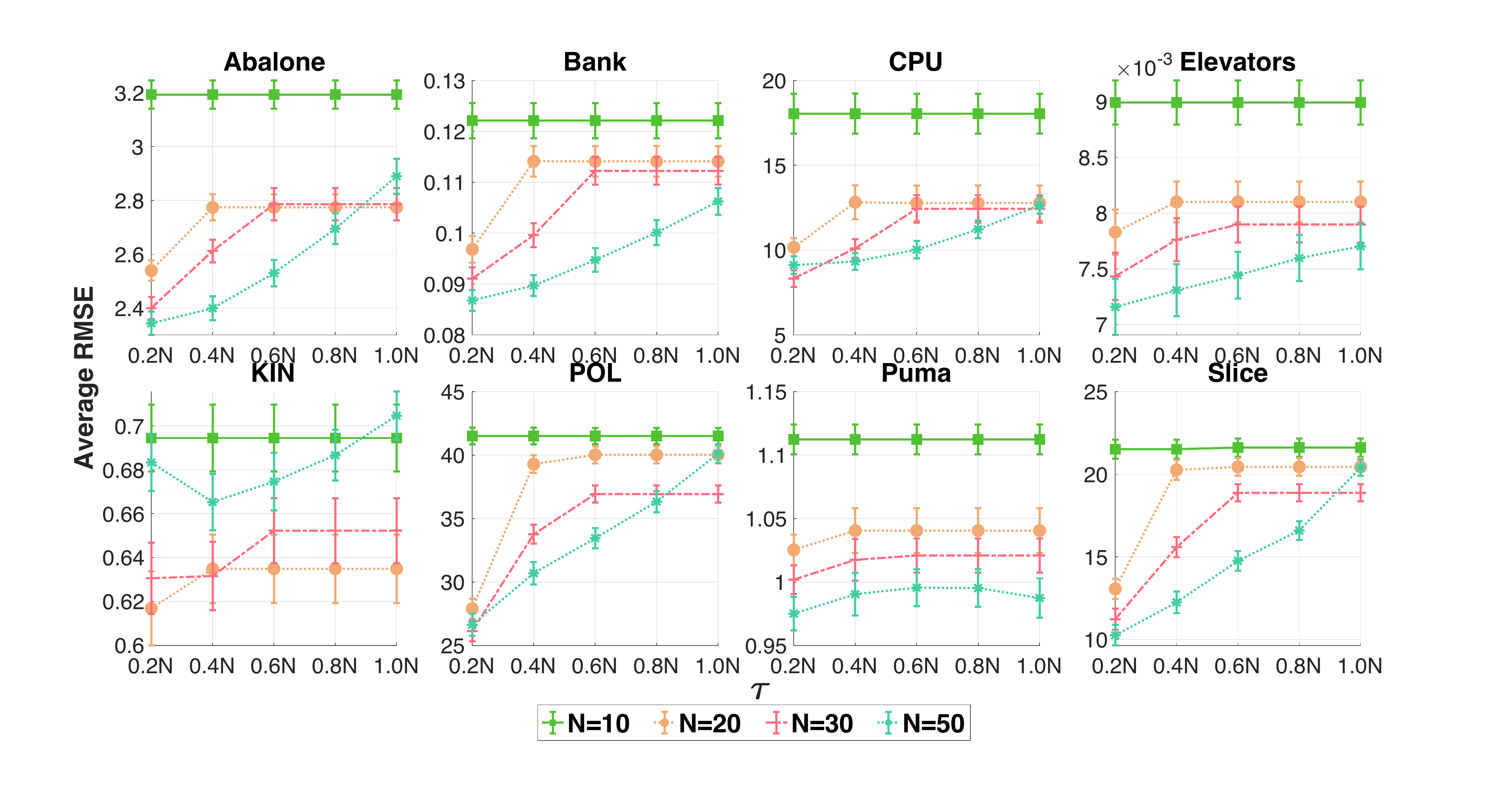}
  \end{center}
  \vspace{-12pt}
  \caption{Effects of the nearest neighbor size $N$ on RMSE for NW-BuresW when $\rho=10$.}
  \label{fg:rmse_NN_BuresW_RHO10}
  \vspace{-8pt}
\end{figure}

\begin{figure}[H]
 \vspace{-6pt}
  \begin{center}
    \includegraphics[width=0.9\textwidth]{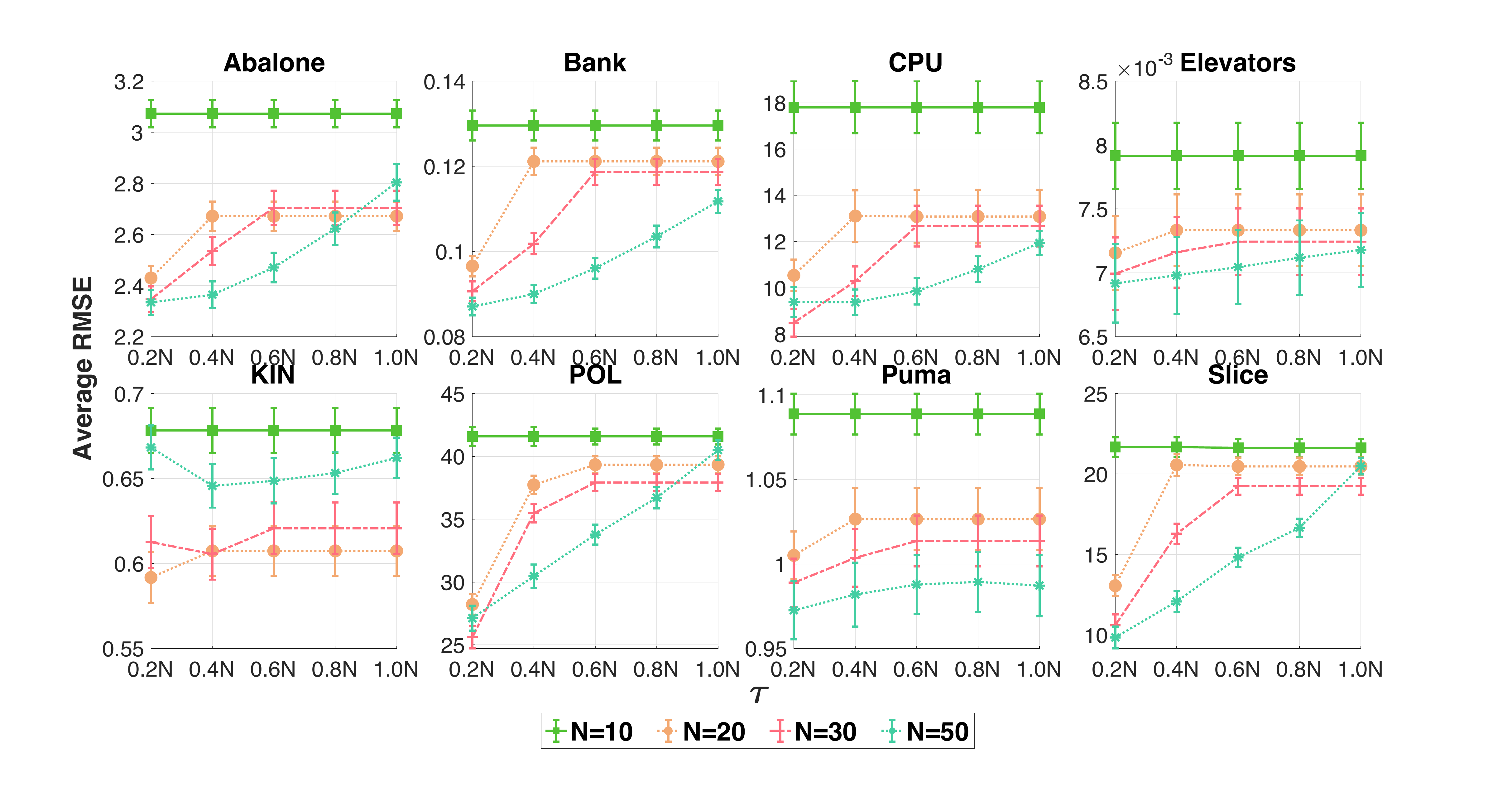}
  \end{center}
  \vspace{-12pt}
  \caption{Effects of the nearest neighbor size $N$ on RMSE for NW-LogDet when $\rho=1$.}
  \label{fg:rmse_NN_LogDet_RHO1}
  \vspace{-8pt}
\end{figure}

\begin{figure}[H]
 \vspace{-6pt}
  \begin{center}
    \includegraphics[width=0.9\textwidth]{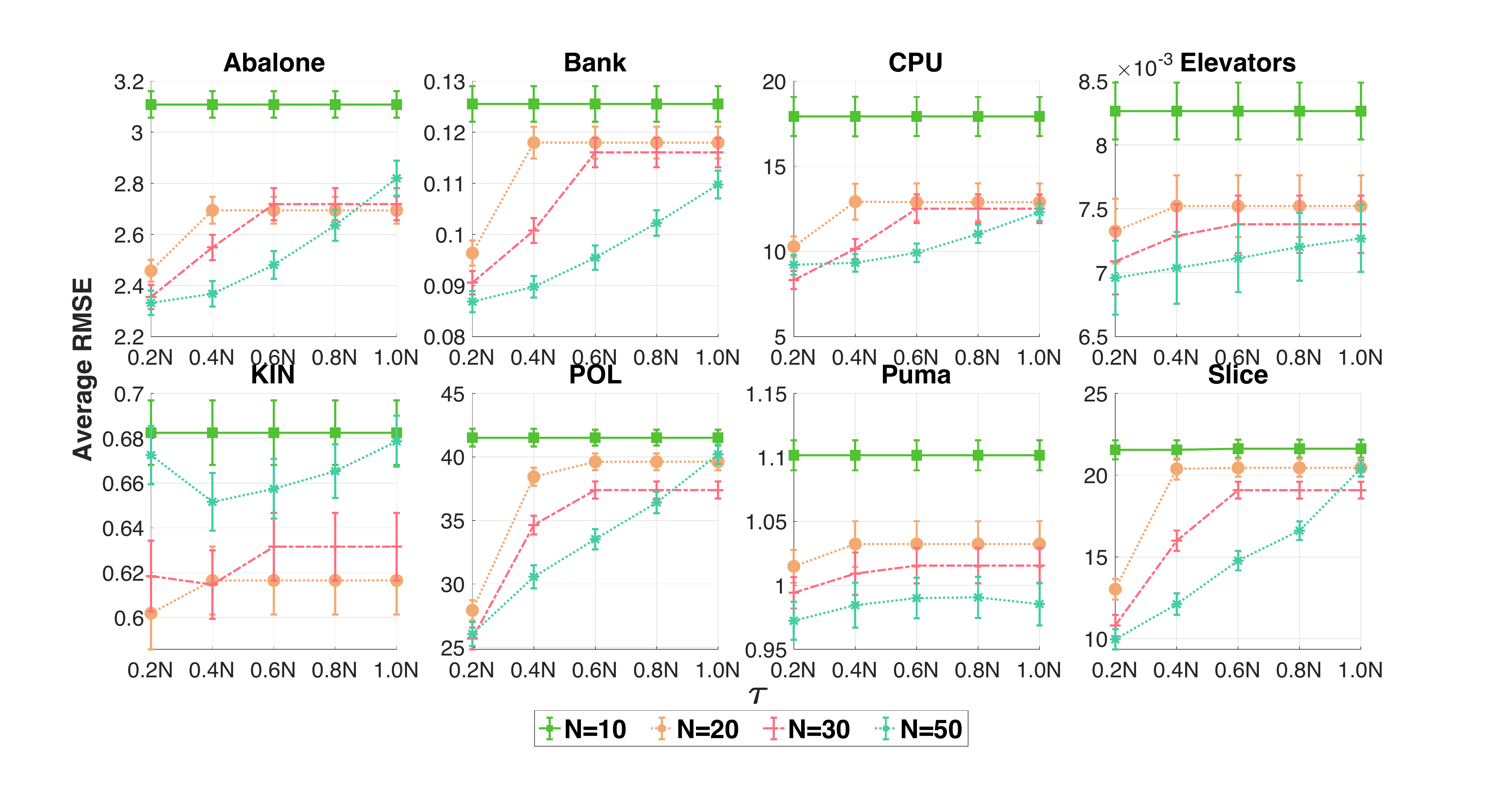}
  \end{center}
  \vspace{-12pt}
  \caption{Effects of the nearest neighbor size $N$ on RMSE for NW-BuresW when $\rho=1$.}
  \label{fg:rmse_NN_BuresW_RHO1}
  \vspace{-8pt}
\end{figure}

\begin{figure}[H]
 \vspace{-6pt}
  \begin{center}
    \includegraphics[width=0.9\textwidth]{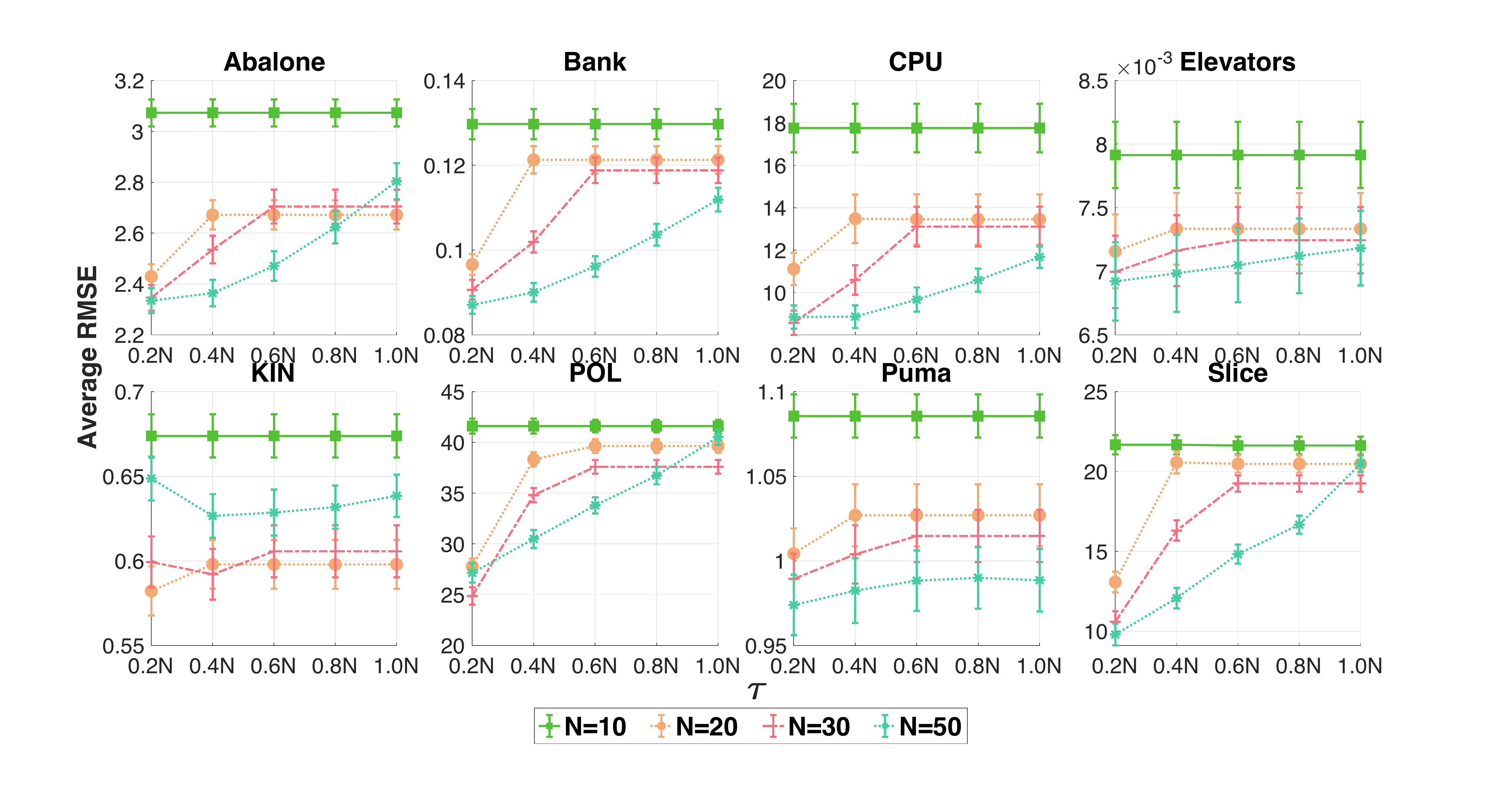}
  \end{center}
  \vspace{-12pt}
  \caption{Effects of the nearest neighbor size $N$ on RMSE for NW-LogDet when $\rho=0.1$.}
  \label{fg:rmse_NN_LogDet_RHO01}
  \vspace{-8pt}
\end{figure}

\begin{figure}[H]
 \vspace{-6pt}
  \begin{center}
    \includegraphics[width=0.9\textwidth]{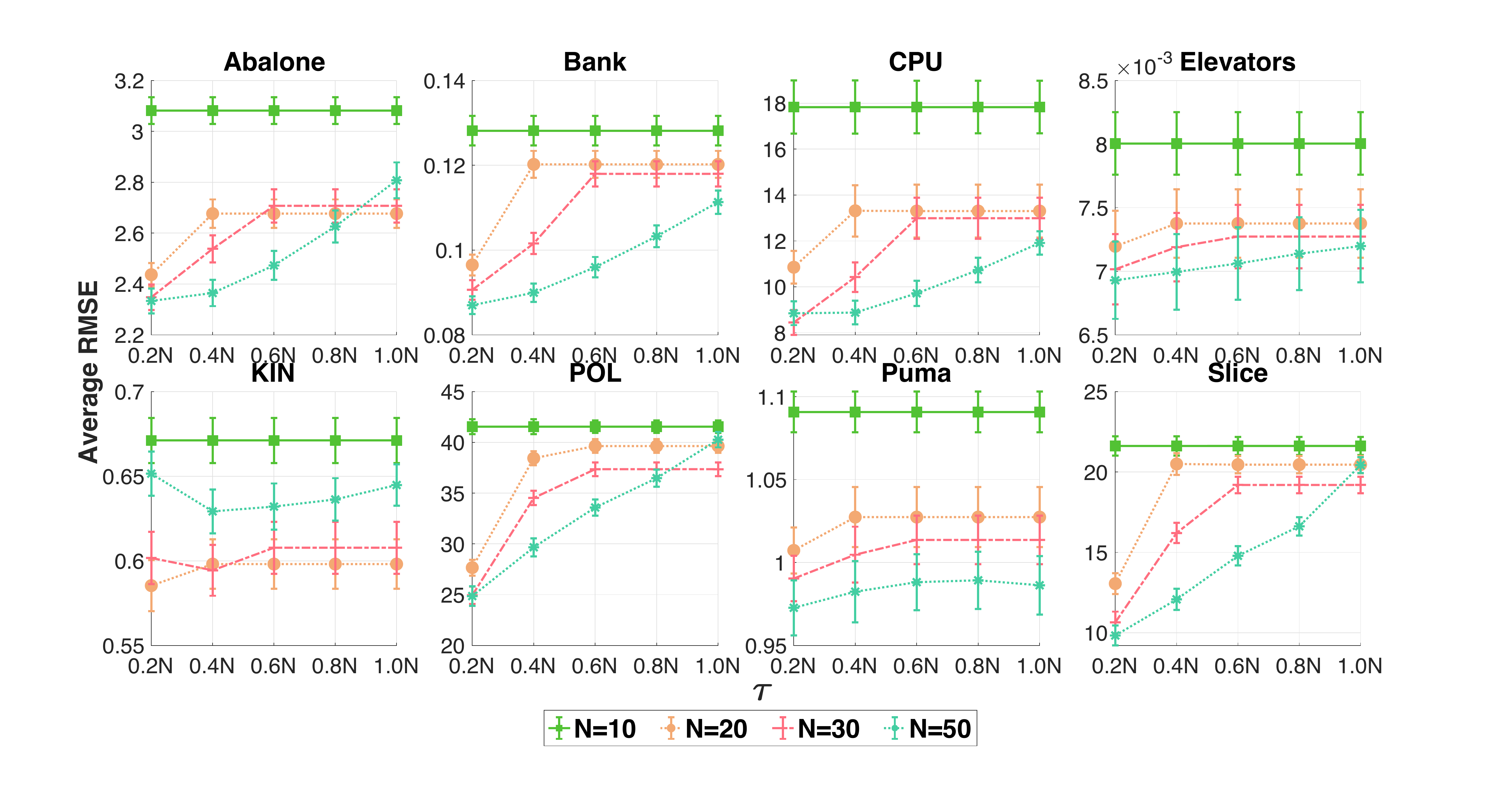}
  \end{center}
  \vspace{-12pt}
  \caption{Effects of the nearest neighbor size $N$ on RMSE for NW-BuresW when $\rho=0.1$.}
  \label{fg:rmse_NN_BuresW_RHO01}
  \vspace{-8pt}
\end{figure}

\begin{figure}[H]
 \vspace{-6pt}
  \begin{center}
    \includegraphics[width=0.9\textwidth]{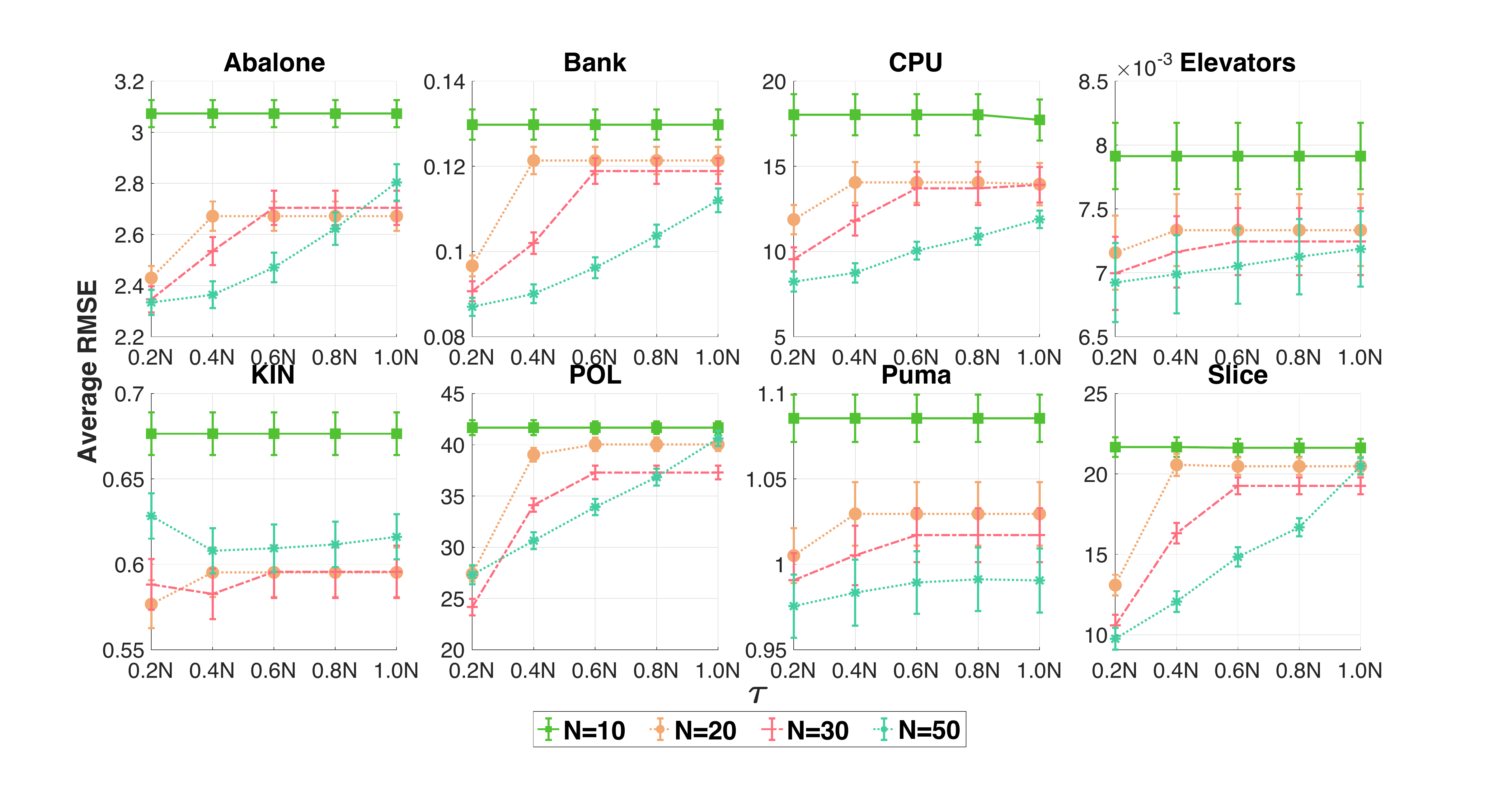}
  \end{center}
  \vspace{-12pt}
  \caption{Effects of the nearest neighbor size $N$ on RMSE for NW-LogDet when $\rho=0.01$.}
  \label{fg:rmse_NN_LogDet_RHO001}
  \vspace{-8pt}
\end{figure}

\begin{figure}[H]
 \vspace{-6pt}
  \begin{center}
    \includegraphics[width=0.9\textwidth]{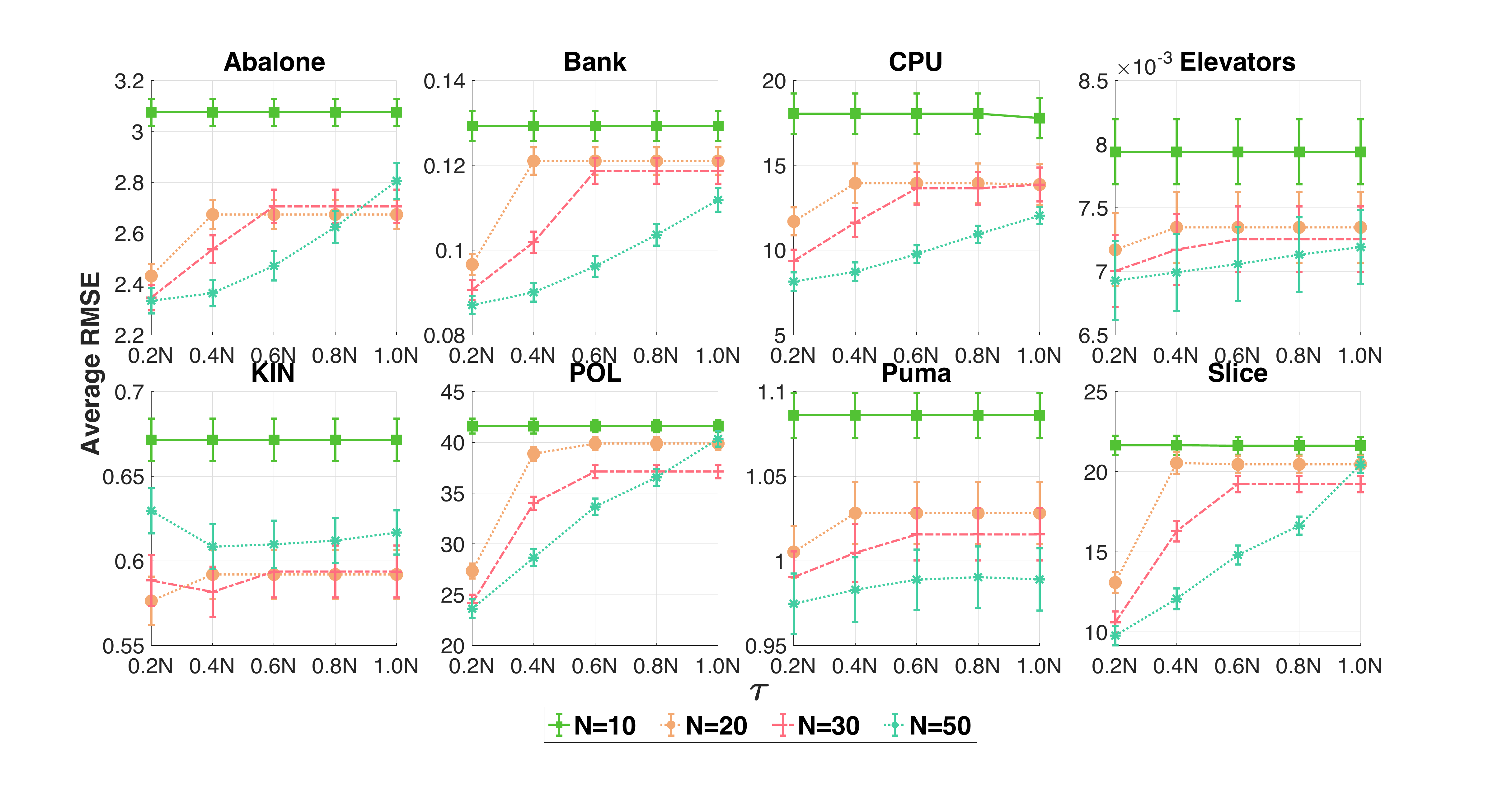}
  \end{center}
  \vspace{-12pt}
  \caption{Effects of the nearest neighbor size $N$ on RMSE for NW-BuresW when $\rho=0.01$.}
  \label{fg:rmse_NN_BuresW_RHO001}
  \vspace{-8pt}
\end{figure}


\paragraph{Further results for varying shifting w.r.t. $\kappa$.} We first illustrate corresponding results for NW-BuresW in Figure~\ref{fg:rmse_Kappa_BuresW}, similar as results in Figure~\ref{fg:rmse_Kappa_LogDet} for NW-LogDet.

We next illustrate further empirical results for different uncertainty size $\rho$ and different perturbation $\tau$ in the \texttt{KIN} dataset (e.g., similar to Figure~\ref{fg:rmse_Kappa_LogDet} and Figure~\ref{fg:rmse_Kappa_BuresW} where $\rho=0.1$ for the left plots and $\tau=0.2N$ for the right plots).

\begin{itemize}
    \item For NW-LogDet, we illustrate results on effects of perturbation intensity $\kappa$ for different perturbation proportion~$\tau$ in  Figure~\ref{fg:rmse_Kappa_LogDet_Tau_ALLRHO} and Figure~\ref{fg:rmse_Kappa_LogDet_RHO_ALLTau} respectively.
    
    \item For NW-BuresW, we illustrate results on effects of perturbation intensity $\kappa$ for different perturbation proportion~$\tau$ in  Figure~\ref{fg:rmse_Kappa_BuresW_Tau_ALLRHO} and Figure~\ref{fg:rmse_Kappa_BuresW_RHO_ALLTau} respectively.
\end{itemize}

\begin{figure}[H]
 \vspace{-10pt}
  \begin{center}
    \includegraphics[width=0.65\textwidth]{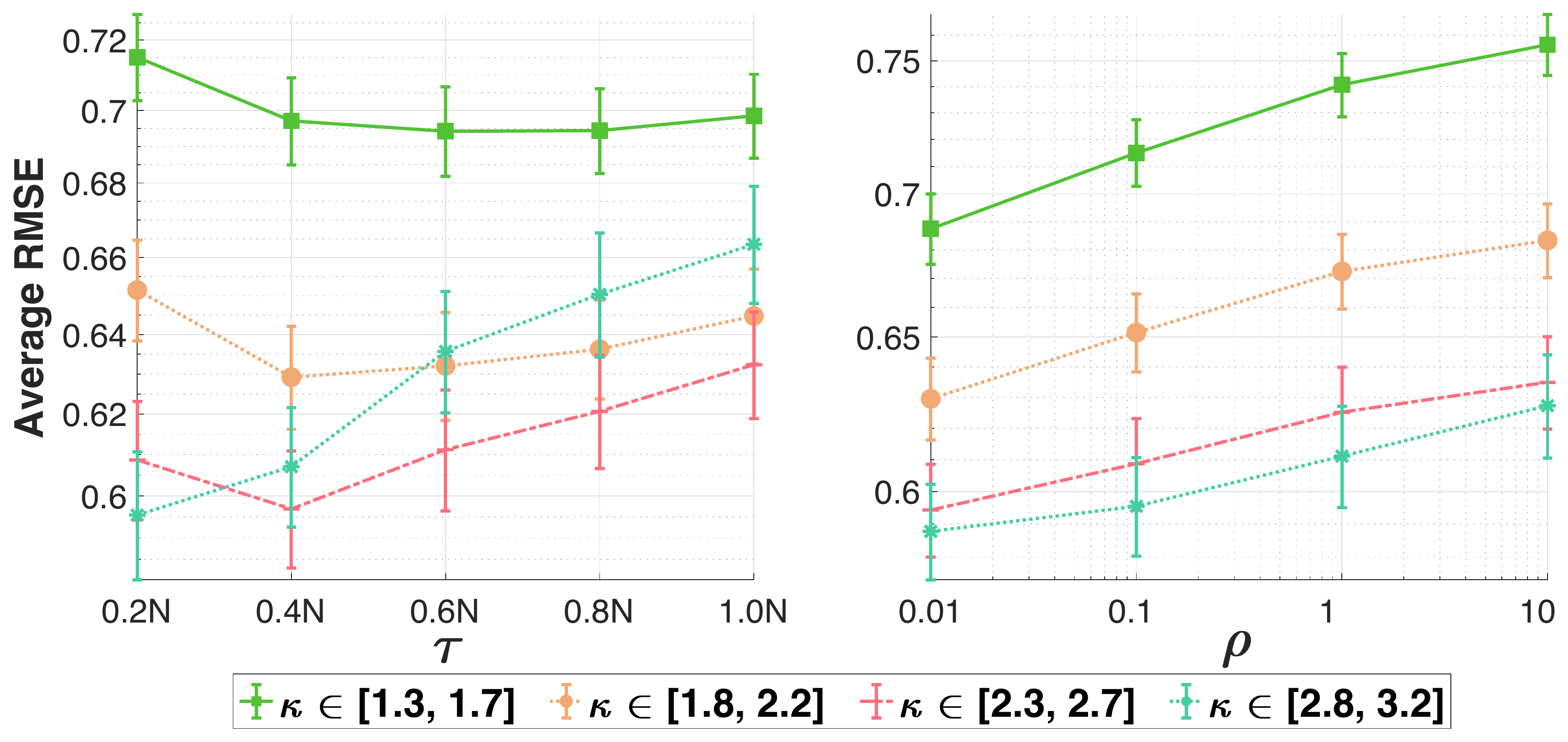}
  \end{center}
  \vspace{-12pt}
  \caption{Effects of perturbation intensity $\kappa$ for NW-BuresW estimate. Left plot: different perturbation~$\tau$, right plot: different uncertainty size~$\rho$.}
  \label{fg:rmse_Kappa_BuresW}
  \vspace{-10pt}
\end{figure}

\begin{figure}[H]
  \begin{center}
    \includegraphics[width=0.9\textwidth]{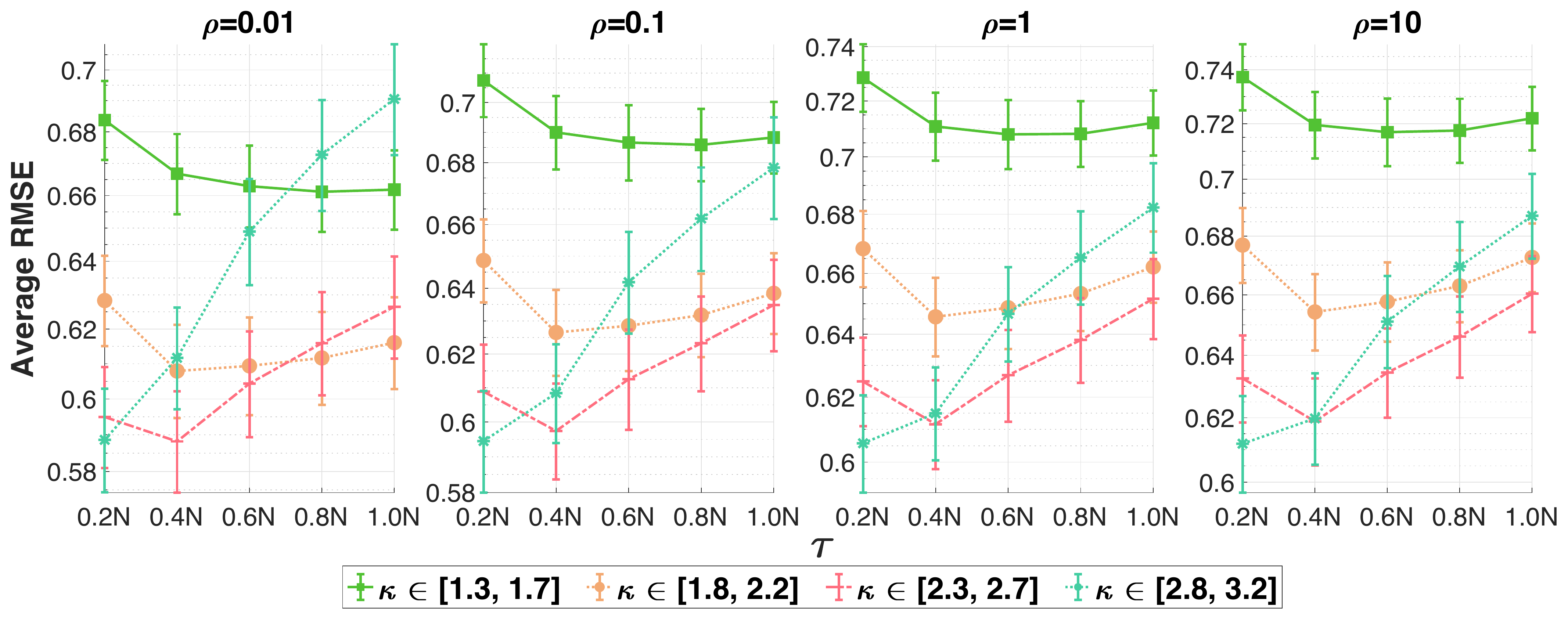}
  \end{center}
  \vspace{-12pt}
  \caption{Effects of perturbation intensity $\kappa$ for NW-LogDet estimate for different perturbation proportion~$\tau$.}
  \label{fg:rmse_Kappa_LogDet_Tau_ALLRHO}
  \vspace{-10pt}
\end{figure}

\begin{figure}[H]
  \begin{center}
    \includegraphics[width=0.65\textwidth]{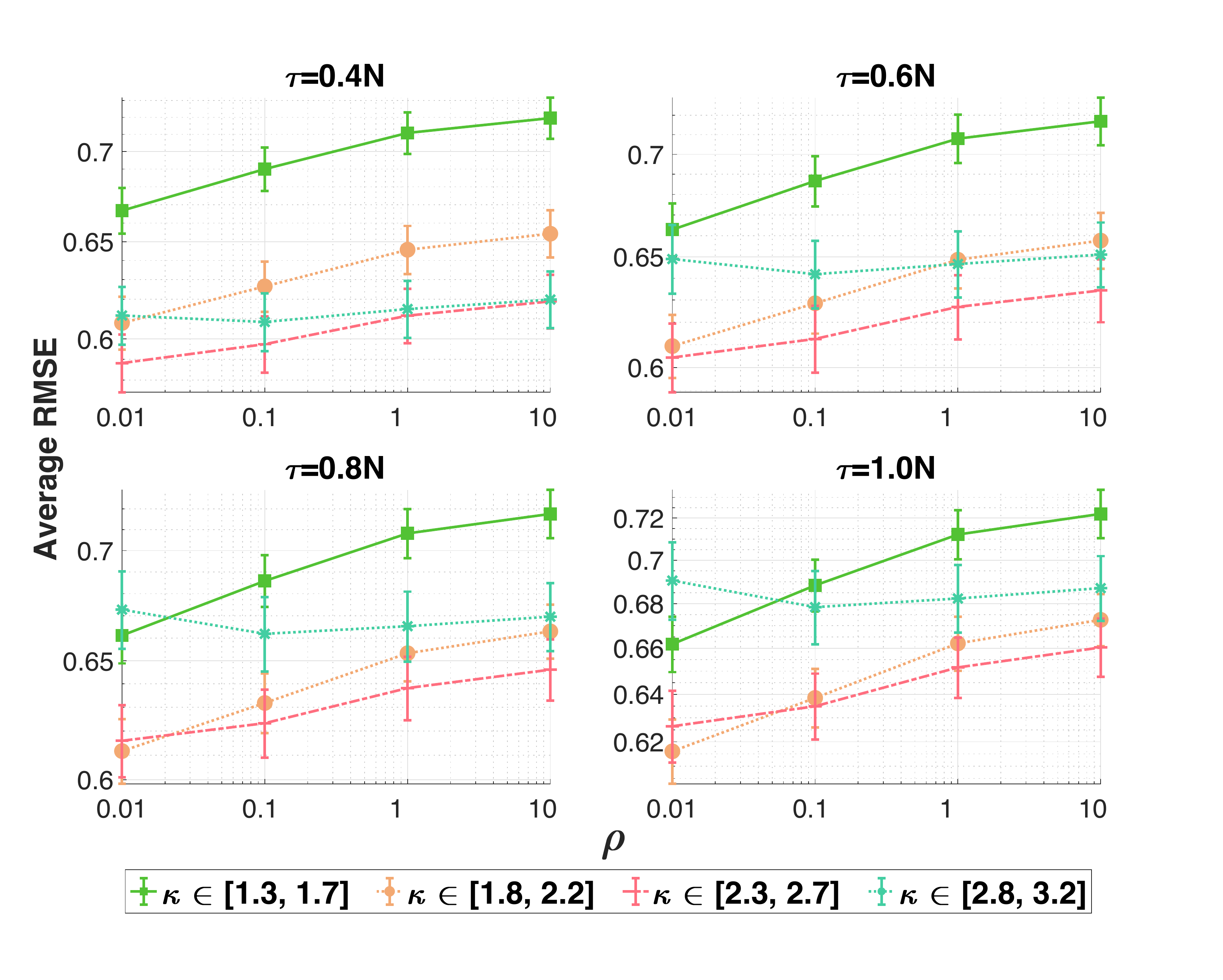}
  \end{center}
  \vspace{-12pt}
  \caption{Effects of perturbation intensity $\kappa$ for NW-LogDet estimate for different uncertainty size~$\rho$.}
  \label{fg:rmse_Kappa_LogDet_RHO_ALLTau}
  \vspace{-10pt}
\end{figure}


\begin{figure}[H]
  \begin{center}
    \includegraphics[width=0.9\textwidth]{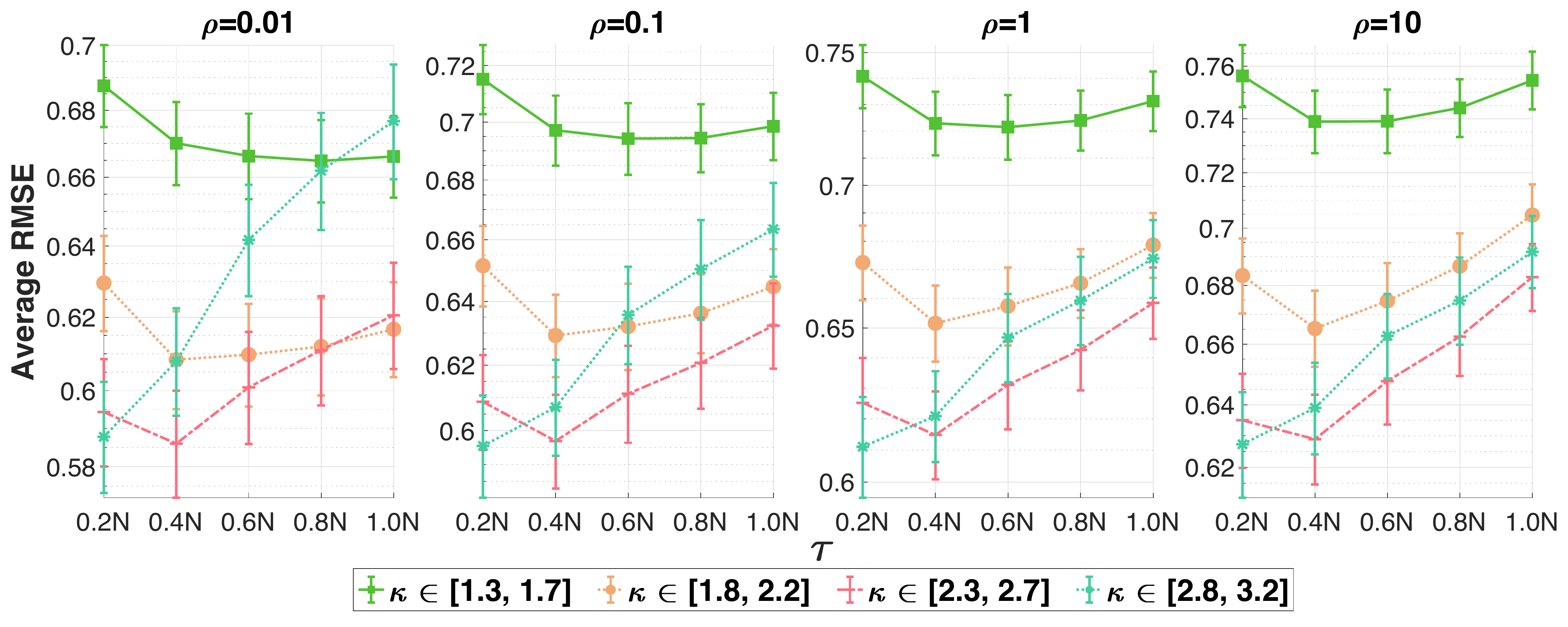}
  \end{center}
  \vspace{-12pt}
  \caption{Effects of perturbation intensity $\kappa$ for NW-BuresW estimate for different perturbation proportion~$\tau$.}
  \label{fg:rmse_Kappa_BuresW_Tau_ALLRHO}
  \vspace{-10pt}
\end{figure}

\begin{figure}[H]
  \begin{center}
    \includegraphics[width=0.65\textwidth]{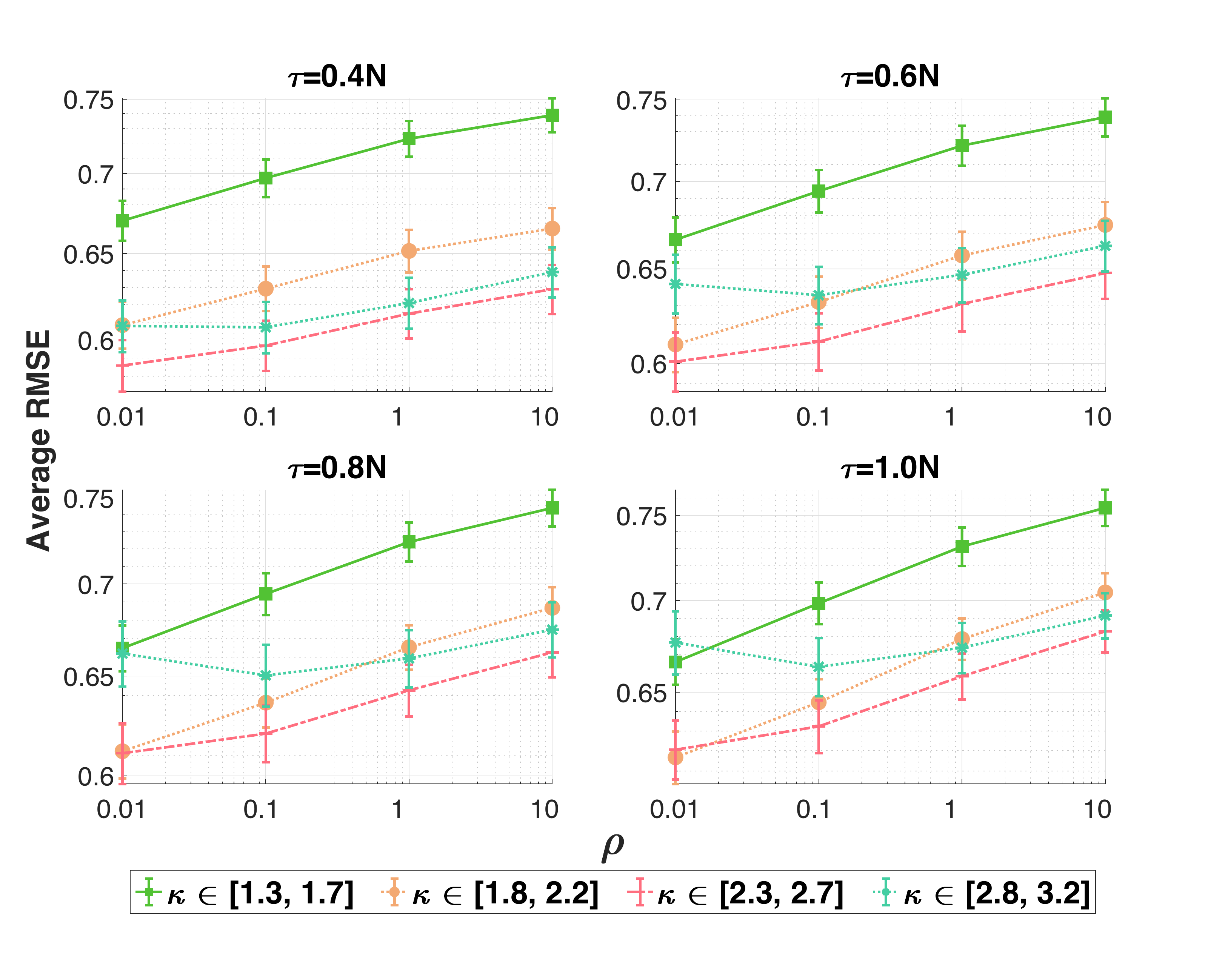}
  \end{center}
  \vspace{-12pt}
  \caption{Effects of perturbation intensity $\kappa$ for NW-BuresW estimate for different uncertainty size~$\rho$.}
  \label{fg:rmse_Kappa_BuresW_RHO_ALLTau}
  \vspace{-10pt}
\end{figure}


\paragraph{Further results for effects of $N$ on computational time.} We illustrate further effects of $N$ on computational time in \textit{all 8 datasets} (e.g., similar to Figure~\ref{fg:time} for the \texttt{KIN} dataset).

\begin{itemize}
    \item For NW-LogDet, we illustrate further effects of $N$ on computational time in all 8 datasets in  Figure~\ref{fg:time_LogDet}.
    
    \item For NW-BuresW, , we illustrate further effects of $N$ on computational time in all 8 datasets in  Figure~\ref{fg:time_BuresW}.
\end{itemize}

\begin{figure}[H]
 \vspace{-6pt}
  \begin{center}
    \includegraphics[width=0.9\textwidth]{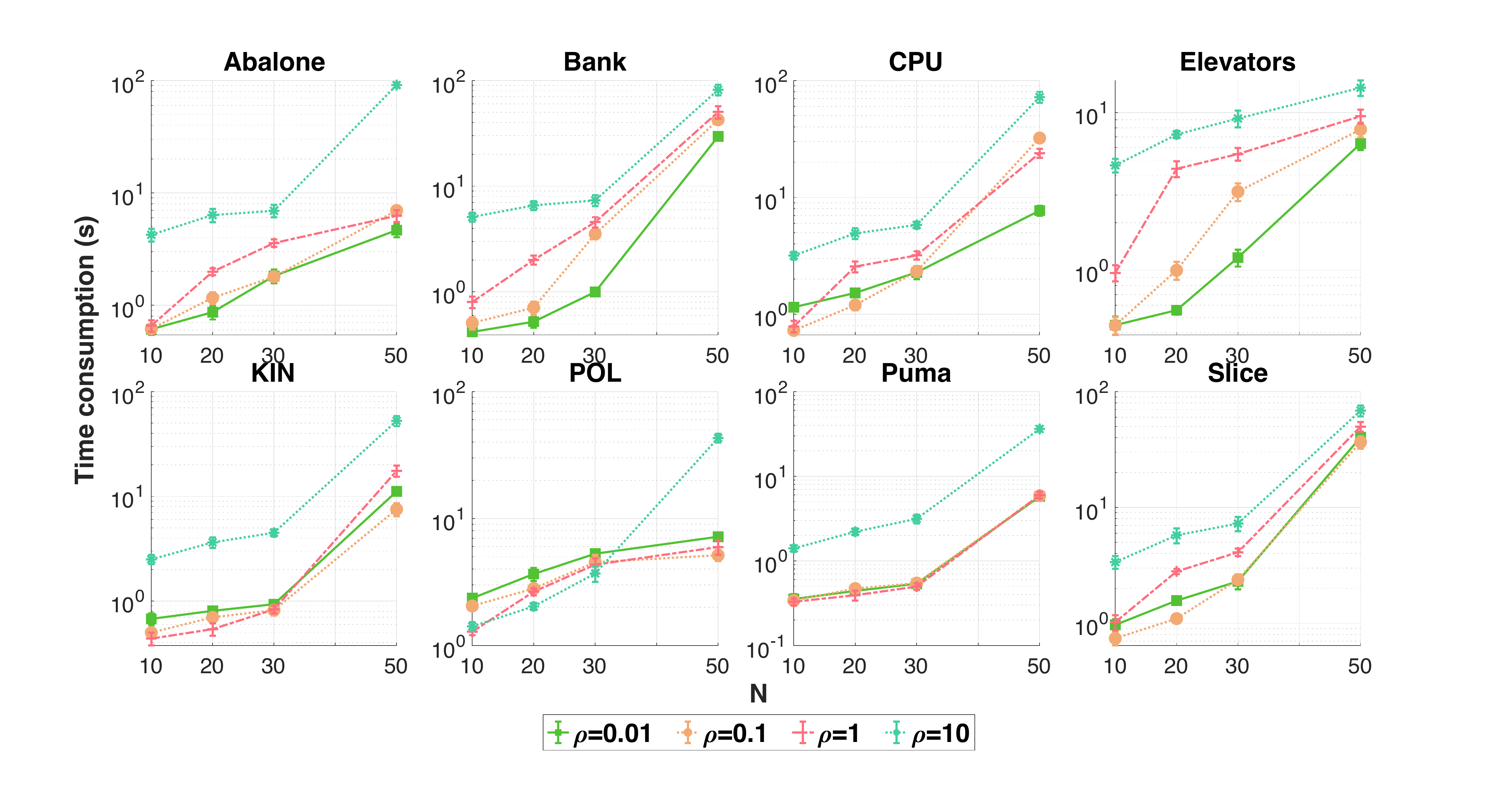}
  \end{center}
  \vspace{-12pt}
  \caption{Effects of $N$ on computational time for NW-LogDet.}
  \label{fg:time_LogDet}
  \vspace{-12pt}
\end{figure}

\begin{figure}[H]
 \vspace{-6pt}
  \begin{center}
    \includegraphics[width=0.9\textwidth]{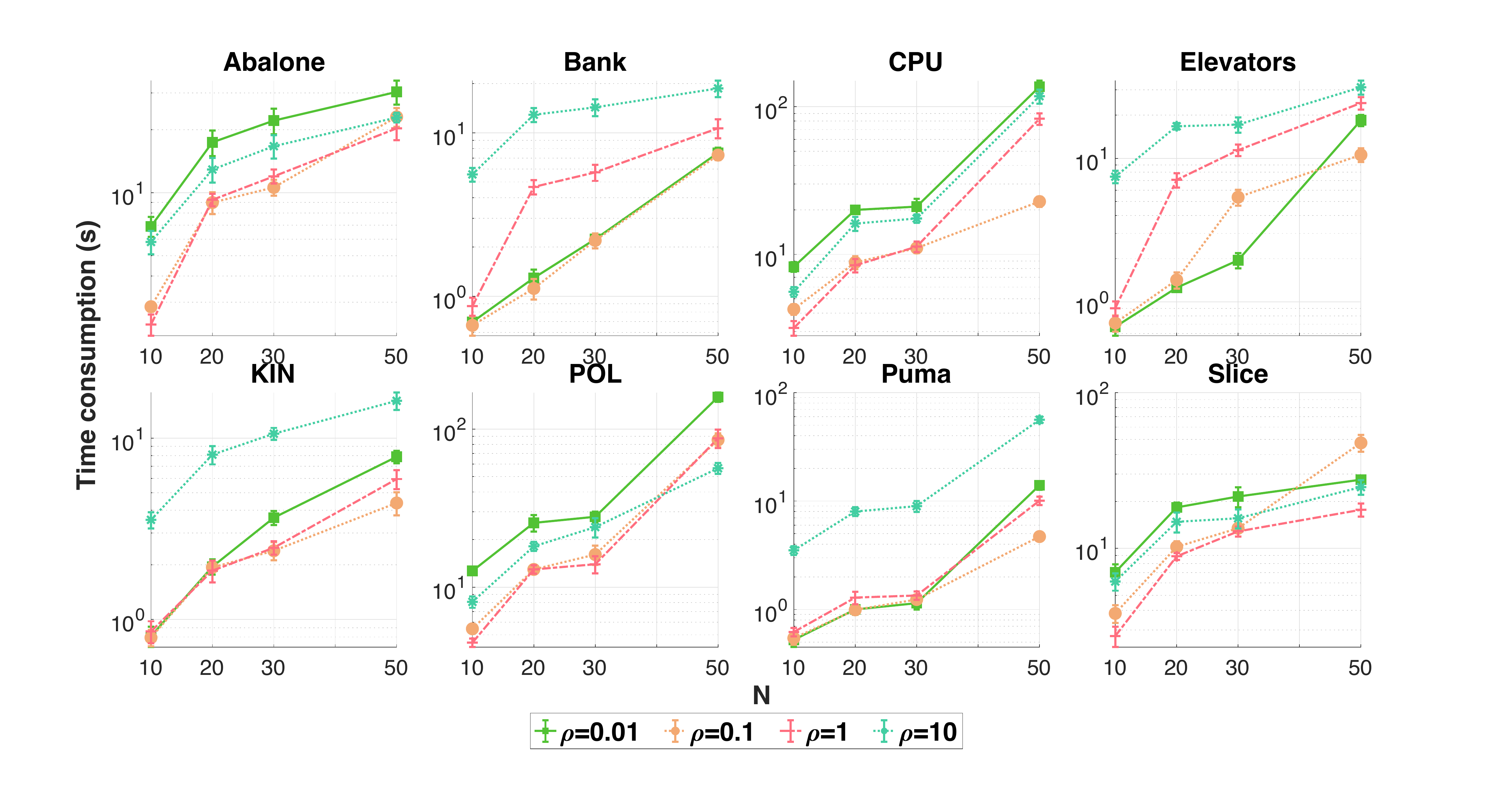}
  \end{center}
  \vspace{-12pt}
  \caption{Effects of $N$ on computational time for NW-BuresW.}
  \label{fg:time_BuresW}
  \vspace{-12pt}
\end{figure}

\end{document}